\documentclass{article} 
\PassOptionsToPackage{table}{xcolor} 
\usepackage{iclr2027_conference,times}

\usepackage{amsmath,amsfonts,bm}

\def\eqref#1{equation~\ref{#1}}

\def\1{\bm{1}}

\DeclareMathAlphabet{\mathsfit}{\encodingdefault}{\sfdefault}{m}{sl}
\SetMathAlphabet{\mathsfit}{bold}{\encodingdefault}{\sfdefault}{bx}{n}

\definecolor{citecolor}{HTML}{0071bc}
\definecolor{refcolor}{HTML}{750014}
\usepackage[pagebackref=false,breaklinks=true,colorlinks,citecolor=citecolor,linkcolor=refcolor,urlcolor=black,bookmarks=false]{hyperref}
\usepackage{url}
\usepackage{amsmath}
\usepackage{amssymb}
\usepackage{amsthm}
\newtheorem{proposition}{Proposition}
\newtheorem{corollary}{Corollary}
\newtheorem{lemma}{Lemma}
\usepackage{booktabs}
\usepackage{xcolor}
\usepackage{array}
\definecolor{oursgreen}{HTML}{D2EBD2}  
\definecolor{ourslight}{HTML}{E6F5E6}  
\definecolor{ourred}{HTML}{750014}
\usepackage{graphicx}
\makeatletter\@ifundefined{KV@Gin@alt}{\define@key{Gin}{alt}{}}{}\makeatother
\usepackage{tabularx}
\usepackage{subcaption}
\usepackage{multirow}
\usepackage{tikz}
\usepackage{float}
\usepackage{wrapfig}
\usepackage{enumitem}
\makeatletter
\g@addto@macro\fs@ruled{\def\@fs@post{\kern2pt\hrule\relax\vspace{-8pt}}}
\makeatother
\floatstyle{ruled}
\newfloat{algorithm}{tbp}{loa}
\floatname{algorithm}{Algorithm}
\newcommand{\ms}[2]{$#1_{\,\pm#2}$}
\usetikzlibrary{arrows.meta,positioning,calc,fit,decorations.pathreplacing}

\title{Test-Time Adaptation of Quantized ViTs via Single-Pass Quantizer-Aligned Recalibration}

\newcommand{\coderepo}{chahh9808/QuAR}
\newcommand{\codeurl}{https://github.com/\coderepo}
\iclrfinalcopy

\ifdefined\LaTeXML
\author{Hyeongheon Cha \\ KAIST \\ \texttt{hyeongheon@kaist.ac.kr}
\and Young D. Kwon \\ Samsung AI Center-Cambridge, University of Cambridge \\ \texttt{yd.kwon@samsung.com}
\and Sung-Ju Lee \\ KAIST \\ \texttt{profsj@kaist.ac.kr}}
\date{\href{\codeurl}{\includegraphics[height=1em,alt={GitHub}]{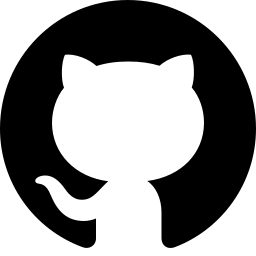}~\texttt{\coderepo}}}
\else
\author{Hyeongheon Cha\textsuperscript{1} \quad Young D. Kwon\textsuperscript{2}\textsuperscript{,}\textsuperscript{3} \quad Sung-Ju Lee\textsuperscript{1} \\
 \textsuperscript{1} KAIST \quad \textsuperscript{2} Samsung AI Center-Cambridge \quad \textsuperscript{3} University of Cambridge \\
 \texttt{\{hyeongheon,profsj\}@kaist.ac.kr} \quad \texttt{yd.kwon@samsung.com} \\
 \href{\codeurl}{\includegraphics[height=0.95em,alt={GitHub}]{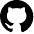}\,\texttt{\coderepo}}
 }
\fi

\newcommand{\method}{QuAR}
\newcommand{\ours}{Ours}
\newcommand{\cbar}{\ensuremath{\overline{\mathrm{C}}}}

\begin{document}

\maketitle
 \vspace{-17pt}
\begin{abstract}
 \vspace{-3pt}
Post-training quantization is a standard route to fitting vision transformers~(ViTs) into edge compute and memory budgets, yet quantized models become especially brittle under distribution shift. Test-time adaptation~(TTA) addresses such shifts without labels, but most existing approaches are poorly aligned with the constraints of quantized inference. Prevailing TTA methods recover accuracy through backpropagation, while backprop-free methods often still incur overhead from extra forward passes or parameter updates, and lightweight feature- or logit-level methods recover only part of the loss. 
Across these approaches, a quantization-specific failure mode that amplifies the drop is not directly targeted: under shift, activations occupy frozen quantizers' calibrated ranges differently, distorting their \emph{code distribution}. We propose \emph{Quantizer-Aligned Recalibration} (\method{}), a \emph{single-pass} TTA method tailored to quantized ViTs that neither backpropagates nor updates any model parameters. \method{} recalibrates activations at the input to a frozen quantizer, mapping the test stream's running per-channel statistics back toward the source calibration. On ImageNet-C with ViT-B, \method{} achieves the highest mean accuracy among state-of-the-art backprop-free TTA methods at 3-, 4-, 6- and 8-bit weight/activation precision, outperforming the strongest baseline by $2.28$ points at 8 bits and $4.00$ at 3 bits, with $46\%$ lower latency and a memory overhead of only $0.17$\,MB ($0.01\%$ of peak inference memory). Analysis and diagnostics trace the gain to a reduced per-channel mismatch at these quantizers, which restores the code distribution the baselines leave unchanged or distort further. A single fixed configuration remains ahead across continual streams, non-i.i.d.\ label shift, seven out-of-distribution suites, and three other backbones. 
 \vspace{-5pt}
\end{abstract}

\section{Introduction}
\label{sec:intro}
Vision transformers (ViTs) \citep{dosovitskiy2021image} are the backbone of many vision systems, but their compute and memory cost makes them impractical to run at full precision on edge devices. Post-training quantization (PTQ) \citep{yuan2022ptq4vit,wu2024adalog} is a standard remedy. From a  label-free calibration set it fixes weights and activation quantizers to low bit-widths with no retraining, so inference can better fit edge budgets. The deployed quantized model pays for that efficiency with rigidity, its quantizers frozen and its activation ranges tuned to the calibration distribution.

This rigidity becomes a liability at test time. When the input distribution shifts under corruption or domain change, the calibrated activation ranges no longer match it, causing a substantial accuracy drop. The drop is amplified by quantization, $1.7\times$ at W3A3 (Appendix~\ref{app:analysis}), leaving the unadapted model at $28.4$ top-1 where full precision holds $55.6$ (Figure~\ref{fig:teaser}a; W$n$A$m$ denotes $n$-bit weights and $m$-bit activations). The cause is concrete. Under shift, the per-channel activation statistics entering the quantizers drift from their calibration values (Figure~\ref{fig:problem}a). Those quantizers are frozen and per-tensor, so they cannot follow that drift. They map the activations onto a part of their range they were never calibrated for, distorting their \emph{code distribution}: the discrete levels the activations are mapped to. These per-interface distortions accumulate along the frozen path to the logits, where they meet a decision margin the shift has already narrowed. The quantization error does not grow under shift, but the same error now flips more labels. It flips more of them the coarser the step, which is why quantization amplifies the drop (Section~\ref{sec:why}). A fifth of the channels at those quantizers forfeit at least half a bit of code resolution to that drift (Figure~\ref{fig:problem}b). Recovering accuracy therefore requires correcting this intermediate, quantizer-level mismatch, not only the final prediction.

\begin{figure}[t]
\centering
\vspace{-20pt}
\includegraphics[width=0.78\textwidth]{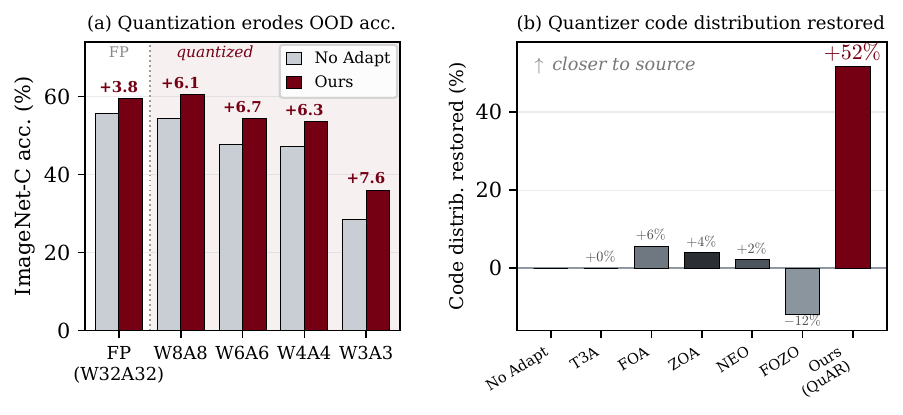}
\vspace{-15pt}
\caption{\emph{Quantization makes activation drift costlier; \method{} acts on the quantization path to undo it.} \emph{(a)} ViT-B accuracy on ImageNet-C by precision: unadapted accuracy falls from $55.6$ to $28.4$ at W3A3, while \method{}'s gain grows as precision drops, from $+3.8$ to over $+6$ at every quantized setting. \emph{(b)} At the recalibrated quantizer inputs (W6A6), corruption drifts the code distribution off its source calibration; bars show the fraction removed (Wasserstein-$1$ between code histograms, Appendix~\ref{app:analysis}). Only \method{} recovers much of that drift, ${\sim}9\times$ the best baseline.
}
\label{fig:teaser}
\vspace{-10pt}
\end{figure}

Test-time adaptation (TTA) addresses distribution shift without labels \citep{wang2021tent,wang2022continual,niu2023towards,iwasawa2021test}, and a recent line of work targets the quantized setting specifically \citep{deng2025test}. These methods recover accuracy, but quantized deployment asks for two things they do not provide. First is the cost. Entropy- or discrepancy-minimizing methods backpropagate through the network and update some or all of its parameters, while FOA \citep{niu2024test}, ZOA \citep{deng2025test}, and FOZO \citep{fozo2026} avoid backprop but still require extra forward passes per batch and update parameters at test time; each extra forward pass through a quantized graph and each model parameter update is what quantized deployment is designed to avoid. Second is placement. These methods act on either side of the quantizer inputs rather than at them, downstream at the feature or logit level and upstream on prompts or model parameters. We confirm this directly: the compared methods leave that code distribution unchanged or move it further from the source, whereas only ours recovers much of that drift (Figure~\ref{fig:teaser}b). 

We argue that for quantized ViTs the right intervention is not to re-optimize the model but to \emph{recalibrate its activations} back toward the statistics the quantizers were calibrated on. We introduce \emph{Quantizer-Aligned Recalibration} (\method{}), a \emph{single-pass}, \emph{quantization-native} TTA method built from parameter-free affine recalibrations, each applied at the input to a frozen quantizer. The local branch, \method{}'s quantizer-level core, acts at every intermediate activation interface in the late blocks, mapping the running per-channel mean and standard deviation of the test stream back toward the source calibration statistics. The global branch, a complementary correction, applies the same recalibration to the CLS representation at the head input. Both branches use source-initialized running target moments, perform no backpropagation, update no model parameters, and store no past samples; the method adds a single multiply--add per recalibrated activation inside one forward pass.

On ImageNet-C with quantized ViT-B, \method{} attains the highest mean accuracy among state-of-the-art backprop-free TTA methods at every evaluated precision, W3A3, W4A4, W6A6 and W8A8, ahead of the strongest baseline by $+2.28$ at W8A8 and $+4.00$ at W3A3, at one forward pass and no parameter updates, $46\%$ less latency than ZOA and FOZO (Section~\ref{sec:main}). One fixed configuration keeps that lead on long-term continual streams, non-i.i.d.\ online label shift, seven further out-of-distribution benchmarks (ImageNet-R/Sketch/\cbar{}/3DCC, CIFAR-10/100-C, and DomainNet-126), and three other backbones, the transformers DeiT-B and ViT-S, and a convolutional ResNet-50 (Sections~\ref{sec:continual} and~\ref{sec:generalization}). \method{} is also the most accurate method that fits an $8$\,GB Jetson Orin Nano \citep{jetson_orin_nano} at batch $64$, where the backprop-based methods run out of memory (Section~\ref{sec:backprop}).

We make the following contributions:
\vspace{-5pt}
\begin{itemize}[leftmargin=0.7cm]
\item We propose \method{}, a \emph{single-pass} TTA method tailored to \emph{quantized} ViTs that recalibrates activations at frozen quantizer inputs, targeting the distortion of their \emph{code distribution}, a quantization-specific failure mode that existing TTA does not directly address.
\item We trace the gain to a per-channel mismatch at the quantizer inputs, using a mismatch measurement, a precision sweep and a placement ablation.
\item We give a first-order account linking activation-moment drift, frozen-grid code resolution, and margin-sensitive prediction changes, and measure the quantities the account depends on.
\end{itemize}

\begin{figure}[t]
\vspace{-10pt}
\centering
\includegraphics[width=0.8\textwidth]{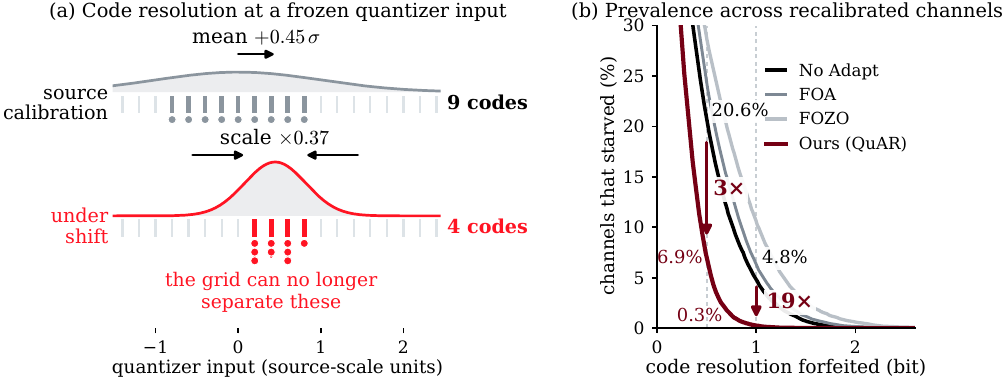}
\vspace{-5pt}
\caption{\emph{Code resolution lost under shift} (W6A6 ViT-B, ImageNet-C sev.\ 5). \emph{(a)} Both rows carry the same nine positions, one grid step apart, against the same frozen grid: the shift moves and narrows it, so distinct values collapse onto one code (Proposition~\ref{prop:levels}), and the same contraction would be free in full precision, where resolution is relative rather than absolute. Both magnitudes are measured $99$th percentiles of their per-channel distributions. \emph{(b)} Fraction of all $21{,}504$ channels at the frozen quantizers forfeiting at least a given resolution; other baselines in Appendix~\ref{app:theory}.
}
\vspace{-10pt}
\label{fig:problem}
\end{figure}

\section{Related Work}
\label{sec:related}

\paragraph{Quantized vision transformers.}
Post-training quantization (PTQ) compresses a trained ViT to low-bit weights and activations from a small calibration set, with no retraining. PTQ4ViT \citep{yuan2022ptq4vit} introduces twin uniform quantizers for the asymmetric post-Softmax and post-GELU activations, and AdaLog \citep{wu2024adalog} uses an adaptive logarithmic quantizer for those two activations at very low bit-widths, while the remaining linear-layer activations stay uniformly quantized. In either case each activation quantizer is \emph{frozen} after calibration, its per-tensor range tuned to the \emph{source} activation distribution. A test-time shift makes the per-channel activations occupy that calibrated range differently and distorts the quantizer's code distribution, a quantization-specific failure that feature- or logit-level adaptation does not directly address. We take such a PTQ-quantized ViT as the fixed deployment artifact and adapt only its activation statistics at test time; the quantizers and weights are never modified.

\paragraph{Test-time adaptation.}
Fully test-time adaptation updates a source model online from unlabeled test data. TENT \citep{wang2021tent} minimizes prediction entropy through normalization-layer affine parameters, and CoTTA \citep{wang2022continual}, EATA \citep{niu2022efficient}, and SAR \citep{niu2023towards} stabilize this for continual and wild streams; T3A \citep{iwasawa2021test} is backprop-free, adjusting a non-parametric classifier from an online support set. A related line aligns intermediate activations or features to source statistics. ActMAD \citep{mirza2023actmad} is a test-time training method that matches per-channel, location-aware activation means and variances across several convolutional layers by backpropagating the matching loss and updating all parameters; TTT++ \citep{liu2021ttt} aligns feature moments through an auxiliary self-supervised objective; and LAME \citep{boudiaf2022parameter} adjusts output label assignments rather than activations. PEA \citep{pea2026} aligns each transformer block's output with a full-covariance whitening--coloring transform, which needs a second forward pass, more tokens than channels to estimate, and a dense full-precision matrix per block, recomputed each batch. These methods are full-precision and mostly rely on backpropagation and parameter updates that are costly or ill-defined once the model is quantized. 
The recent Adaptive Quantile Recalibration~\citep{aqr2026} 
also recalibrates internal pre-activations in full precision, aligning channel-wise distributions with quantile transforms; its published ViT setting uses large batches, and we compare against that paper-default configuration
in Appendix~\ref{app:aqr}. \method{} instead matches per-channel mean and variance in \emph{closed form} at the \emph{frozen} quantizer of a low-bit ViT and at small batch sizes: a single-forward, parameter-update-free shift toward source statistics, requiring none of the gradients or full-network updates that make such test-time training infeasible on a quantized model. The Local-only ablation (Table~\ref{tab:main}) is itself a closed-form source-statistic activation-matching baseline at these interfaces.

\paragraph{Backprop-free TTA for quantized models.}
To avoid backpropagation, FOA \citep{niu2024test} optimizes a few input prompts with CMA-ES and additionally shifts activations toward source statistics, using two forward passes per step. FOZO \citep{fozo2026} estimates descent directions with SPSA-style perturbations. For the quantized setting, ZOA \citep{deng2025test} adapts the quantized model's LayerNorm affine parameters with zeroth-order updates against a feature-discrepancy loss, leaving the quantizers themselves untouched, and NEO \citep{neo2025} recenters features by a cumulative mean at the classifier, which is backprop-free and single-forward. 
FORGE \citep{forge2026} re-normalizes BN-folded convolution outputs 
in integer-only CNNs for microcontrollers, 
using the statistics associated with the folded BN affine. This addresses a different architecture and intervention site from ours: \method{} targets activation-quantizer inputs in LayerNorm-based PTQ ViTs and uses partial local matching where full matching over-corrects.
The affine form of our correction is that of normalization-statistics adaptation \citep{schneider2020improving,mirza2023actmad}; the difference is the site. Applied where those methods apply it, at the block outputs, the same affine recovers only about half the gain at every bit-width, whereas at the quantizer inputs it restores the code distribution and adds $1.8$ to $3.1$ points (Appendix~\ref{app:analysis}). These methods avoid backpropagation, but their adaptation cost is often several times a plain forward pass, and none is aware of the frozen quantizer's code distribution. Our method is backprop-free and single-forward, and recalibrates the activations \emph{entering each quantizer} in closed form against frozen per-channel \emph{source} statistics, adding one multiply and one add per recalibrated activation: the local branch at the intermediate block quantizers and the global branch at the input quantizer of the classification head.

\section{Method}
\label{sec:method}
\vspace{-3pt}
\subsection{Preliminaries}
Let $f_\theta$ be a ViT quantized by PTQ \citep{yuan2022ptq4vit,wu2024adalog}, with its weights and per-tensor activation quantizers fixed from a small calibration set drawn from the clean source distribution. Write a quantized activation as $Q(x) = s \cdot \big(\mathrm{clip}(\lfloor x/s \rceil + z,\, q_{\min}, q_{\max}) - z\big)$, where the scale $s$ and zero-point $z$ are frozen at calibration time. The clean source distribution the quantizers were calibrated on induces, at each activation interface $\ell$, a per-channel \emph{source} mean and standard deviation $(\boldsymbol\mu_s^\ell, \boldsymbol\sigma_s^\ell)$. At test time the inputs are shifted, so the per-channel statistics of the activations entering each quantizer drift away from $(\boldsymbol\mu_s^\ell,\boldsymbol\sigma_s^\ell)$; the frozen $(s,z)$ then round a mismatched range, degrading accuracy. Because the activation quantizers are per-tensor, a per-channel range drift is invisible to the single $(s,z)$ pair yet still distorts the rounded values. An upstream LayerNorm does not control this drift either. LayerNorm normalizes each token across channels, whereas the drift that mis-ranges the quantizer is per channel across tokens, so the two act along different axes. Not every quantized interface sits directly after a normalization, with the scale drift concentrated at those without one and the mean drift everywhere (Appendix~\ref{app:theory}). Restoring the source per-channel moments before quantization restores the quantizer toward its source-calibrated operating point, its code distribution; Lemma~\ref{prop:qerr} makes this precise. Our goal is to undo this drift \emph{at the activation level}, online, without touching $\theta$, $s$, or $z$, and without extra forward passes.

\vspace{-3pt}
\subsection{Recalibration at the quantizer inputs}
\method{} corrects the moment drift that displaces the frozen quantizer's code distribution, with a single per-channel affine map on the activations entering that quantizer. A local branch applies it inside the late transformer blocks, and a global branch at the classification head's input.

\begin{figure}[t]
\vspace{-25pt}
\centering
\includegraphics[width=\textwidth]{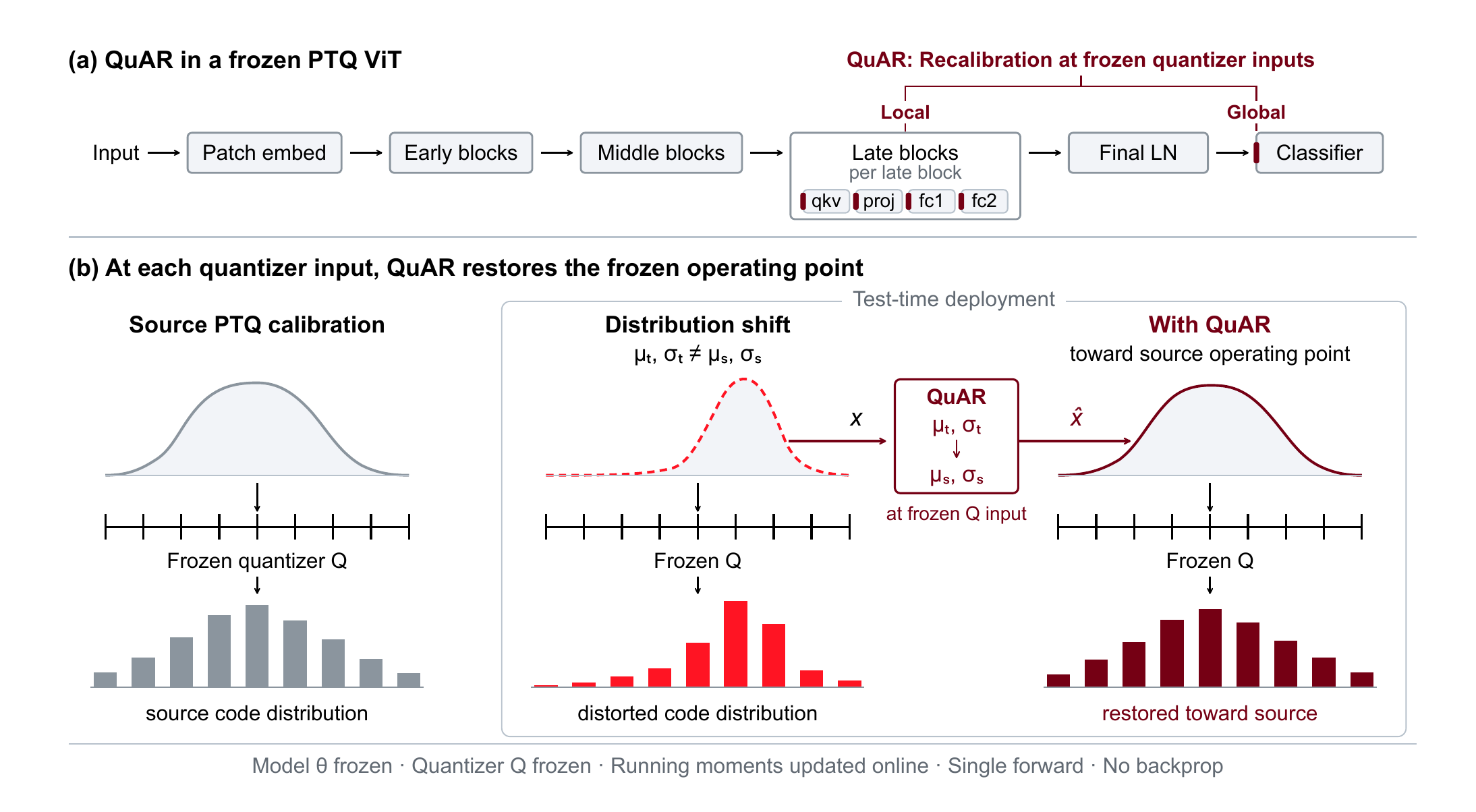}
\vspace{-25pt}
\caption{\textbf{Quantizer-Aligned Recalibration (\method{}).}
\textbf{(a)} \method{} recalibrates frozen quantizer inputs locally in the late ViT blocks and globally at the classifier input.
\textbf{(b)} Under distribution shift, the running target moments $(\mu_t,\sigma_t)$ drift from the source calibration $(\mu_s,\sigma_s)$, distorting the frozen quantizer's code distribution. At each recalibrated interface, \method{} maps the activation toward the source operating point before quantization, restoring code usage toward the source.}
\vspace{-10pt}
\label{fig:method}
\end{figure}

\vspace{-5pt}
\paragraph{Local branch.}
\method{}'s quantizer-level core (Figure~\ref{fig:method}) recalibrates at the input to \emph{every} quantized linear of the last four transformer blocks (8--11), not a chosen subset of them: the set $\mathcal{L}$ of \texttt{attn.qkv}, \texttt{attn.proj}, \texttt{mlp.fc1} and \texttt{mlp.fc2} inputs. The late blocks leave little of the frozen path to compound the correction, which the accuracy bound of Section~\ref{sec:why} makes precise; within that region accuracy is robust to the exact choice (Section~\ref{sec:scope}).

At interface $\ell$, let $x \in \mathbb{R}^{(B\cdot N)\times C}$ be the activation about to enter the quantizer ($B$ batch, $N$ tokens, $C$ channels). Using the source calibration moments $(\boldsymbol\mu_s^\ell,\boldsymbol\sigma_s^\ell)$ and running target moments $(\boldsymbol\mu_t^\ell,\boldsymbol\sigma_t^\ell)$ estimated from the stream, we apply a per-channel affine recalibration before quantization:
\begin{equation}
\label{eq:local}
\hat{x}_{:,c} \;=\; (1-\lambda)\,x_{:,c} \;+\; \lambda\Big[\big(x_{:,c} - \mu_{t,c}^\ell\big)\,
\frac{\sigma_{s,c}^\ell}{\sigma_{t,c}^\ell + \epsilon} \;+\; \mu_{s,c}^\ell\Big],
\qquad c = 1,\dots,C,
\end{equation}
and feed $Q(\hat{x})$ to the frozen quantizer; the variance ratio $\sigma_{s,c}^\ell/(\sigma_{t,c}^\ell+\epsilon)$ is clamped to $[1/\rho,\rho]$ (we use $\rho{=}3$), which keeps a near-zero $\sigma_{t,c}$ on a collapsed channel from inflating the correction 
and bounds the applied affine gain
over arbitrarily long continual streams (Proposition~\ref{prop:drift}, Appendix~\ref{app:theory}). Equation~\ref{eq:local} forms a convex combination of the activation and its fully recalibrated version by a weight $\lambda \in [0,1]$: at $\lambda=1$ it is the full recalibration, recentering to the source mean and rescaling to the source per-channel scale, and at $\lambda=0$ the identity. We use $\lambda=0.5$ in the local branch: a \emph{partial}, or soft, recalibration that applies a fraction $\lambda$ of the full correction (a matter of \emph{strength}, not of which interfaces are corrected). The full recalibration ($\lambda=1$) tends to over-correct the per-channel scale, most visibly on the smaller ViT-S and distilled DeiT-B, and $0.5$ sits at the conservative end of a flat plateau; it is selected once and then fixed for every reported result (Section~\ref{sec:hpsens}). A partial match is also the \emph{bias--variance optimum} once the target moments are estimated from finite samples: the noisier the per-channel scale estimate, the more the optimal strength shrinks toward the identity (Proposition~\ref{prop:shrink}). The recalibration is implemented as a forward pre-hook and adds no forward.

\vspace{-5pt}
\paragraph{Global branch.}
A complementary correction applies the same affine recalibration of Equation~\ref{eq:local} to the final CLS representation $h \in \mathbb{R}^{C}$ at the input to the classification head, using CLS-specific source and running target moments, but at \emph{full} strength. It can match fully, where the local branch matches only partially, for a positional reason. The CLS representation is \emph{terminal}, consumed only by the frozen, source-trained head, so a full match aligns the head input with the source feature distribution it was trained on, and has nothing downstream to distort. A local interface instead feeds the rest of the network, where forcing activations all the way to source risks over-writing test-distribution structure that downstream blocks may use, so a partial match restores the quantizer's calibrated range while retaining more of that structure.  The full match corrects the residual shift at the head input and, like the local branch, does so in closed form against source statistics.

\vspace{-3pt}
\subsection{Running target moments}
\label{sec:moments}
\vspace{-3pt}
The target moments are estimated online and \emph{source-initialized}: $\boldsymbol\mu_t^{\ell,(0)} = \boldsymbol\mu_s^\ell$, $\boldsymbol\sigma_t^{\ell,(0)} = \boldsymbol\sigma_s^\ell$. For each batch, we recalibrate with the current $(\boldsymbol\mu_t,\boldsymbol\sigma_t)$ (Equation~\ref{eq:local}) and update them with a running average whose forgetting rate $\alpha_t$ sets how quickly older batches are down-weighted:
\begin{equation}
\label{eq:ema}
\boldsymbol\mu_t^\ell \leftarrow (1-\alpha_t)\,\boldsymbol\mu_t^\ell + \alpha_t\,
\widehat{\boldsymbol\mu}(x),\qquad
\boldsymbol\sigma_t^\ell \leftarrow (1-\alpha_t)\,\boldsymbol\sigma_t^\ell + \alpha_t\,
\widehat{\boldsymbol\sigma}(x),
\end{equation}
where $\widehat{\boldsymbol\mu},\widehat{\boldsymbol\sigma}$ are the per-channel batch moments. By default we use a forgetting-free \emph{cumulative} estimate, $\alpha_t = 1/(n_\ell+1)$ over the $n_\ell$ batches seen, the source moments counting as one pseudo-batch, so all test batches weigh equally; moments from one batch alone are noisier and class-dependent, while moments converged over an extra pass change little (Appendix~\ref{app:sens}; Table~\ref{tab:pooling}). The positional argument behind the full match also sets when each branch updates. The local branch, whose output feeds the rest of the network, corrects before it updates, so it uses only the preceding batches and never normalizes a batch by its own statistics; the global branch, where a moment error has nothing downstream to compound it, updates first and so adapts from the first batch of a domain. Accuracy on the first five batches is already within about one point of the full-stream value, and using one rule for both branches costs accuracy either on the first batch or under label shift (Appendix~\ref{app:sens}; Table~\ref{tab:branchlag}). For continually evolving domains (Section~\ref{sec:continual}), where the target moments drift, we instead use a fixed forgetting rate that tracks them (an exponential moving average, $\alpha_t = \alpha = 0.1$). Neither rate is tuned per setting. Algorithm~\ref{alg:lgar} summarizes the procedure.

\begin{algorithm}[t]
\caption{\method{} inference for one test batch.}
\label{alg:lgar}
{\small
\begin{flushleft}
\textbf{Require:} frozen quantized model; source moments $(\boldsymbol\mu_s^\ell,\boldsymbol\sigma_s^\ell)$ from clean source data, $\ell\in\mathcal{L}\cup\{\mathrm{cls}\}$; forgetting rate $\alpha_t$; variance-match strength $\lambda$; stabilizer $\epsilon$.\\[1pt]
\textbf{for} each test batch (one forward pass) \textbf{do}\\
\hspace{1.2em}\textbf{for} each local interface $\ell\in\mathcal{L}$ (pre-quant hook) \textbf{do}\\
\hspace{2.4em}$\hat{x}\!\leftarrow\!(1{-}\lambda)\,x+\lambda\big[(x-\boldsymbol\mu_t^\ell)\odot\boldsymbol\sigma_s^\ell\oslash(\boldsymbol\sigma_t^\ell+\epsilon)+\boldsymbol\mu_s^\ell\big]$; feed $Q(\hat{x})$\quad$\triangleright$ partial match, strength $\lambda$\\
\hspace{2.4em}$\boldsymbol\mu_t^\ell\!\leftarrow\!(1{-}\alpha_t)\boldsymbol\mu_t^\ell+\alpha_t\widehat{\boldsymbol\mu}(x)$;\ $\boldsymbol\sigma_t^\ell\!\leftarrow\!(1{-}\alpha_t)\boldsymbol\sigma_t^\ell+\alpha_t\widehat{\boldsymbol\sigma}(x)$\quad$\triangleright$ running update (cumulative $\alpha_t{=}1/(n{+}1)$)\\
\hspace{1.2em}\textbf{end for}\\
\hspace{1.2em}update $(\boldsymbol\mu_t^{\mathrm{cls}},\boldsymbol\sigma_t^{\mathrm{cls}})$ likewise, \emph{then} $\hat{h}\!\leftarrow\!(h-\boldsymbol\mu_t^{\mathrm{cls}})\oslash(\boldsymbol\sigma_t^{\mathrm{cls}}+\epsilon)\odot\boldsymbol\sigma_s^{\mathrm{cls}}+\boldsymbol\mu_s^{\mathrm{cls}}$\\
\hspace{1.2em}\textbf{predict} $\arg\max\,\mathrm{head}(\hat{h})$
\end{flushleft}}
\end{algorithm}

\subsection{Test-time cost}
\label{sec:cost}
\vspace{-3pt}
\method{} costs a \textbf{single forward pass} per batch, \textbf{zero test-time-updated model or quantizer parameters}, and \textbf{no stored past samples}. Its only state is about $170$\,KB (0.01\% of peak memory) of running target moments (Appendix~\ref{app:cost}); it is backprop-free and adds a multiply--add per recalibrated activation. One forward pass suffices because each correction depends only on what passes through its own interface. The target is fixed, since calibration gives the source moments, and the input-side statistics are the running moments of the very activations the hook sees. The hook therefore corrects the activation and updates the moments inside the ordinary forward, at any depth, with nothing to evaluate at the output and no search direction. This lets \method{} act before the intermediate quantizers in a single pass. FOA, ZOA and FOZO instead adjust a few parameters against a loss that requires running the network on the candidates, and their derivative-free estimates take two such runs in the cheapest configuration each publishes; NEO and T3A are single-forward because they too are closed-form, but act only at the head. The source statistics $(\boldsymbol\mu_s, \boldsymbol\sigma_s)$ add nothing at test time. A forward hook accumulates them once, offline, over unlabeled clean source images, and they ship with the model. PTQ calibration already presumes that access (Appendix~\ref{app:repro}) and the optimization-based baselines need it for their source statistics; no target data, labels, or gradients are involved.

\subsection{Analysis of the quantization-specific loss}
\label{sec:why}
We analyze, to first order, 
how recalibration reduces the quantization-specific component of the shift-induced drop, through a simplified account of moment drift, frozen-grid code occupancy, and margin-sensitive logit perturbations, assuming the per-channel moments capture the shift 
(\textbf{(A1)}: the standardized activation $(x_c-\mu_{\cdot,c})/\sigma_{\cdot,c}$ keeps an approximately shift-invariant, sub-Gaussian shape) and a source-calibrated uniform affine quantizer (\textbf{(A2)}); the remaining assumptions, formal statements and proofs, and the logarithmic-quantizer extension are in Appendix~\ref{app:theory}. Under them, matching an activation's per-channel moments to the source values returns it to the quantizer's calibrated range and code occupancy, whereas a moment mismatch over- or under-fills that range (Lemma~\ref{prop:qerr}).

Propagating the quantization error $e_{\ell,c}=Q(h_{\ell,c})-h_{\ell,c}$ to the output links this to accuracy. With $K_{>\ell}$ the product of the per-block operator norms after interface $\ell$, the logit perturbation relative to the unquantized network obeys $\|\hat z-z\|\le\sum_\ell K_{>\ell}\,\|e_\ell\|$, and each $\mathbb{E}\,e_{\ell,c}^2$ splits into a granular term $\Delta_{\ell,c}^2/12$, fixed by the frozen step and hence \emph{invariant} to recalibration, and a clipping term that grows once the activation leaves the calibrated range (Proposition~\ref{prop:acc}).
\vspace{-5pt}
\paragraph{Why a displaced operating point costs accuracy.} Because the granular term is set by the frozen step and clipping barely changes at the tested bit-widths (Appendix~\ref{app:theory}), recalibration cannot work by shrinking $\{e_\ell\}$; what it moves is where the activation sits on the grid, returning it to the occupancy that the quantizer and the frozen path downstream were calibrated at (Figure~\ref{fig:teaser}b). No compared method does this; T3A and NEO act downstream of these interfaces, FOA and FOZO write upstream through prompts, and ZOA through the LayerNorm affines of the early blocks (Section~\ref{sec:analysis}). Displacement also carries a cost that exists only under a frozen fixed-point grid.
\begin{proposition}[Code-level starvation under a frozen grid]\label{prop:levels}
Let channel $c$ enter a frozen per-tensor quantizer of step $\Delta$ and clip ratio $\rho$, so at
calibration it is resolved into $L_{s,c}=2\rho\,\sigma_{s,c}/\Delta$ code levels, and let
$b_c=\log_2(\sigma_{s,c}/\sigma_{t,c})$. A channel that contracts loses $b_c>0$ levels of
resolution, and since its quantized output then takes at most $L_{s,c}2^{-b_c}$ distinct values, the
label information it can carry obeys $I(Q(x_c);y)\le\log_2 L_{s,c}-b_c$. \end{proposition}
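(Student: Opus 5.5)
The plan is a counting argument for the number of codes the channel can reach, followed by the standard entropy bound on mutual information. \textbf{(A2)} says the quantizer is a uniform affine quantizer whose clip range at calibration is $\pm\rho\,\sigma_{s,c}$ about the source mean. Under \textbf{(A1)}, the mass of channel $c$ at calibration therefore lies in an interval of length $2\rho\,\sigma_{s,c}$. Partitioning that interval into cells of width $\Delta$ gives the stated count $L_{s,c}=2\rho\,\sigma_{s,c}/\Delta$ of code levels.

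Under the shift, \textbf{(A1)} keeps the standardized shape fixed, so the same quantile range of the channel now spans $2\rho\,\sigma_{t,c}$. The grid is frozen, meaning $\Delta$ is unchanged, so that range meets at most $2\rho\,\sigma_{t,c}/\Delta$ cells. Up to the usual $+1$ from boundary cells, this is
\[
L_{s,c}\,\sigma_{t,c}/\sigma_{s,c}=L_{s,c}\,2^{-b_c}.
\]
This is the ``at most $L_{s,c}2^{-b_c}$ distinct values'' claim. When $b_c>0$, taking $\log_2$ shows a loss of exactly $b_c$ bits of resolution. I read ``loses $b_c$ levels of resolution'' in the statement as $b_c$ bits, consistent with the half-bit language of Figure~\ref{fig:problem}b, and I will state that reading explicitly.

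For the information bound, I use $I(Q(x_c);y)\le H(Q(x_c))\le\log_2|\mathrm{supp}\,Q(x_c)|$. Combined with the count above, this gives
\[
\log_2\!\big(L_{s,c}2^{-b_c}\big)=\log_2 L_{s,c}-b_c .
\]
The mutual-information step itself is routine.

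The main obstacle is making ``at most'' rigorous, for two reasons. First, a sub-Gaussian channel has unbounded support, so the literal number of distinct outputs is bounded only by the clip count $q_{\max}-q_{\min}+1$. Second, the floor/ceiling at the cell boundaries adds a $+1$. My first approach is to define the resolved support as the $\pm\rho\sigma$ quantile interval, as the calibration definition of $L_{s,c}$ implicitly does. The statement then holds for the truncated variable up to rounding. The tail mass outside the interval is at most $2e^{-\rho^2/2}$, and I would either absorb it as an additive $O\!\big(h(e^{-\rho^2/2})\big)$ term in the entropy bound or argue that it is negligible at the tested clip ratios. If a cleaner exact statement is wanted, I would instead assume bounded support, i.e.\ that the channel's range scales exactly with $\sigma$, and then the count and the bound follow verbatim.
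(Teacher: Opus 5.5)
Your proposal is correct and follows the same route as the paper's proof. Under the frozen step the occupied level count scales with $\sigma$, so $\log_2(L_{s,c}/L_{t,c})=b_c$, and then $I(Q(x_c);y)\le H(Q(x_c))\le\log_2(\text{number of values})$ gives the bound. Your caveats about unbounded sub-Gaussian support and the boundary cells are genuine and go beyond the paper, which simply asserts the proportionality; note that the rounding term survives even in your bounded-support variant, so there too ``verbatim'' should read ``up to $O(1)$ levels''.
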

\vspace{-6pt}
In full precision, where resolution is relative to the value represented, or under a quantizer recalibrated to $\sigma_{t,c}$, the same contraction loses nothing; the loss exists only because the grid is frozen. Restoring $\sigma_{t,c}$ to $\sigma_{s,c}$ restores the levels, and since $b_c$ varies across channels no single global scale can. On this stream a fifth of the channels forfeit at least half a bit, which \method{} cuts to $7\%$ while no compared method removes more than $1\%$ (Figure~\ref{fig:problem}b). This predicts that matching the scale as well as the mean pays most where resolution is forfeited, and it does, $+2.72$ points at those interfaces against $+1.04$ at those that forfeit none, at matched module type, width and depth (Appendix~\ref{app:analysis}).
\vspace{-7pt}
\paragraph{Why the late blocks.} A correction's residual is weighted by the downstream gain
$K_{>\ell}$, so a late interface is the cheaper place to correct: we measure $K_{>\ell}$ to be ${\sim}4.4\times$ smaller at the last four blocks than at the early ones (Corollary~\ref{prop:depth}, Appendix~\ref{app:theory}). Recalibrating earlier blocks instead forces the correction through many downstream quantizers that compound it, dropping accuracy below No Adapt and collapsing for the full network (Table~\ref{tab:scope}; Appendix~\ref{app:scope}).
\vspace{-7pt}
\paragraph{Why the correction is worth more as precision falls.} A prediction survives quantization while the propagated perturbation stays inside the unquantized logit margin. The shift shrinks that margin by $46\%$ on ImageNet-C, while the granular error is unchanged by the shift and grows as the step coarsens (Appendix~\ref{app:analysis}). Low precision therefore leaves more predictions within flipping distance, and restoring the operating point rescues more of them, $+3.8$ points at full precision against over $+6$ at every quantized setting (Figure~\ref{fig:teaser}a).
\vspace{-7pt}
\paragraph{Why the match is partial rather than full.} With finite-sample moment estimates the accuracy-relevant error is minimized not at full restoration but at a stable, partial match (Proposition~\ref{prop:shrink} and Corollary~\ref{cor:partial}, Appendix~\ref{app:theory}). A full match divides each activation by the \emph{estimated} test scale $\sigma_t$ and re-injects the estimate's noise, which a partial match ($\lambda{<}1$) and a smoothed running estimate damp. A full match therefore moves the code distribution further toward source without improving accuracy (Table~\ref{tab:sens}); the configuration that maximizes the diagnostic is not the one to deploy, and \method{} uses $\lambda{=}0.5$ with a cumulative target estimate.

\section{Experiments}
\label{sec:exp}

\paragraph{Setup.} Unless noted, all experiments use a quantized ImageNet-trained ViT-B on ImageNet-C
\citep{hendrycks2019benchmarking} at severity 5, all 15 corruptions, the RobustBench \citep{croce2021robustbench} 5k subset, batch size 64, with episodic reset between corruptions, on an NVIDIA TITAN RTX ($24$\,GB). We quantize with PTQ4ViT \citep{yuan2022ptq4vit} at W6A6/W8A8 and AdaLog \citep{wu2024adalog} at W4A4/W3A3. We compare against T3A \citep{iwasawa2021test}, FOA \citep{niu2024test}, ZOA \citep{deng2025test}, NEO \citep{neo2025}, and FOZO \citep{fozo2026}, the backprop-free methods that run on the frozen quantized graph. Accuracy is the 15-corruption mean top-1, with 3-seed mean $\pm$ std (seeds $0/1/2$) for the main comparisons, the scope and sensitivity ablations, and the code-distribution diagnostics. The remaining protocol detail is in Appendix~\ref{app:repro}.
\subsection{Main results}
\label{sec:main}

\paragraph{Accuracy.}
\method{} attains the best mean accuracy at \emph{every} bit-width (Table~\ref{tab:main}), with margins over the strongest baseline from $+2.28$ at W8A8 to $+4.00$ at W3A3, against a seed standard deviation of at most $0.14$ for either method. \method{} at W8A8 also exceeds the unadapted full-precision accuracy under corruption, and at W6A6 comes within $1.2$ points of it. The ranking is not an artifact of the corruption mean, the evaluation subset, the quantizer, or symmetric bit-widths. \method{} wins the majority of the 15 individual corruptions and keeps its lead on the full $50$k ImageNet-C evaluation (Appendix~\ref{app:tables}), under the alternative PTQ method wherever that quantizer yields a usable model (Appendix~\ref{app:crossquant}), and at asymmetric W4A8, where the margin is $+2.01$ (Appendix~\ref{app:w4a8}).
\paragraph{Test-time cost.}
It reaches that accuracy at single-forward cost and without writing a parameter, at least $46\%$ less latency than the two-forward FOA/FOZO/ZOA, which also write $2.3$K to $25$K parameters per batch (Appendix~\ref{app:cost}). That comparison reports the optimization-based baselines at their minimum-forward setting; FOA at its recommended population ($K{=}28$) costs far more forwards and still stays below \method{} at every bit-width (Appendix~\ref{app:foa28}). We also measured cost on a Jetson Orin Nano \citep{jetson_orin_nano}, where \method{} runs at $1.071\times$ the No Adapt latency (Appendix~\ref{app:ondevice}). NEO shares that single-forward cost but trails \method{} by $3.00$ points at W6A6.

\begin{table}[t]
\vspace{-10pt}
\centering
\caption{Mean top-1 (\%) on ImageNet-C by bit-width. Lower block: \emph{component ablation}. Columns: \#FP, forwards/batch; Upd.\ par., test-time-updated parameters; Lat., per-image latency (ms); Code dist., percentage of the corruption-induced code-distribution drift removed at W6A6, as \emph{intermediate\,/\,head}; Closed, no iterative or zeroth-order optimization. \emph{Full precision}: W32A32 reference.}
\vspace{-9pt}
\label{tab:main}
\resizebox{\textwidth}{!}{%
\begin{tabular}{l|rrrr|ccc|cc}
\toprule
 & \multicolumn{4}{c|}{\emph{ImageNet-C accuracy (\%), bit-width}} & \multicolumn{3}{c|}{\emph{Test-time cost}} & \multicolumn{2}{c}{\emph{Mechanism}} \\
\cmidrule(lr){2-5}\cmidrule(lr){6-8}\cmidrule(lr){9-10}
Method & W3A3 & W4A4 & W6A6 & W8A8 & \#FP & Upd.\ par. & Lat.\,(ms) & Code dist. & Closed \\
\midrule
\textit{Full precision} & \multicolumn{4}{c|}{\textit{No Adapt $55.58$, \ours{} \ms{59.38}{0.01}\,(W32A32)}} & 1 & 0 & n/a & n/a & n/a \\
\midrule
No Adapt & \ms{28.35}{0.00} & \ms{47.30}{0.01} & \ms{47.68}{0.00} & \ms{54.36}{0.00} & 1 & 0 & 6.0 & 0\,/\,0 & n/a \\
T3A & \ms{28.49}{0.01} & \ms{47.42}{0.04} & \ms{47.79}{0.01} & \ms{54.16}{0.02} & 1 & 0 & 8.2 & 0\,/\,0 & yes \\
FOA & \ms{31.52}{0.09} & \ms{50.25}{0.07} & \ms{50.39}{0.05} & \ms{57.85}{0.16} & 2 & 2.3K & 13.0 & 1\,/\,48 & no \\
ZOA & \ms{31.45}{0.05} & \ms{50.87}{0.09} & \ms{51.37}{0.04} & \ms{58.16}{0.05} & 2 & $\approx$25K & 12.3 & 2\,/\,25 & no \\
NEO & \ms{31.91}{0.02} & \ms{50.29}{0.02} & \ms{51.35}{0.01} & \ms{58.09}{0.03} & 1 & 0 & 6.0 & 0\,/\,21 & yes \\
FOZO & \ms{31.98}{0.14} & \ms{50.09}{0.02} & \ms{49.78}{0.25} & \ms{57.74}{0.20} & 2 & 2.3K & 12.3 & $-$15\,/\,19 & no \\
\midrule
\multicolumn{10}{l}{\emph{Component ablation of \ours{}}} \\
\rowcolor{ourslight}Local-only & \ms{35.30}{0.07} & \ms{52.87}{0.03} & \ms{53.42}{0.05} & \ms{59.11}{0.02} & 1 & 0 & 6.6 & 47\,/\,49 & yes \\
\rowcolor{ourslight}Global-only & \ms{31.43}{0.02} & \ms{50.21}{0.02} & \ms{51.20}{0.03} & \ms{58.10}{0.01} & 1 & 0 & 6.0 & 0\,/\,84 & yes \\
\rowcolor{oursgreen}\textbf{\method{}} & \ms{\mathbf{35.98}}{0.00} & \ms{\mathbf{53.58}}{0.04} & \ms{\mathbf{54.35}}{0.02} & \ms{\mathbf{60.44}}{0.01} & \textbf{1} & \textbf{0} & \textbf{6.6} & \textbf{47\,/\,90} & \textbf{yes} \\
\bottomrule
\end{tabular}%
}
\vspace{-5pt}
\end{table}
\vspace{-5pt}

\begin{table*}[t]
\centering
\caption{Long-term continual adaptation: a $10$-round $\times$
$15$-corruption ImageNet-C stream with state carried throughout. Round-$r$ is the 15-corruption mean of
pass $r$; Avg.\ is the R1--R10 mean $\pm$ std over three seeds. \#FP: forwards/batch;
No Adapt is stateless. \method{} and NEO-Continual use the EMA estimator at a matched
$\alpha{=}0.1$. Best per bit-width \textbf{bold}.}
\vspace{-9pt}
\label{tab:t3}
\resizebox{\textwidth}{!}{%
\begin{tabular}{l|l|c|rrrrrrrrrr|r}
\toprule
BW & Method & \#FP & R1 & R2 & R3 & R4 & R5 & R6 & R7 & R8 & R9 & R10 & Avg. \\
\midrule
 & No Adapt & 1 & 28.35 & 28.35 & 28.35 & 28.35 & 28.35 & 28.35 & 28.35 & 28.35 & 28.35 & 28.35 & \ms{28.35}{0.00} \\
 & ZOA & 2 & 31.27 & 31.61 & 31.31 & 32.10 & 32.06 & 32.08 & 32.10 & 32.15 & 32.34 & 32.40 & \ms{31.94}{0.25} \\
 & NEO-Continual & 1 & 31.69 & 31.64 & 31.69 & 31.66 & 31.67 & 31.69 & 31.66 & 31.67 & 31.69 & 31.63 & \ms{31.67}{0.01} \\
\rowcolor{oursgreen}\cellcolor{white}\multirow{-4}{*}{W3A3} & \textbf{\method{}} & 1 & \textbf{35.66} & \textbf{35.64} & \textbf{35.64} & \textbf{35.66} & \textbf{35.67} & \textbf{35.70} & \textbf{35.75} & \textbf{35.70} & \textbf{35.63} & \textbf{35.65} & \ms{\mathbf{35.67}}{0.02} \\
\midrule
 & No Adapt & 1 & 47.68 & 47.68 & 47.68 & 47.68 & 47.68 & 47.68 & 47.68 & 47.68 & 47.68 & 47.68 & \ms{47.68}{0.00} \\
 & ZOA & 2 & 51.59 & 52.24 & 52.62 & 52.67 & 52.63 & 52.85 & 52.95 & 53.27 & 53.29 & 53.23 & \ms{52.73}{0.12} \\
 & NEO-Continual & 1 & 51.16 & 51.14 & 51.13 & 51.18 & 51.14 & 51.15 & 51.14 & 51.13 & 51.17 & 51.15 & \ms{51.15}{0.01} \\
\rowcolor{oursgreen}\cellcolor{white}\multirow{-4}{*}{W6A6} & \textbf{\method{}} & 1 & \textbf{53.94} & \textbf{53.90} & \textbf{53.86} & \textbf{53.93} & \textbf{53.98} & \textbf{53.92} & \textbf{53.93} & \textbf{53.89} & \textbf{53.92} & \textbf{53.89} & \ms{\mathbf{53.92}}{0.00} \\
\bottomrule
\end{tabular}%
}
\vspace{-8pt}
\end{table*}

\paragraph{Component ablation.}
\label{sec:ablation}

At W6A6 the local branch alone is worth $+5.74$ over No Adapt, the global branch alone $+3.52$, and the two together $+6.67$ (Table~\ref{tab:main}, lower block); the combination is best at every bit-width (per-corruption breakdown in Appendix~\ref{app:tables}). The local branch's gain widens as precision drops, from $+4.75$ at W8A8 to $+6.95$ at W3A3, and its contribution on top of the global branch is largest at W3A3, $+4.55$, where parameter-update-free recovery is hardest (Appendix~\ref{app:analysis}). The local branch's gain is primarily the restored quantizer scale. Over the global branch, a mean-only local recalibration adds $+0.85$ and the scale match a further $+2.30$ (Appendix~\ref{app:sens}). Only the configurations with the local branch move the intermediate interfaces, removing $47\%$ of the code-distribution drift against at most $2\%$ for any baseline, while the full method is highest at the head.

\vspace{-3pt}
\subsection{Continual and non-i.i.d.\ streams}
\label{sec:continual}

\paragraph{Long-term continual adaptation.} \method{} recalibrates against fixed source statistics, so it does not drift over long streams (Appendix~\ref{app:theory}, Proposition~\ref{prop:drift}). Over a 10-round $\times$ 15-corruption ImageNet-C stream with state carried throughout (Table~\ref{tab:t3}), \method{} is best in every round at both bit-widths and stays essentially flat, $35.6$--$35.8$ and $53.9$--$54.0$. The comparison is against ZOA, the optimization-based quantized-TTA method with a domain memory across segments, and NEO-Continual, the concurrent NEO's EMA variant at the matched forgetting rate. \method{} beats ZOA by $+3.73$ at W3A3 against $+1.19$ at W6A6, on one forward against two, so the margin grows as precision falls, and clears NEO-Continual by $+4.00$ and $+2.77$; that a final-only recentering trails ZOA at both bit-widths says the forgetting rate does not close the gap; the advantage is the local pre-quant branch.

\paragraph{Online label shift.}\label{sec:noniid} Under the non-i.i.d.\ setting, consecutive test batches are dominated by only a few classes. Following SAR \citep{niu2023towards}, we organize the full ImageNet-C test stream using the class-imbalance ratio $q_{\max}/q_{\min}$, where $q_{\max}$ and $q_{\min}$ denote the maximum and minimum class sampling probabilities.
We sweep it from $1$ to $5{\times}10^{5}$, where $10^{3}$ gives the dominant class ${\sim}50\%$ of each batch (sampler in Appendix~\ref{app:repro}). \method{} is the most accurate at every imbalance ratio (Appendix~\ref{app:noniid}; Table~\ref{tab:noniid} and Figure~\ref{fig:noniid}), leading by $1.7$--$2.7$ points while remaining stable ($54.55{\to}54.73$). This stability follows from its forgetting-free cumulative estimator, whose running mean equals the pooled stream mean and is asymptotically invariant to class order. Other baselines degrade under strong shift (e.g., ZOA by $0.8$ points), while NEO is similarly stable but $3.5$ points less accurate.

\vspace{-3pt}
\subsection{Generalization across shifts and backbones}
\label{sec:generalization}

\begin{table}[t]
\vspace{-5pt}
\centering
\begin{minipage}[t]{0.595\textwidth}
\vspace{-5pt}
\centering
\caption{Generalization to other OOD benchmarks at W3A3 and W6A6, 3-seed means; other bit-widths and std in
Appendices~\ref{app:generalization} and~\ref{app:beyond}. Best \textbf{bold}, second \underline{underlined}.}
\vspace{-8pt}
\label{tab:t4app}
\resizebox{\textwidth}{!}{%
\begin{tabular}{l|rrrrr|rrrrr}
\toprule
Method & \multicolumn{5}{c|}{W3A3} & \multicolumn{5}{c}{W6A6} \\
 & \cbar & R & Sk. & 3DCC & DN & \cbar & R & Sk. & 3DCC & DN \\
\midrule
No Adapt & 36.63 & 42.48 & 25.01 & 34.21 & 54.40 & 59.07 & 55.89 & 40.13 & 47.68 & 62.45 \\
T3A & 36.82 & 38.27 & 27.63 & 34.18 & \underline{57.29} & 59.83 & 52.74 & \underline{44.08} & 47.63 & \underline{65.00} \\
FOA & 40.86 & 43.27 & 27.83 & 35.55 & 55.40 & 63.40 & 56.17 & 42.27 & 48.69 & 61.86 \\
ZOA & 40.65 & 43.18 & 27.99 & 35.78 & 56.19 & \underline{64.02} & \underline{56.92} & 42.81 & 49.31 & 64.58 \\
NEO & 41.17 & \textbf{44.01} & 28.22 & 35.87 & 56.48 & 63.94 & \textbf{57.25} & 42.78 & \underline{49.40} & 64.56 \\
FOZO & \underline{41.57} & 43.02 & \underline{29.47} & \underline{35.88} & 52.25 & 61.67 & 54.72 & 42.51 & 48.50 & 56.95 \\
\rowcolor{oursgreen}\textbf{\method{}} & \textbf{44.92} & \underline{43.90} & \textbf{32.10} & \textbf{37.50} & \textbf{59.72} & \textbf{65.45} & 56.60 & \textbf{45.65} & \textbf{51.20} & \textbf{66.85} \\
\bottomrule
\end{tabular}%
}
\end{minipage}
\hfill
\begin{minipage}[t]{0.385\textwidth}
\centering
\centering
\vspace{-5pt}
\caption{Generalization to other backbones at W6A6 (3-seed mean $\pm$ std). Best \textbf{bold}, second \underline{underlined}.}
\vspace{-8pt}
\label{tab:backbones}
\resizebox{\textwidth}{!}{%
\begin{tabular}{l|ccc}
\toprule
Method & ViT-B & DeiT-B & ViT-S \\
\midrule
No Adapt & \ms{47.68}{0.00} & \ms{43.12}{0.00} & \ms{32.14}{0.00} \\
T3A & \ms{47.79}{0.01} & \ms{42.81}{0.01} & \ms{32.25}{0.02} \\
FOA & \ms{50.38}{0.03} & \ms{45.71}{0.05} & \ms{34.35}{0.07} \\
ZOA & \ms{\underline{51.37}}{0.03} & \ms{46.42}{0.05} & \ms{36.12}{0.04} \\
NEO & \ms{51.35}{0.01} & \ms{46.50}{0.01} & \ms{\underline{36.13}}{0.00} \\
FOZO & \ms{50.51}{0.09} & \ms{\underline{46.61}}{0.03} & \ms{32.36}{0.65} \\
\rowcolor{oursgreen}\textbf{\method{}} & \ms{\mathbf{54.27}}{0.05} & \ms{\mathbf{49.09}}{0.02} & \ms{\mathbf{38.35}}{0.06} \\
\bottomrule
\end{tabular}}
\end{minipage}
\end{table}

\paragraph{Other distribution shifts.} Without per-dataset tuning, \method{} transfers beyond ImageNet-C to further OOD benchmarks. Table~\ref{tab:t4app} shows it is strongest at every bit-width on ImageNet-\cbar{}, ImageNet-Sketch (Sk.), ImageNet-3DCC, and the domain shifts of DomainNet-126 (DN), and within $0.7$ points of the best on ImageNet-R, where the quantization-agnostic baselines edge ahead. It also leads on CIFAR-10/100-C at every bit-width, by $0.8$ to $3.7$ points over the runner-up (Appendix~\ref{app:beyond}).
\paragraph{Other backbones.} With the same configuration, \method{} also transfers across
transformer backbones at W6A6 (Table~\ref{tab:backbones}): best on all three, $+6.6$, $+6.0$ and $+6.2$ over No Adapt on ViT-B, DeiT-B and ViT-S, and seed-stable, std $\leq 0.06$ on every backbone. \method{} also carries over to a convolutional ResNet-50 at W6A6, $31.40$ against NEO's $30.79$ and No Adapt's $30.12$ (Appendix~\ref{app:cnn}). Tuning $\lambda$ per backbone would add up to $0.2$ points on the transformers and $2.5$ on the ResNet-50.

\begin{table*}[t]
\centering
\caption{Recalibration-scope ablation, Local-only (no global branch), 3-seed
mean$\pm$std; No Adapt is $47.30$/$47.68$ at W4A4/W6A6. Left group: which \emph{modules} within
blocks 8--11, and the block \emph{position} of the four recalibrated blocks, all modules. Right
group: \emph{trailing-block depth}, the number of late blocks recalibrated, all modules.
Deployed default configuration \textbf{bold}.}
\vspace{-9pt}
\label{tab:scope}
\footnotesize
\resizebox{\textwidth}{!}{%
\begin{tabular}{l|cc|l|cc}
\toprule
Setting & W4A4 & W6A6 & \emph{Trailing-block depth} & W4A4 & W6A6 \\
\midrule
\multicolumn{3}{l|}{\emph{Modules} (blocks 8--11)} & 1 block (11) & \ms{48.31}{0.03} & \ms{48.54}{0.02} \\
\quad\texttt{attn.qkv}+\texttt{attn.proj} & \ms{50.18}{0.05} & \ms{50.72}{0.05} & 2 blocks (10--11) & \ms{50.85}{0.03} & \ms{51.53}{0.05} \\
\quad\quad{}+\,\texttt{mlp.fc1} & \ms{51.27}{0.05} & \ms{51.86}{0.05} & 3 blocks (9--11) & \ms{52.03}{0.03} & \ms{52.56}{0.05} \\
\rowcolor{oursgreen}\quad\quad{}+\,\texttt{mlp.fc2} & \ms{\mathbf{52.87}}{0.03} & \ms{\mathbf{53.42}}{0.04} & 4 blocks (8--11) & \ms{\mathbf{52.87}}{0.03} & \ms{\mathbf{53.42}}{0.04} \\
\multicolumn{3}{l|}{\emph{Block position} (4 blocks)} & 5 blocks (7--11) & \ms{52.75}{0.02} & \ms{53.39}{0.07} \\
\quad early (blocks 0--3) & \ms{30.93}{0.08} & \ms{32.12}{0.04} & 6 blocks (6--11) & \ms{52.11}{0.06} & \ms{53.28}{0.03} \\
\quad middle (blocks 4--7) & \ms{42.09}{0.01} & \ms{43.29}{0.04} & 7 blocks (5--11) & \ms{51.82}{0.05} & \ms{53.35}{0.02} \\
\cellcolor{oursgreen}\quad late (blocks 8--11) & \cellcolor{oursgreen}\ms{\mathbf{52.87}}{0.03} & \cellcolor{oursgreen}\ms{\mathbf{53.42}}{0.04} & all 12 blocks & \ms{28.24}{0.04} & \ms{31.93}{0.07} \\
\bottomrule
\end{tabular}}
\vspace{-8pt}
\end{table*}

\vspace{-3pt}
\subsection{Analysis at the intermediate quantizer inputs}
\label{sec:analysis}

\paragraph{Intermediate activation mismatch.}
At the 16 interfaces the local branch acts on (blocks 8--11 $\times$ \{qkv, proj, fc1, fc2\}, W6A6), the combined per-channel source--target mismatch under No Adapt is $0.347$ (Appendix~\ref{app:analysis}). Local-only and \ours{} reduce it to $0.153$ ($-56.0\%$), whereas Global-only and NEO, acting downstream at the final CLS, reduce it by $0\%$. Saturation stays negligible at $\leq 0.1\%$, so the gain comes from a restored distribution rather than reduced clipping (Appendix~\ref{app:theory}).
\vspace{-5pt}
\paragraph{Where to recalibrate.}
\label{sec:scope}
\emph{Position} is decisive (Table~\ref{tab:scope}). Applying the same four-block, all-module recalibration to the early or middle blocks collapses accuracy to $32.12$ and $43.29$ at W6A6, below No Adapt at $47.68$, whereas the late blocks (8--11) reach $53.42$, and the other three bit-widths agree (Appendix~\ref{app:scope}). This is the empirical counterpart of the late-blocks argument in Section~\ref{sec:why}. Along the \emph{depth} axis, accuracy rises up to four trailing blocks and then plateaus, so we fix the smallest scope on it; extending to all twelve blocks collapses accuracy again, to $31.93$ at W6A6. Along the \emph{module} axis, each interface contributes, post-GELU \texttt{mlp.fc2} adding $+1.56$ at W6A6 (Appendix~\ref{app:scope}). Placement relative to the quantizer is the distinguishing choice: the same correction after the quantizer, or at the block outputs where normalization-statistics adaptation acts, recovers only about half the gain (Appendix~\ref{app:analysis}).

\vspace{-4pt}
\subsection{Hyperparameter sensitivity}
\label{sec:hpsens}

\method{} has one tuned hyperparameter, the recalibration strength $\lambda$ of Equation~\ref{eq:local}; the cumulative moment estimator is parameter-free. We select $\lambda{=}0.5$ once, on the ViT-B W6A6 sweep of Table~\ref{tab:sens}, and reuse it unchanged at every other bit-width, backbone and dataset; the baselines keep their published configurations (Appendix~\ref{app:repro}). Accuracy is insensitive to the choice: every $\lambda$ from $0.1$ to $1.0$ beats the best baseline, it moves within $0.2$ points across $\lambda\in[0.5,0.7]$ and across batch sizes $1$ to $128$, and even $8$ clean calibration batches come within $1.5$ points of the default (Appendix~\ref{app:sens}). The value also carries across backbones: ViT-B and DeiT-B peak at $0.5$, ViT-S at $0.3$ (Appendix~\ref{app:backbones}).

\vspace{-4pt}
\subsection{Comparison to backprop-based TTA}
\label{sec:backprop}

The methods in Table~\ref{tab:main} run on the frozen quantized graph directly. Backprop-based TTA instead needs a gradient through it, which the deployed integer kernels do not provide; we supply one with a straight-through estimator and run TENT \citep{wang2021tent}, EATA \citep{niu2022efficient}, SAR \citep{niu2023towards} and SPA \citep{spa2025} at W6A6 (Table~\ref{tab:backprop_acc}, Appendix~\ref{app:backprop}). None of the three lightweight methods reaches \method{}. SPA does reach a higher top-1, but not at the highest imbalance of Section~\ref{sec:noniid}, where it scores $52.63{\pm}4.24$ against \method{}'s $54.73{\pm}0.07$ (Appendix~\ref{app:spanoniid}). It also pays for that top-1 with three forward passes, a backward pass and a full-precision predictor per batch: $5.2\times$ the latency and $14.6\times$ the inference memory, $23$\,GB. \method{} adds only $0.17$\,MB at $1.10\times$ the latency, whereas the three lightweight ones need $9.2$\,GB at $1.9$ to $3.7\times$ (Appendix~\ref{app:cost}). We also ran the four on the Orin Nano, where each ran out of memory in our harness, so none runs there and \method{} is the most accurate method that does (Appendix~\ref{app:ondevice}).

\begin{table}[t]
\centering
\begin{minipage}[t]{0.51\textwidth}
\vspace{-10pt}
\centering
\centering
\caption{Sensitivity of \method{} (W6A6) to $\lambda$, the fraction of the full recalibration applied, and to batch size and source-stat size;
3-seed mean, $\pm$ std in Appendix~\ref{app:sens}, No Adapt $47.68$. Defaults in \textbf{bold}.}
\label{tab:sens}
\vspace{-5pt}
\resizebox{\linewidth}{!}{%
\begin{tabular}{l|cccccc}
\toprule
Recalibration $\lambda$ & 0 & 0.1 & 0.3 & \textbf{0.5} & 0.7 & 1.0 \\
Mean top-1 (\%) & 51.20 & 52.61 & 54.00 & \textbf{54.35} & 54.20 & 53.52 \\
\midrule
Batch size & 1 & 4 & 8 & 16 & \textbf{64} & 128 \\
Mean top-1 (\%) & 54.27 & 54.39 & 54.47 & 54.35 & \textbf{54.35} & 54.29 \\
\midrule
Source-stat batches & 16 & 32 & 64 & 128 & \textbf{200} & 256 \\
Mean top-1 (\%) & 53.65 & 53.88 & 54.18 & 54.29 & \textbf{54.35} & 54.33 \\
\bottomrule
\end{tabular}}
\end{minipage}
\hfill
\begin{minipage}[t]{0.46\textwidth}
\centering
\vspace{-10pt}
\caption{Comparison to backprop-based TTA (W6A6); 3-seed mean.
Deployable: peak memory fits a Jetson Orin Nano ($8$\,GB). Lat.: per image.
Mem.: peak memory.}
\vspace{-8pt}
\label{tab:backprop_acc}
\resizebox{\linewidth}{!}{%
\begin{tabular}{l|c|c|c|c}
\toprule
Method & Deployable & Lat.\,(ms) & Mem.\,(MB) & Mean top-1 \\
\midrule 
No Adapt & \checkmark & 6.0 ($1.0\times$) & \phantom{0}1,601 ($1.0\times$) & 47.68 \\
\midrule
SAR & $\times$ & 22.2 ($3.7\times$) & \phantom{0}9,160 ($5.7\times$) & 49.77 \\
TENT & $\times$ & 11.3 ($1.9\times$) & \phantom{0}9,159 ($5.7\times$) & 50.05 \\
EATA & $\times$ & 11.4 ($1.9\times$) & \phantom{0}9,160 ($5.7\times$) & 52.79 \\
SPA & $\times$ & 31.2 ($5.2\times$) & 23,450 ($14.6\times$) & 59.14 \\
\midrule
\rowcolor{oursgreen}\textbf{\method{}} & \checkmark & \textbf{6.6} ($\mathbf{1.1\times}$) & \phantom{0}\textbf{1,601} ($\mathbf{1.00\times}$) & \textbf{54.35} \\
\bottomrule
\end{tabular}}
\end{minipage}
\vspace{-10pt}
\end{table}
\section{Limitations and Conclusion}
\label{sec:conclusion}
\vspace{-5pt}
\paragraph{Limitations.}
The strength $\lambda$ is chosen once, not per backbone, which costs up to $0.2$ points on the transformers (Section~\ref{sec:generalization}). The per-channel shift does not fold into a per-tensor quantizer, so the deployable form is an explicit \texttt{Mul}/\texttt{Add} wrapper, which an ONNX Runtime check confirms is representable in standard operators (Appendix~\ref{app:deploy}). We have not implemented \method{} down to an integer-kernel deployment, so its latency, memory, and energy on such a stack are left to future work. On clean source data top-1 changes by at most $0.42$ points across W3A3--W8A8 (Appendix~\ref{app:noshift}), so the correction is safe but inert when adaptation is unnecessary.
\vspace{-5pt}
\paragraph{Conclusion.}
We presented \method{}, a single-pass test-time adaptation method for quantized vision transformers that recalibrates activations at the inputs to its frozen quantizers toward source calibration statistics. Across W3A3--W8A8 on ImageNet-C it is the most accurate backprop-free method at low test-time cost, its two branches are complementary, and one fixed configuration carries to continual streams, label shift, out-of-distribution suites, and other backbones. 
Our analysis identifies a quantizer-aligned activation-recalibration principle: correcting activation moments at frozen quantizer inputs becomes increasingly valuable as precision falls and the grid forfeits code resolution.
For quantized deployment, much of the accessible test-time accuracy is recoverable without any gradient, extra forward pass, parameter update, or stored data.

\section*{AI Use Statement}

We used generative AI tools to help interpret results, edit the text for grammar and readability, draft the figure scripts, and clean up code. The authors checked every interpretation and every figure against the underlying data, reviewed every edited passage, and verified that all AI-edited code reproduces the reported numbers. We take responsibility for the final content of this work, including text, claims or artifacts produced with the aid of generative AI.

\section*{Ethics Statement}

This work uses only publicly available benchmarks and publicly released model checkpoints. It involves no human subjects, collects no new data, and introduces no dataset. The contribution is methodological; it inherits the known biases and limitations of the models and datasets it is applied to, and neither adds to nor removes them. Test-time adaptation of a quantized model lowers the cost of running vision classifiers on edge hardware; we see no application-specific risk beyond those of the underlying classifiers, and no ethical concern beyond those general to machine learning research on public benchmarks.

\section*{Reproducibility Statement}

\paragraph{Protocol.}
Every number reported in the paper follows the protocol of Appendix~\ref{app:repro}, which states the quantized checkpoints and calibration recipe, how the source per-channel moments are accumulated, the evaluation protocol, the seed protocol, and the hardware. \method{} runs one configuration everywhere: local scope blocks 8--11 $\times$ \{qkv, proj, fc1, fc2\}, $\lambda{=}0.5$, the cumulative moment estimate ($\alpha{=}0.1$ EMA for the continual setting only), clip ratio $\rho{=}3$ and $\epsilon{=}10^{-6}$; none of these is tuned per dataset, bit-width, or backbone. \method{} itself is deterministic given the test stream: the seeds $0/1/2$ of the three-seed tables vary only the stream order and the internal randomness of the optimization-based baselines, while the quantized checkpoint and its calibration are fixed at seed $0$.

\paragraph{Quantization and source statistics.}
We quantize with PTQ4ViT \citep{yuan2022ptq4vit} for W6A6/W8A8 and AdaLog \citep{wu2024adalog} for W4A4/W3A3, following the quantization protocol of ZOA \citep{deng2025test} and FOZO \citep{fozo2026}, and with AdaLog for the asymmetric W4A8 setting of Appendix~\ref{app:w4a8}. The source moments are accumulated in one pass over $12{,}800$ unlabeled clean source images (Appendix~\ref{app:repro}), so no target data, labels or gradients are used. The CIFAR- and DomainNet-fine-tuned models of Appendix~\ref{app:beyond} follow the same recipe, with calibration and source moments drawn from their own training splits.

\paragraph{Evaluation data.}
ImageNet-C evaluation uses the RobustBench \citep{croce2021robustbench} $5$k subset, a standard subset for efficient evaluation also used by our FOZO baseline; the online label-shift study of Section~\ref{sec:noniid} instead uses the full $50$k per corruption; CIFAR-10-C/100-C use the full $10$k per corruption and DomainNet-126 its three target domains in full.

\paragraph{Code and artifacts.}
Code, the launch scripts behind every table, and the source calibration statistics (${\sim}170$\,KB per bit-width) are available at \url{\codeurl}.
The quantized checkpoints are reproduced by the calibration scripts included there (PTQ4ViT with its fixed calibration set, AdaLog from its public release, and the fine-tuning and calibration recipes of Appendix~\ref{app:beyond} for the other benchmarks), listed with checksums in the repository, and will be added to it as well. Experiments use PyTorch 2.8 and timm 1.0 on CUDA 12.8; every dataset we evaluate on is a public benchmark.

\paragraph{Baselines, cost, and analysis.}
Baseline configurations are in Appendix~\ref{app:repro}. For a fair cost comparison every optimization-based baseline runs at its cheapest published two-forward setting, which is also ZOA's default protocol; Appendix~\ref{app:foa28} additionally reports FOA at its recommended population. Measured compute, peak memory and per-image forward latency, together with the measurement protocol, are in Appendix~\ref{app:cost}; they are taken on an NVIDIA TITAN RTX ($24$\,GB), and the on-device measurements of Appendix~\ref{app:ondevice} on a Jetson Orin Nano ($8$\,GB). The assumptions behind the analysis of Section~\ref{sec:why} and all of its proofs are in Appendix~\ref{app:theory}.

\bibliography{iclr2027_conference}
\bibliographystyle{iclr2027_conference}

\appendix
\setcounter{proposition}{0}\renewcommand{\theproposition}{\thesection.\arabic{proposition}}
\setcounter{lemma}{0}\renewcommand{\thelemma}{\thesection.\arabic{lemma}}
\setcounter{corollary}{0}\renewcommand{\thecorollary}{\thesection.\arabic{corollary}}
\newpage

\section{Experimental and reproducibility details}
\label{app:repro}

\textbf{Model and quantization.} All experiments use ViT-B/16 from \texttt{timm} \citep{dosovitskiy2021image} unless noted. W6A6 and W8A8 use PTQ4ViT \citep{yuan2022ptq4vit} with Hessian-guided calibration on $32$ calibration images; W3A3 and W4A4 use AdaLog \citep{wu2024adalog} checkpoints. Weights and activation quantizers are frozen at test time. For the backbone study (Appendix~\ref{app:backbones}), DeiT-B/16 and ViT-S/16 are freshly PTQ4ViT-quantized at W6A6 with the same recipe. The benchmarks beyond ImageNet (Appendix~\ref{app:beyond}) use ViT-B/16 backbones fine-tuned on their own source data, CIFAR-10, CIFAR-100 and DomainNet's \emph{real} domain, and quantized with this same recipe at all four bit-widths.

\textbf{Source statistics.} The source per-channel moments $(\mu_s,\sigma_s)$ at the $16$ local interfaces and at the CLS position are accumulated in a single pass over $200$ batches of $64$ ($\approx 12{,}800$ clean, uncorrupted ImageNet \citep{deng2009imagenet} validation images, the split from which the FOA, FOZO and ZOA source statistics are also computed under their published protocol, which uses the full set). No corrupted or labeled data are used. For CIFAR-10-C, CIFAR-100-C and DomainNet-126 the same single pass runs on the training split of the corresponding source dataset, so no test or target image is seen.

\textbf{Evaluation protocol.} ImageNet-C uses the RobustBench \citep{croce2021robustbench} $5{,}000$-image test subset at severity $5$ over all $15$ corruptions, batch size $64$. Following standard TTA practice, the source artifact is held fixed (the PTQ-quantized checkpoint and its source calibration statistics, calibrated once at seed 0); for the three-seed tables, seeds $0/1/2$ vary the online test-stream order and, for the optimization-based methods, their internal randomness (FOA's CMA-ES sampling, FOZO's and ZOA's zeroth-order perturbations). The reported standard deviations therefore capture each method's online-adaptation variance: it is within $0.01$ for No Adapt and small for the closed-form methods (T3A, NEO, and \method{}, which carry no internal randomness and depend on the stream only through their running statistics), while reflecting optimization noise for FOA/FOZO/ZOA. Because every method is evaluated on this identical fixed checkpoint, any one-time PTQ-calibration variance is common to all and does not change their relative ranking. Each corruption is adapted episodically (state reset between corruptions) except in the continual setting (Section~\ref{sec:continual}), where the running moments are reset only once at the start of the $10$-round stream. ImageNet-R uses its $200$-class subset; ImageNet-Sketch uses all $1{,}000$ classes. CIFAR-10-C and CIFAR-100-C use the full $10{,}000$ images per corruption over the same $15$ corruptions at severity $5$, with each $32\times32$ image resized to $224\times224$; DomainNet-126 uses the three target domains in full and resets between domains, with \texttt{Resize}$(256)$ and \texttt{CenterCrop}$(224)$ on both source and target images.

\textbf{Hardware.} All experiments run on NVIDIA TITAN RTX GPUs ($24$\,GB), one card per run; the cost measurements of Appendix~\ref{app:cost} are taken on an idle card with one measurement at a time. The on-device measurements of Appendix~\ref{app:ondevice} use a Jetson Orin Nano Super developer kit \citep{jetson_orin_nano} (Orin \texttt{sm\_87}, the $8$\,GB module, of which $7.31$\,GiB is shared between GPU and CPU).

\textbf{\method{} hyperparameters.} We use one configuration throughout: local scope blocks~8--11 $\times$ \{qkv, proj, fc1, fc2\} and the post-LN CLS position; local recalibration strength $\lambda=0.5$ (the global CLS branch uses the full recalibration, $\lambda{=}1$); a forgetting-free cumulative running moment estimate by default (a forgetting EMA at rate $\alpha=0.1$ for the continual setting only); clip ratio $\rho=3$; $\epsilon=10^{-6}$. None of these is tuned per dataset, bit-width, or backbone. The local branch corrects with the moments of the preceding batches and updates afterwards; the global branch updates first (Appendix~\ref{app:sens}).

\textbf{Baselines.} All baselines use their public configurations. T3A \citep{iwasawa2021test} uses a per-class support set of size $k=20$. The optimization-based baselines are run in their cheapest published two-forward-pass configuration: FOA \citep{niu2024test} with CMA-ES population $K{=}2$ (two forwards per batch); FOZO \citep{fozo2026} with one SPSA sample, perturbation $0.5$, learning rate $0.08$; ZOA \citep{deng2025test} with one step, one SPSA average, and its domain memory, perturbing the LayerNorm affine parameters its paper adapts, namely blocks 1--8 $\times$ \{norm1, norm2\}, with the first block, the last three blocks and the final norm held frozen as that paper specifies ($\approx$\,25K parameters), at its published perturbation and aggregation learning rates ($2{\times}10^{-4}$ and $10^{-2}$). NEO \citep{neo2025} is parameter-free latent recentering, and No Adapt is the frozen quantized model.

\textbf{Online label shift.} The non-i.i.d.\ study of Section~\ref{sec:noniid} follows the SAR protocol \citep{niu2023towards}. We order the full ImageNet-C stream ($50$k images per corruption, W6A6) by an imbalance ratio: at each step the sampler draws a class from a distribution peaked on one class, with probability $q_{\max}$ on it against $q_{\min}$ on each other class, and the ratio is $q_{\max}/q_{\min}$. A ratio of $1$ is i.i.d., $10^{3}$ gives a ${\sim}50\%$ per-batch major class, and $5{\times}10^{5}$ is near class-sorted.

\textbf{Code.} Code and configurations to reproduce every table and figure are available at \url{\codeurl}.

\section{Per-bit-width and per-corruption tables}
\label{app:tables}

Table~\ref{tab:t1percorr} gives the full per-corruption accuracy behind the main accuracy comparison (Table~\ref{tab:main}) at all four bit-widths. Table~\ref{tab:t2pc} is the per-corruption breakdown of the component ablation (Table~\ref{tab:main}), and Table~\ref{tab:t4app} gives ImageNet-R/Sketch accuracy across all bit-widths, and Table~\ref{tab:full50k} repeats the main accuracy comparison on the full ImageNet-C test set ($50$k images per corruption).

\begin{table}[t]
\centering
\caption{\textbf{Full ImageNet-C evaluation} ($50$k images per corruption, all $15$ corruptions,
sev.\ 5, quantized ViT-B, batch 64, episodic; 3-seed mean $\pm$ std over seeds $0/1/2$, No Adapt within $0.01$ across seeds). Mean top-1 (\%) by bit-width,
replicating the accuracy comparison of Table~\ref{tab:main} on the full test set rather than the
$5$k subset. Lower block: component ablation (Local-only, Global-only, \ours{}). \method{} is the
best method at every bit-width and the ranking matches the $5$k main table; the W6A6 column also
agrees with the i.i.d.\ column of Table~\ref{tab:noniid}. Best per column \textbf{bold}.}
\label{tab:full50k}
{\small
\begin{tabular}{l|rrrr}
\toprule
Method & W3A3 & W4A4 & W6A6 & W8A8 \\
\midrule
No Adapt & 28.26 & 47.24 & 47.60 & 54.21 \\
T3A & \ms{29.68}{0.01} & \ms{48.62}{0.00} & \ms{49.11}{0.00} & \ms{55.76}{0.01} \\
FOA & \ms{31.25}{0.07} & \ms{50.28}{0.06} & \ms{50.50}{0.22} & \ms{58.17}{0.00} \\
ZOA & \ms{32.06}{0.15} & \ms{51.10}{0.14} & \ms{52.91}{0.05} & \ms{58.69}{0.02} \\
NEO & \ms{31.73}{0.01} & \ms{50.32}{0.01} & \ms{51.43}{0.00} & \ms{58.06}{0.00} \\
FOZO & \ms{31.88}{0.04} & \ms{50.03}{0.08} & \ms{49.94}{0.26} & \ms{58.05}{0.15} \\
\midrule
\multicolumn{5}{l}{\emph{Component ablation of \ours{}}} \\
\rowcolor{ourslight}Local-only & \ms{35.56}{0.02} & \ms{53.03}{0.01} & \ms{53.74}{0.01} & \ms{59.49}{0.01} \\
\rowcolor{ourslight}Global-only & \ms{31.38}{0.00} & \ms{50.29}{0.00} & \ms{51.20}{0.00} & \ms{58.11}{0.00} \\
\rowcolor{oursgreen}\textbf{\method{}} & \ms{\mathbf{36.18}}{0.02} & \ms{\mathbf{53.79}}{0.01} & \ms{\mathbf{54.64}}{0.01} & \ms{\mathbf{60.82}}{0.01} \\
\bottomrule
\end{tabular}}
\end{table}

\begin{table*}[t]
\centering
\caption{Full per-corruption top-1 accuracy (\%) on ImageNet-C, by
quantization bit-width. \ours{} uses
$\lambda{=}0.5$; baselines as in Table~\ref{tab:main}; seed 0. Best per (bit-width, corruption)
in \textbf{bold}.}
\label{tab:t1percorr}
\resizebox{\textwidth}{!}{%
\begin{tabular}{l|l|rrrrrrrrrrrrrrr|r}
\toprule
BW & Method & Gauss & Shot & Impul. & Defoc. & Glass & Motion & Zoom & Snow & Frost & Fog & Bright. & Contr. & Elastic & Pixel & JPEG & Avg. \\
\midrule
 & No Adapt & 14.52 & 13.42 & 13.76 & 27.42 & 18.56 & 27.20 & 27.90 & 25.60 & 35.30 & 37.82 & 63.76 & 4.78 & 28.90 & 35.30 & 51.02 & 28.35 \\
 & T3A & 14.36 & 13.32 & 13.86 & 27.24 & 18.40 & 27.50 & 28.00 & 26.18 & 35.56 & 38.62 & 63.84 & 4.72 & 29.14 & 35.50 & 51.20 & 28.50 \\
 & FOA & 16.74 & 15.06 & 15.66 & 30.20 & 20.66 & 31.48 & 31.40 & 31.46 & 38.48 & 44.92 & 64.98 & 8.40 & 32.70 & 37.76 & 52.88 & 31.52 \\
 & ZOA & 16.56 & 15.34 & 16.12 & 29.48 & 20.92 & 30.64 & 31.24 & 32.58 & 37.90 & 43.86 & 64.94 & 7.90 & 33.20 & 38.30 & 52.04 & 31.40 \\
 & NEO & 16.98 & 16.16 & 16.52 & 30.84 & 20.98 & 31.14 & 31.26 & 31.98 & 39.12 & 45.04 & 65.02 & 8.00 & 33.80 & 38.30 & 53.38 & 31.90 \\
 & FOZO & 17.72 & 16.32 & 16.92 & 28.78 & 21.40 & 31.90 & 32.14 & 33.18 & 39.94 & 43.20 & 65.24 & 7.40 & 34.90 & 39.54 & 53.70 & 32.15 \\
\rowcolor{oursgreen}\cellcolor{white}\multirow{-7}{*}{W3A3} & \textbf{\method{}} & \textbf{20.94} & \textbf{20.48} & \textbf{20.82} & \textbf{31.88} & \textbf{25.14} & \textbf{35.02} & \textbf{35.84} & \textbf{38.52} & \textbf{43.22} & \textbf{45.60} & \textbf{66.78} & \textbf{10.30} & \textbf{44.20} & \textbf{45.48} & \textbf{55.50} & \textbf{35.98} \\
\midrule
 & No Adapt & 45.30 & 45.36 & 44.98 & 41.34 & 29.54 & 43.98 & 37.24 & 49.12 & 57.12 & 60.94 & 74.66 & 21.64 & 38.46 & 56.14 & 63.70 & 47.30 \\
 & T3A & 45.18 & 45.12 & 44.98 & 41.18 & 29.58 & 44.10 & 37.32 & 49.76 & 57.02 & 61.20 & 74.58 & 22.58 & 38.72 & 56.24 & 63.76 & 47.42 \\
 & FOA & 46.02 & 45.92 & 46.46 & 43.36 & 31.28 & 46.16 & 40.96 & 53.86 & 58.34 & \textbf{64.12} & 74.74 & 36.42 & 43.30 & 57.42 & 64.62 & 50.20 \\
 & ZOA & \textbf{48.18} & 47.46 & \textbf{48.54} & 43.72 & 32.82 & 47.30 & 40.68 & 53.92 & 58.46 & 63.96 & 74.82 & 35.98 & 43.94 & 57.48 & 64.38 & 50.78 \\
 & NEO & 46.10 & 46.26 & 46.40 & 43.74 & 32.08 & 46.58 & 40.04 & 53.90 & 59.08 & 63.88 & 75.26 & 36.42 & 42.70 & 57.28 & 64.70 & 50.29 \\
 & FOZO & 45.78 & 45.22 & 45.80 & 43.20 & 32.82 & 47.52 & 41.62 & 54.80 & 59.24 & 61.38 & 74.52 & 32.52 & 45.04 & 57.44 & 64.54 & 50.10 \\
\rowcolor{oursgreen}\cellcolor{white}\multirow{-7}{*}{W4A4} & \textbf{\method{}} & 47.06 & \textbf{47.84} & 47.46 & \textbf{45.14} & \textbf{37.46} & \textbf{51.22} & \textbf{46.96} & \textbf{58.04} & \textbf{61.26} & 62.88 & \textbf{76.54} & \textbf{37.68} & \textbf{55.30} & \textbf{61.24} & \textbf{66.98} & \textbf{53.54} \\
\midrule
 & No Adapt & 44.24 & 43.14 & 44.54 & 38.30 & 30.62 & 43.10 & 35.30 & 53.62 & 60.44 & 56.22 & 75.36 & 26.66 & 39.34 & 59.90 & 64.50 & 47.69 \\
 & T3A & 44.12 & 43.12 & 44.38 & 38.30 & 30.78 & 43.14 & 35.40 & 54.04 & 60.32 & 58.36 & 75.32 & 25.36 & 39.72 & 59.92 & 64.72 & 47.80 \\
 & FOA & 44.50 & 42.78 & 44.54 & 41.90 & 34.44 & 46.44 & 39.00 & 57.34 & 61.94 & 65.22 & 75.90 & 32.76 & 44.22 & 59.90 & 65.76 & 50.44 \\
 & ZOA & 45.96 & 44.84 & 46.88 & 42.92 & 35.32 & 45.70 & 38.78 & 58.30 & 62.94 & 66.30 & 76.08 & 34.60 & 44.06 & 60.84 & 66.66 & 51.35 \\
 & NEO & 45.52 & 44.52 & 46.40 & 42.36 & 35.04 & 45.80 & 39.66 & 57.90 & 62.98 & \textbf{66.74} & 75.48 & 36.50 & 44.20 & 61.14 & 66.12 & 51.36 \\
 & FOZO & 43.38 & 43.44 & 43.44 & 41.50 & 35.90 & 46.04 & 39.70 & 56.34 & 62.38 & 61.84 & 75.28 & 30.34 & 45.28 & 60.14 & 65.90 & 50.06 \\
\rowcolor{oursgreen}\cellcolor{white}\multirow{-7}{*}{W6A6} & \textbf{\method{}} & \textbf{46.80} & \textbf{45.74} & \textbf{47.26} & \textbf{44.88} & \textbf{40.64} & \textbf{49.60} & \textbf{46.00} & \textbf{62.36} & \textbf{64.04} & 66.32 & \textbf{76.90} & \textbf{37.46} & \textbf{55.26} & \textbf{63.56} & \textbf{68.68} & \textbf{54.37} \\
\midrule
 & No Adapt & 56.34 & 55.98 & 56.44 & 46.56 & 34.28 & 52.44 & 43.64 & 61.14 & 62.20 & 67.32 & 77.04 & 25.80 & 43.78 & 65.72 & 66.70 & 54.36 \\
 & T3A & 56.28 & 55.94 & 56.32 & 46.56 & 34.56 & 52.40 & 43.74 & 61.58 & 61.54 & 66.26 & 77.02 & 23.78 & 43.98 & 65.72 & 66.86 & 54.17 \\
 & FOA & 56.22 & 55.42 & 56.74 & 48.28 & 36.34 & 53.08 & 46.10 & 63.38 & 64.72 & \textbf{71.24} & 77.78 & 54.80 & 47.52 & 66.06 & 68.12 & 57.72 \\
 & ZOA & 57.20 & 56.54 & 57.60 & \textbf{49.66} & 37.12 & 53.88 & 45.88 & 63.30 & 64.34 & 70.78 & 78.00 & 56.10 & 48.20 & 66.14 & 68.48 & 58.21 \\
 & NEO & 56.92 & 56.26 & 57.18 & 49.28 & 36.64 & 53.74 & 45.72 & 63.82 & 64.32 & 70.74 & 77.34 & \textbf{57.08} & 48.02 & 66.48 & 67.94 & 58.10 \\
 & FOZO & 57.18 & 55.56 & 56.92 & 48.76 & 37.28 & 54.30 & 47.40 & 64.28 & 65.38 & 68.36 & 77.82 & 50.52 & 50.00 & 66.54 & 68.16 & 57.90 \\
\rowcolor{oursgreen}\cellcolor{white}\multirow{-7}{*}{W8A8} & \textbf{\method{}} & \textbf{57.70} & \textbf{56.82} & \textbf{58.08} & 49.64 & \textbf{41.60} & \textbf{57.16} & \textbf{51.82} & \textbf{66.60} & \textbf{65.66} & 70.16 & \textbf{79.18} & 53.90 & \textbf{60.04} & \textbf{68.00} & \textbf{70.14} & \textbf{60.43} \\
\bottomrule
\end{tabular}%
}
\end{table*}

\begin{table*}[t]
\centering
\caption{Component ablation of \method{}: per-corruption accuracy on ImageNet-C
sev.\ 5 (W6A6 PTQ4ViT ViT-B, 5k subset, bs=64, seed 0, episodic reset;
cumulative running target moments, lagged). Companion to Table~\ref{tab:main} (lower block).}
\label{tab:t2pc}
\resizebox{\textwidth}{!}{%
\begin{tabular}{l|rrrrrrrrrrrrrrr|r}
\toprule
Method & Gauss & Shot & Impul. & Defoc. & Glass & Motion & Zoom & Snow & Frost & Fog & Bright. & Contr. & Elastic & Pixel & JPEG & Avg. \\
\midrule
  \rowcolor{ourslight}Local-only & 46.48 & 45.06 & 47.24 & 43.98 & 39.68 & 48.98 & 45.12 & 61.82 & 63.14 & 63.72 & 76.44 & 33.80 & 54.24 & 63.38 & 68.46 & 53.44 \\
  \rowcolor{ourslight}Global-only & 45.40 & 44.30 & 46.00 & 42.02 & 34.42 & 45.92 & 39.28 & 57.92 & 62.70 & \textbf{66.64} & 75.60 & 36.26 & 43.66 & 61.36 & 65.92 & 51.16 \\
  \rowcolor{oursgreen}\textbf{\method{}} & \textbf{46.80} & \textbf{45.74} & \textbf{47.26} & \textbf{44.88} & \textbf{40.64} & \textbf{49.60} & \textbf{46.00} & \textbf{62.36} & \textbf{64.04} & 66.32 & \textbf{76.90} & \textbf{37.46} & \textbf{55.26} & \textbf{63.56} & \textbf{68.68} & \textbf{54.37} \\
\bottomrule
\end{tabular}%
}
\end{table*}

\section{Sensitivity to the recalibration strength, batch size, source-moment estimation, and clamp ratio}
\label{app:sens}

Table~\ref{tab:sens} sweeps the recalibration strength $\lambda$, the test batch size, and the source-moment set size at W6A6 (B3 scope, ImageNet-C, 3-seed 0/1/2, episodic), all with the default forgetting-free cumulative moment estimate; Table~\ref{tab:sens_full} repeats the sweep with per-cell standard deviations. Because that estimate averages moments over all batches seen and never forgets, the method is essentially \emph{batch-size invariant}: accuracy stays within $0.2$ points across every batch size from $1$ to $128$ ($54.27$ at $1$, $54.47$ at $8$, $54.35$ at $64$, $54.29$ at $128$), so it does not collapse at small batches the way a noisier per-batch or fixed-rate estimate would. Even at batch size $1$ the local per-channel moments are still taken over the $N$ tokens of the single image, so $\sigma_t^\ell$ remains well-defined; only the CLS branch sees a single token per image there, and its variance ratio is kept finite by the clamp (Section~\ref{sec:method}). The frozen source moments $(\boldsymbol\mu_s,\boldsymbol\sigma_s)$ are estimated once, offline, from clean source batches, and accuracy depends only weakly on how many once there are a few hundred images; with $8$ batches ($512$ images) it is $52.90$, within about $1.5$ points of the default, and it saturates near the default level from $64$ batches onward ($54.18$ at $64$, $54.29$ at $128$, $54.33$ at $256$). The moments do not depend on the evaluation split either, since the same $12{,}800$ images drawn from the training set give $54.14$, matching the validation-set default to within $0.2$ points. The variance-ratio clamp $\rho$ is a stability rail rather than a tuned knob: sweeping $\rho\in\{2,3,5,\infty\}$ (the last removes the clamp) moves mean accuracy monotonically over a $0.5$-point range ($54.05/54.37/54.45/54.54$), so the default $\rho{=}3$ gives up at most $0.2$ points relative to no clamp while still bounding the applied affine gain over long continual streams (Proposition~\ref{prop:drift}). This benign-regime cost is repaid exactly where the clamp is meant to act. At the deployed batch size the per-channel target variance is well estimated and the clamp rarely binds; but when the per-batch token count is small or the stream is strongly class-concentrated, individual channels collapse toward a constant ($\sigma_{t,c}\!\to\!0$) and the unclamped ratio $\sigma_{s,c}/\sigma_{t,c}$ diverges. Sweeping batch size and stream order under the cumulative estimator makes the crossover explicit (Table~\ref{tab:clamp}): removing the clamp changes accuracy by at most $0.2$ points at batch sizes $64$ down to $4$, but costs $1.0$ point at batch size $1$ on an i.i.d.\ stream and $1.6$ points at batch size $1$ under a class-sorted (online-label-shift) stream. The batch-size invariance reported above relies on the clamp at the smallest batches: without it, batch-$1$ accuracy falls from $54.31$ to $53.29$. The clamp is therefore not free on the development benchmark, but it is what keeps the recalibration safe at small batch, under label shift, and over long continual streams.

\begin{table}[t]
\centering
\caption{Full \method{} sensitivity sweep, 3-seed mean$\pm$std over seeds $0/1/2$ (W6A6 ViT-B, ImageNet-C 15-corruption 5k, cumulative estimator; expanded from Table~\ref{tab:sens}). Deployed defaults in \textbf{bold}; per-cell std $\leq 0.08$.}
\label{tab:sens_full}
\footnotesize
\resizebox{\textwidth}{!}{%
\begin{tabular}{l|cccccc}
\toprule
Variance match $\lambda$ & 0 & 0.1 & 0.3 & \textbf{0.5} & 0.7 & 1.0 \\
Mean top-1 (\%) & \ms{51.20}{0.02} & \ms{52.61}{0.01} & \ms{54.00}{0.03} & \ms{\mathbf{54.35}}{0.02} & \ms{54.20}{0.06} & \ms{53.52}{0.01} \\
\midrule
Batch size & 1 & 4 & 8 & 16 & \textbf{64} & 128 \\
Mean top-1 (\%) & \ms{54.27}{0.04} & \ms{54.39}{0.04} & \ms{54.47}{0.02} & \ms{54.35}{0.01} & \ms{\mathbf{54.35}}{0.02} & \ms{54.29}{0.06} \\
\midrule
Source-stat batches & 16 & 32 & 64 & 128 & \textbf{200} & 256 \\
Mean top-1 (\%) & \ms{53.65}{0.02} & \ms{53.88}{0.08} & \ms{54.18}{0.02} & \ms{54.29}{0.06} & \ms{\mathbf{54.35}}{0.02} & \ms{54.33}{0.04} \\
\bottomrule
\end{tabular}}
\end{table}

\begin{table}[t]
\centering
\caption{\emph{Variance-ratio clamp ablation} (W6A6, 15-corruption mean, cumulative estimator, seed 0): default clamp $\rho{=}3$ vs.\ no clamp $\rho{=}\infty$, across batch size and stream order (i.i.d.\ vs.\ class-sorted). The clamp is essentially free at batch $\ge 4$ ($\le 0.2$ points), but at batch $1$ recovers $+1.0$ (i.i.d.) and $+1.6$ (class-sorted) points, where few tokens or class concentration drive per-channel $\sigma_{t,c}\!\to\!0$ and the unclamped ratio diverges; the estimator's batch-size invariance thus relies on it.}
\label{tab:clamp}
\footnotesize
\begin{tabular}{c|cc|cc}
\toprule
& \multicolumn{2}{c|}{i.i.d.\ stream} & \multicolumn{2}{c}{class-sorted stream} \\
Batch & $\rho{=}3$ & $\rho{=}\infty$ & $\rho{=}3$ & $\rho{=}\infty$ \\
\midrule
$64$ & $54.37$ & $54.54$ & $54.18$ & $54.37$ \\
$8$  & $54.49$ & $54.51$ & $53.36$ & $53.19$ \\
$4$  & $54.35$ & $54.40$ & $52.70$ & $52.53$ \\
$1$  & $\mathbf{54.31}$ & $53.29$ & $\mathbf{51.86}$ & $50.26$ \\
\bottomrule
\end{tabular}
\end{table}

\paragraph{Estimator choice.} The target moments in Equation~\ref{eq:ema} are a running average of the
per-batch moments: its running mean equals the pooled mean over all tokens, while its running standard deviation averages the per-batch standard deviations. A natural alternative is a streaming \emph{Welford} estimate, which tracks running raw moments and gives the exact pooled variance over all tokens seen (including the between-batch component). Table~\ref{tab:estimator} swaps the deployed estimator for this alternative in the local branch, the global branch, or both, at batch sizes $64$ and $1$. All four variants fall within about $0.15$ points of one another; at batch size $64$ they are statistically tied (spread $0.02$, per-cell std ${\sim}0.02$), and at batch size $1$ the deployed estimator is the best. The estimator is therefore not a sensitive knob: the local branch already pools $B\!\cdot\!N$ tokens, so its per-channel statistics are well estimated even at batch size $1$, and the variance-ratio clamp bounds the only branch (CLS) whose per-batch estimate can degenerate at batch size $1$. Since the exact pooled (Welford) estimate offers no empirical advantage, we adopt the simpler running form; it additionally covers both the cumulative (stationary) and EMA (continual) regimes with a single update, whereas a streaming Welford variance would fall back to the exponentially weighted form under drift, which conflates the moving mean with the spread.

\begin{table}[t]
\centering
\caption{\emph{Target-moment estimator ablation} (W6A6, 15-corruption mean top-1 \%, ImageNet-C 5k
subset, 3-seed mean $\pm$ std over seeds $0/1/2$). Deployed \emph{cumulative} estimator
vs.\ a streaming Welford (exact pooled) estimate, applied to the
local branch, the global (CLS) branch, or both, at batch sizes $64$ and $1$. All within ${\sim}0.15$
points; at batch size $64$ the four variants are statistically tied, and at batch size $1$ the deployed
estimator is best. Deployed configuration \textbf{bold}.}
\label{tab:estimator}
{\small
\begin{tabular}{ll|cc}
\toprule
Local est. & Global est. & bs $64$ & bs $1$ \\
\midrule
\textbf{cumulative} & \textbf{cumulative} & \ms{\mathbf{54.35}}{0.02} & \ms{\mathbf{54.27}}{0.04} \\
Welford & Welford & \ms{54.33}{0.03} & \ms{54.21}{0.02} \\
Welford & cumulative & \ms{54.33}{0.02} & \ms{54.22}{0.01} \\
cumulative & Welford & \ms{54.34}{0.03} & \ms{54.25}{0.04} \\
\bottomrule
\end{tabular}}
\end{table}

\paragraph{How much the target moments need to be pooled.} Equation~\ref{eq:ema} pools the per-channel moments across the batches of a domain. A local interface could instead be corrected with the moments of the batch at hand, at no cost in forward passes, since the hook sees the whole batch before it corrects. This is a different question from when each branch updates (below), where the current batch joins the running average rather than replacing it. Table~\ref{tab:pooling} answers it on both sides, varying the local branch only and leaving the global branch at its deployed setting. Correcting the local interfaces with the current batch's own moments costs $2.0$ points at batch size $64$ and $16.2$ at batch size $1$, and under label shift a class-dominated batch normalizes itself, falling to $42.39$ and $23.89$ at imbalance $10^{3}$ and $5{\times}10^{5}$; re-selecting $\lambda$ for that estimator recovers little ($52.76$ at $\lambda{=}0.25$), and a full match of those moments is worse still ($48.85$ at $\lambda{=}1$). Pooling across batches averages the per-channel noise down and keeps the correction a fixed map that the batch composition cannot move. In the other direction, fixing the moments at the values they converge to over a full extra pass through the same stream stays within $0.3$ points of the deployed estimate at every setting, and is below it under the strongest label shift. Moment quality is therefore not what limits accuracy.

\begin{table}[t]
\centering
\caption{\emph{How much the target moments need to be pooled} (W6A6 ViT-B, 15-corruption mean
top-1 \%). The local branch's target moments are the deployed running estimate, the moments of the
batch at hand, or the values the running estimate converges to over a full extra pass through the
stream; the global branch is the deployed estimate in every row. Correcting with the batch at hand
needs no extra forward, since the hook sees the whole batch before it corrects. Batch size $64$ and $1$: ImageNet-C 5k subset, 3-seed mean $\pm$ std, except the
converged row, which is seed 0. Label shift: full $50$k images per corruption, SAR imbalance ratio
$10^{3}$ and $5{\times}10^{5}$, seed 0. Deployed configuration \textbf{bold}.}
\label{tab:pooling}
{\small
\begin{tabular}{l|cc|cc}
\toprule
& \multicolumn{2}{c|}{i.i.d.\ stream} & \multicolumn{2}{c}{label shift} \\
Local target moments & bs $64$ & bs $1$ & $10^{3}$ & $5{\times}10^{5}$ \\
\midrule
Current batch only & \ms{52.37}{0.01} & \ms{38.11}{0.01} & $42.39$ & $23.89$ \\
\rowcolor{oursgreen}\textbf{Running, cumulative (deployed)} & \ms{\mathbf{54.35}}{0.02} & \ms{\mathbf{54.27}}{0.04} & $\mathbf{54.72}$ & $\mathbf{54.82}$ \\
Converged over an extra pass & $54.43$ & $54.53$ & $54.74$ & $54.66$ \\
\bottomrule
\end{tabular}}
\end{table}

\paragraph{When each branch updates.} The two branches differ in when they update their running moments, for the same positional reason that lets the global branch match fully (Section~\ref{sec:method}): what lies downstream of each correction. The CLS representation is terminal, consumed only by the linear head, so an error in its running moments becomes one per-dimension affine map of the CLS features shared by every sample in the batch, hence a fixed linear distortion of the head plus a class-prior offset, and no later layer compounds it; empirically the global branch is insensitive to it (the global-lagged row of Table~\ref{tab:branchlag} is within $0.04$ of the deployed design on the full stream and at batch size 1, and within $0.16$ under label shift). The global branch therefore updates first and corrects with moments that include the current batch: the correction is informative from the very first batch of a domain. A local interface instead feeds the remaining blocks and quantizers, which carry a statistics error to the logits with the downstream gain $K_{>\ell}$ (Corollary~\ref{prop:depth}) and do so differently for every token, so a class-skewed batch fed into its own moments would distort its own predictions. The local branch therefore corrects with the moments of the preceding batches and updates afterwards. Table~\ref{tab:branchlag} confirms both halves: making the global branch lag costs $2.60$ points on the first batch, making the local branch inclusive costs $0.84$ and $2.85$ points under label shift, and at convergence the three designs agree within $0.04$. The cost is a one-batch warm-up: on the first batch of a domain only the global branch acts, and after five batches accuracy is within about one point of the full-stream value.

\begin{table}[t]
\centering
\caption{\textbf{When each branch updates} (W6A6 ViT-B, ImageNet-C sev.\ 5, 15 corruptions). \emph{Deployed}: the global branch corrects with running moments that include the current batch, the local branch with the moments of the preceding batches only. The two alternatives make the branches symmetric. First batch ($K{=}1$), first five batches ($K{=}5$) and the full stream ($K{=}79$): $5$k subset, batch 64, 3-seed mean. Label shift: full $50$k images per corruption, imbalance ratio $10^{3}$ and $5{\times}10^{5}$, seed 0. Batch size 1: $5$k subset, seed 0. Mean top-1 (\%).}
\label{tab:branchlag}
{\small
\begin{tabular}{l|rrr|rr|r}
\toprule
& \multicolumn{3}{c|}{i.i.d.\ stream} & \multicolumn{2}{c|}{label shift} & \\
Design & $K{=}1$ & $K{=}5$ & full ($K{=}79$) & $10^{3}$ & $5{\times}10^{5}$ & bs 1 \\
\midrule
Deployed (global inclusive, local lagged) & 51.11 & 53.33 & \textbf{54.35} & 54.72 & 54.82 & \textbf{54.31} \\
Global branch also lagged & 48.51 & 52.59 & 54.31 & \textbf{54.75} & \textbf{54.97} & 54.30 \\
Local branch also inclusive & \textbf{53.09} & \textbf{53.47} & 54.33 & 53.88 & 51.96 & 54.24 \\
\bottomrule
\end{tabular}}
\end{table}

\paragraph{Branch and recalibration-mode decomposition.} Table~\ref{tab:factorial} varies each
branch independently over $\{$off, mean-only, mean+var$\}$ (W6A6, 5k subset, 3 seeds). Over the global branch alone ($51.20$), a mean-only local recalibration adds $+0.85$ (to $52.05$) and the local scale match a further $+2.30$ (to $54.35$), so the local branch's gain is primarily the restored quantizer scale rather than the recentering that the global branch (and NEO/FOA) already provides. Full \method{} (both branches at mean+var) is best at both batch sizes; dropping the local variance match (local mean-only) or either branch costs accuracy.

\begin{table}[t]
\centering
\caption{\emph{Branch $\times$ recalibration-mode factorial} (W6A6, 15-corruption mean top-1 \%,
ImageNet-C 5k subset, 3-seed mean$\pm$std over seeds $0/1/2$; cumulative estimator). Each branch is
off, \emph{mean} (recenter only), or \emph{mean+var} (recenter and rescale). \method{} is (local mean+var, global mean+var), \textbf{bold}; (off, off)
is No Adapt.}
\label{tab:factorial}
{\small
\begin{tabular}{ll|cc}
\toprule
Local & Global & bs $64$ & bs $1$ \\
\midrule
off       & off       & 47.68 & 47.75 \\
mean      & off       & \ms{50.55}{0.02} & \ms{50.56}{0.01} \\
mean+var  & off       & \ms{53.42}{0.04} & \ms{53.38}{0.04} \\
off       & mean      & \ms{51.15}{0.02} & \ms{49.61}{0.08} \\
off       & mean+var  & \ms{51.20}{0.02} & \ms{51.25}{0.01} \\
mean      & mean      & \ms{51.78}{0.00} & \ms{50.22}{0.01} \\
mean      & mean+var  & \ms{52.05}{0.02} & \ms{52.01}{0.01} \\
mean+var  & mean      & \ms{54.27}{0.02} & \ms{52.90}{0.05} \\
\rowcolor{oursgreen}\textbf{mean+var} & \textbf{mean+var} & \ms{\mathbf{54.35}}{0.02} & \ms{\mathbf{54.27}}{0.04} \\
\bottomrule
\end{tabular}}
\end{table}

\section{Generalization to other backbones}
\label{app:backbones}

\begin{figure}[t]
\centering
\includegraphics[width=0.45\textwidth]{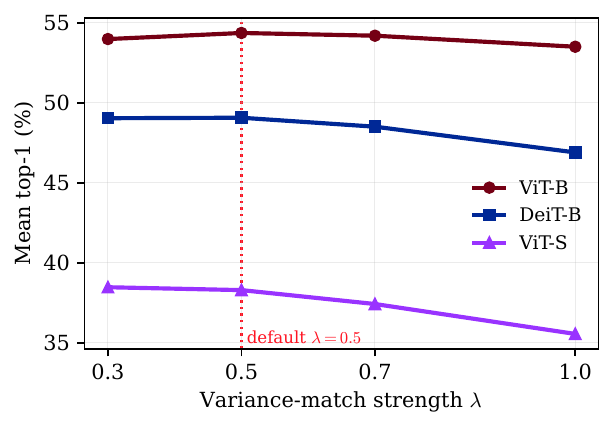}
\vspace{-10pt}
\caption{Effect of the recalibration strength $\lambda$ on W6A6 accuracy across three
backbones (ViT-B, DeiT-B, ViT-S; seed 0); the default $\lambda{=}0.5$ is marked dotted.}
\label{fig:lambda}
\end{figure}

To test generality beyond ViT-B, we apply \method{} to two further backbones at W6A6, DeiT-B and ViT-S, each freshly quantized with PTQ4ViT (their clean W6A6 accuracy is within $3$--$4$ points of full precision, comparable to ViT-B). Table~\ref{tab:backbones} compares against every baseline of Table~\ref{tab:main}. With the same configuration as the main experiments ($\lambda=0.5$), \method{} is the best method on all three backbones, and seed-stable (per-backbone std $\leq 0.06$, against up to $0.07$ for FOA and $0.65$ for FOZO). This is where the partial recalibration matters: the full match ($\lambda=1$) over-corrects the per-channel scale on these backbones, most visibly on the noise corruptions, and the local branch then fails to help; the gentler $\lambda=0.5$ recovers a consistent gain (Figure~\ref{fig:lambda}; e.g.\ ViT-S $35.55 \to 38.29$ at seed $0$, ahead of NEO by $+2.2$ in Table~\ref{tab:backbones}) while being neutral-to-better on ViT-B. The local recalibration thus transfers across backbones once its strength is moderated.

\section{Applying \method{} to a convolutional backbone (CNN)}
\label{app:cnn}

This paper targets quantized ViTs, but \method{} is defined at a frozen activation quantizer rather than at anything specific to a transformer, so it should carry over to a convolutional network. The premise carries over first. A frozen activation quantizer has its range fixed to the source distribution, a test-time shift moves the per-channel activations within that range, and the network's own normalization does not undo it, since GroupNorm acts within each sample across channel groups whereas the drift that mis-ranges the quantizer is per channel across the batch. The two act along different axes, exactly as Section~\ref{sec:method} argues for LayerNorm and tokens. Both branches then carry over by naming the convolutional counterparts of the tensors they precede: the local branch recalibrates the activations entering the quantizers of the last stage (\texttt{layer4}, 8 activation quantizers here), and the global branch recalibrates the pooled feature entering the classifier, the analogue of the CLS representation at the head input. Per-channel statistics are reduced over batch, height and width, the conv counterpart of the reduction over batch and tokens, and we transfer the placement rule of Section~\ref{app:scope}, recalibrate the last stage, rather than re-deriving a scope here.

We use the ResNet-50 variant that prior test-time adaptation work evaluates on \citep{niu2023towards,niu2024test,spa2025} (timm \texttt{resnet50\_gn}, which normalizes with GroupNorm). PTQ4ViT is designed for transformers, so we quantize with MQBench \citep{mqbench2021}, the standard post-training quantization toolkit for convolutional networks, at W6A6 on its Academic backend with 32 calibration batches and its default observers. MQBench inserts each activation quantizer as its own module, so \method{} attaches to precisely the tensor entering a frozen quantizer, with no change to the quantizer itself. The quantized model reaches $79.44$ clean top-1 against $80.25$ in full precision.

On ImageNet-C at severity 5 over all 15 corruptions (5k subset, batch 64, 3 seeds), the deployed configuration, $\lambda{=}0.5$ selected on ViT-B and left unchanged, lifts accuracy from $30.12$ to \ms{31.40}{0.05} and leads both parameter-free baselines, NEO at \ms{30.79}{0.00} and T3A at \ms{28.05}{0.02}. T3A landing below No Adapt is the behaviour Table~\ref{tab:main} already reports on ViT, where it stays within $0.15$ of No Adapt at the three lower bit-widths and falls $0.20$ below it at W8A8.

The recalibration strength transfers in form but not in value. Sweeping it over the same corruptions (seed 0) gives $33.86$, $\mathbf{33.91}$, $32.67$, $31.38$, $28.59$ and $26.54$ at $\lambda = 0.1$, $0.25$, $0.4$, $0.5$, $0.75$ and $1.0$, so the curve is single-peaked with its optimum at $0.25$, where a 3-seed run reaches \ms{33.91}{0.01}, a gain of $3.79$ points. This continues the trend already visible across transformers, where ViT-B peaks at $0.5$ while the smaller ViT-S and distilled DeiT-B peak at $0.3$ (Section~\ref{sec:hpsens}). A full match stays harmful, $\lambda=1$ falling to $26.54$, which is $3.58$ below No Adapt and is the shrinkage effect of Proposition~\ref{prop:shrink} appearing again away from transformers. The component structure reproduces as well: at $\lambda=0.25$ the global branch alone reaches $32.66$, the local branch alone \ms{33.25}{0.06}, and the two together \ms{33.91}{0.01}, so both branches contribute, the local branch is the larger driver, and the combination is best, which is the pattern of the ViT ablation in Section~\ref{sec:ablation}. What carries over is therefore not only the sign of the gain but the shape of the method.

The same configuration applies unchanged to a BatchNorm ResNet-50, where it lifts accuracy from $17.32$ to \ms{24.57}{0.06} on the same 15 corruptions. We note this only to show that nothing in the method depends on the normalization layer. A BatchNorm network additionally stores per-channel running statistics, which opens adaptation routes that a GroupNorm backbone does not have and that are outside the scope of this paper. FORGE \citep{forge2026} takes that route on folded integer-only CNNs.

\section{Recalibration scope}
\label{app:scope}

The scope ablation in the main text (Table~\ref{tab:scope}) is run in the Local-only configuration (no global CLS branch; ImageNet-C severity 5, 15 corruptions, 5k subset, batch size 64, 3-seed (0/1/2), episodic; No Adapt is 47.68 at W6A6 and 47.30 at W4A4); here we give the full reasoning behind it. Four observations drive the scope choice. (i) Recalibrating the full network collapses accuracy (31.93 at W6A6, vs.\ $53.42$ for the late-block default), because the affine corrections compound block by block; the gain comes from recalibrating only the late blocks. (ii) Within the late-block region accuracy rises with the number of trailing blocks up to four and then \emph{plateaus}: blocks 8--11 reach $53.42$, and extending to blocks 7--11, 6--11, or 5--11 does not improve on it (each within $0.15$ points, none exceeding it) before it declines, while each added block recalibrates four more interfaces. We therefore fix the \emph{smallest} scope on this plateau, blocks 8--11 (16 interfaces): the plateau accuracy at the lowest recalibration overhead. (iii) Including the post-GELU \texttt{mlp.fc2} interface helps at every bit-width: at the blocks 8--11 default it adds $+1.56$ (W6A6) and $+1.60$ (W4A4) over the \{qkv, proj, fc1\} set (module axis), $+2.33$ and $+1.10$ at W3A3 and W8A8 (Table~\ref{tab:scope4bw}), and the gain holds at every multi-block depth (Table~\ref{tab:fc2depth}). (iv) Recalibration must target the \emph{late} blocks: applying the same four-block, all-module recalibration to the early (blocks 0--3) or middle (blocks 4--7) blocks collapses accuracy to $32.12$ and $43.29$, far \emph{below} No Adapt ($47.68$), whereas the late blocks (8--11) reach $53.42$. A correction inserted early must propagate through every subsequent quantized layer, which re-quantizes and compounds it (the mechanism behind the all-blocks collapse); a late correction sits just before the head and directly restores the representation the classifier consumes, with few downstream layers to compound through. Corollary~\ref{prop:depth} formalizes this: a correction's residual at interface $\ell$ reaches the logits scaled by the downstream gain $K_{>\ell}$, which we measure to fall ${\sim}4.4\times$ from the early to the late blocks (Figure~\ref{fig:dgain}). We therefore recalibrate the last four blocks at all four interfaces. We note that \texttt{mlp.fc2}, whose pre-quant statistics a calibration-reliability check flags as atypical (large dynamic range, most values near a quantizer boundary), is nonetheless beneficial to recalibrate: excluding it, as an earlier four-corruption pilot had suggested, leaves accuracy on the table.

\begin{table}[t]
\centering
\caption{Full per-depth contrast of recalibrating the post-GELU \texttt{mlp.fc2} interface
(Local-only, W6A6; mean top-1 \%, seed 0). Each row recalibrates
the trailing blocks at \{qkv, proj, fc1\}, without (w/o
\texttt{fc2}) vs.\ with the \texttt{fc2} interface added; $\Delta$ is the \texttt{fc2} gain.
The deployed blocks 8--11 configuration is in \textbf{bold}.}
\label{tab:fc2depth}
{\small
\begin{tabular}{l|rr|r}
\toprule
Trailing blocks & w/o \texttt{fc2} & with \texttt{fc2} & $\Delta$ \\
\midrule
1 (block 11) & 49.08 & 48.55 & $-0.53$ \\
2 (blocks 10--11) & 50.88 & 51.48 & $+0.60$ \\
3 (blocks 9--11) & 51.34 & 52.59 & $+1.25$ \\
4 (blocks 8--11) & 51.85 & \textbf{53.44} & $+1.59$ \\
5 (blocks 7--11) & 52.10 & 53.39 & $+1.29$ \\
6 (blocks 6--11) & 51.79 & 53.31 & $+1.52$ \\
7 (blocks 5--11) & 51.46 & 53.37 & $+1.91$ \\
all 12 blocks & 31.86 & 31.86 & $0.00$ \\
\bottomrule
\end{tabular}}
\end{table}

\paragraph{The scope ablation at all four bit-widths.}
Table~\ref{tab:scope} reports the ablation at W4A4 and W6A6; Table~\ref{tab:scope4bw} repeats every
setting at W3A3 and W8A8 under the same recipe (Local-only, $\lambda{=}0.5$, cumulative, 15
corruptions, 5k subset, batch 64, episodic, seeds 0/1/2). The deployed scope reproduces the
Local-only row of Table~\ref{tab:main} at each bit-width, $35.30$ and $59.11$ at W3A3 and W8A8.
The three conclusions hold throughout. Recalibrating the early or middle blocks falls below No Adapt
at every bit-width, by $7.3$ and $2.1$ points at W3A3 and by $17.2$ and $4.5$ at W8A8; recalibrating
all twelve blocks collapses at every bit-width, to $22.47$ at W3A3 and $32.64$ at W8A8; and the
post-GELU \texttt{mlp.fc2} interface helps at every bit-width, adding $2.33$, $1.60$, $1.56$ and
$1.10$ points from W3A3 to W8A8, its contribution growing as the grid coarsens.
Along the depth axis the four trailing blocks we deploy are the best depth at W4A4, W6A6 and W8A8.
At W3A3 five blocks is $0.39$ points higher ($35.69$ against $35.30$) and the plateau runs from four
to seven blocks ($35.30$, $35.69$, $35.24$, $35.21$); taking the smallest scope on the plateau
therefore costs $0.39$ points there, against four more recalibrated interfaces.

\begin{table}[t]
\centering
\caption{Recalibration-scope ablation at all four bit-widths, Local-only with no global CLS branch,
15-corruption mean top-1 (\%) on the 5k subset, batch 64, episodic, 3-seed mean $\pm$ std. Deployed
default configuration \textbf{bold}; W4A4 and W6A6 reproduce Table~\ref{tab:scope}.}
\label{tab:scope4bw}
{\small
\begin{tabular}{l|cccc}
\toprule
Setting & W3A3 & W4A4 & W6A6 & W8A8 \\
\midrule
No Adapt & \ms{28.35}{0.00} & \ms{47.30}{0.00} & \ms{47.68}{0.00} & \ms{54.36}{0.00} \\
\midrule
\multicolumn{5}{l}{\emph{Modules} (blocks 8--11)} \\
\quad \texttt{attn.qkv}+\texttt{attn.proj} & \ms{31.75}{0.02} & \ms{50.18}{0.05} & \ms{50.72}{0.05} & \ms{57.10}{0.02} \\
\quad\quad{}+\,\texttt{mlp.fc1} & \ms{32.97}{0.04} & \ms{51.27}{0.05} & \ms{51.86}{0.05} & \ms{58.01}{0.01} \\
\rowcolor{oursgreen}\quad\quad{}+\,\texttt{mlp.fc2} & \ms{\mathbf{35.30}}{0.06} & \ms{\mathbf{52.87}}{0.03} & \ms{\mathbf{53.42}}{0.04} & \ms{\mathbf{59.11}}{0.02} \\
\midrule
\multicolumn{5}{l}{\emph{Block position} (4 blocks, all modules)} \\
\quad early (blocks 0--3) & \ms{21.03}{0.04} & \ms{30.93}{0.08} & \ms{32.12}{0.04} & \ms{37.13}{0.02} \\
\quad middle (blocks 4--7) & \ms{26.27}{0.09} & \ms{42.09}{0.01} & \ms{43.29}{0.04} & \ms{49.88}{0.06} \\
\quad late (blocks 8--11) & \ms{\mathbf{35.30}}{0.06} & \ms{\mathbf{52.87}}{0.03} & \ms{\mathbf{53.42}}{0.04} & \ms{\mathbf{59.11}}{0.02} \\
\midrule
\multicolumn{5}{l}{\emph{Trailing-block depth} (all modules)} \\
\quad 1 block (11) & \ms{28.71}{0.04} & \ms{48.31}{0.03} & \ms{48.54}{0.02} & \ms{55.67}{0.02} \\
\quad 2 blocks (10--11) & \ms{32.66}{0.04} & \ms{50.85}{0.03} & \ms{51.53}{0.05} & \ms{57.87}{0.03} \\
\quad 3 blocks (9--11) & \ms{34.16}{0.02} & \ms{52.03}{0.03} & \ms{52.56}{0.05} & \ms{58.48}{0.00} \\
\rowcolor{oursgreen}\quad 4 blocks (8--11) & \ms{\mathbf{35.30}}{0.06} & \ms{\mathbf{52.87}}{0.03} & \ms{\mathbf{53.42}}{0.04} & \ms{\mathbf{59.11}}{0.02} \\
\quad 5 blocks (7--11) & \ms{35.69}{0.07} & \ms{52.75}{0.02} & \ms{53.39}{0.07} & \ms{58.70}{0.04} \\
\quad 6 blocks (6--11) & \ms{35.24}{0.11} & \ms{52.11}{0.06} & \ms{53.28}{0.03} & \ms{58.21}{0.11} \\
\quad 7 blocks (5--11) & \ms{35.21}{0.07} & \ms{51.82}{0.05} & \ms{53.35}{0.02} & \ms{57.91}{0.06} \\
\quad all 12 blocks & \ms{22.47}{0.03} & \ms{28.24}{0.04} & \ms{31.93}{0.07} & \ms{32.64}{0.12} \\
\bottomrule
\end{tabular}}
\end{table}

\section{Generalization beyond ImageNet-C: ImageNet-R, Sketch, \texorpdfstring{\cbar}{C-bar}, and 3DCC}
\label{app:generalization}

We evaluate two non-corruption shifts: ImageNet-R \citep{hendrycks2021many} (renditions) and ImageNet-Sketch \citep{wang2019learning}. Table~\ref{tab:t4app} reports top-1 accuracy across all bit-widths. \method{} is the best method on ImageNet-Sketch at \emph{every} bit-width, and it also improves over the no-adaptation baseline on ImageNet-R at every bit-width. On ImageNet-R the recentering baselines edge ahead by a small margin: \ours{} trails the best method by $0.11$ at W3A3, $0.37$ at W4A4, $0.65$ at W6A6 and $0.40$ at W8A8, while remaining the strongest method overall. The partial recalibration ($\lambda=0.5$) is what enables this: the full match ($\lambda=1$) over-corrects the rendition-style per-channel statistics and pushes ImageNet-R below No Adapt, whereas the gentler $\lambda=0.5$ keeps ImageNet-R above No Adapt at every bit-width while preserving the corruption and sketch gains. \ours{} therefore no longer trades ImageNet-R for its other gains.

\paragraph{Held-out corruption suites: ImageNet-\texorpdfstring{\cbar}{C-bar} and ImageNet-3DCC.}
We further test two corruption benchmarks disjoint from the ImageNet-C corruptions used to choose \method{}'s configuration: ImageNet-\cbar{} \citep{mintun2021interaction} (10 algorithmic corruptions designed to be perceptually dissimilar from ImageNet-C) and ImageNet-3DCC \citep{kar2022threedcc} (12 corruptions synthesized using 3D scene geometry). Both use severity 5 on the 1000-class RobustBench \citep{croce2021robustbench} 5k subset, batch size 64, episodic, with the same frozen $\lambda{=}0.5$ configuration; the combined Table~\ref{tab:gen_full} reports 3-seed mean$\pm$std (seeds 0/1/2) for all four benchmarks. \method{} is the best method at every bit-width on both suites: over No Adapt it adds $+5.5$ to $+7.8$ on \cbar{} and $+2.8$ to $+3.4$ on 3DCC, so the local recalibration restores accuracy even on corruption types it was never tuned on.

\begin{table}[t]
\centering
\caption{Full per-method results on the four OOD generalization benchmarks (quantized ViT-B;
3-seed mean$\pm$std over seeds 0/1/2). \cbar{}:
ImageNet-\cbar{} (10 algorithmic corruptions); R: ImageNet-R; Sk.: ImageNet-Sketch; 3DCC:
ImageNet-3DCC (12 3D common corruptions). Best per column \textbf{bold}, second
\underline{underlined}.}
\label{tab:gen_full}
{\small
\begin{tabular}{l|cccc}
\toprule
Method & W3A3 & W4A4 & W6A6 & W8A8 \\
\midrule
\multicolumn{5}{l}{\emph{\cbar} (ImageNet-\cbar{})} \\
\quad No Adapt & \ms{36.63}{0.00} & \ms{55.65}{0.01} & \ms{59.07}{0.00} & \ms{63.94}{0.00} \\
\quad T3A & \ms{36.82}{0.03} & \ms{56.09}{0.03} & \ms{59.83}{0.02} & \ms{64.54}{0.04} \\
\quad FOA & \ms{40.86}{0.07} & \ms{59.92}{0.07} & \ms{63.40}{0.09} & \ms{67.85}{0.11} \\
\quad ZOA & \ms{40.65}{0.17} & \ms{59.97}{0.16} & \ms{\underline{64.02}}{0.08} & \ms{67.84}{0.06} \\
\quad NEO & \ms{41.17}{0.02} & \ms{\underline{60.11}}{0.02} & \ms{63.94}{0.04} & \ms{67.80}{0.01} \\
\quad FOZO & \ms{\underline{41.57}}{0.14} & \ms{59.90}{0.13} & \ms{61.67}{0.20} & \ms{\underline{68.13}}{0.10} \\
\rowcolor{oursgreen}\quad \textbf{\method{}} & \ms{\mathbf{44.92}}{0.10} & \ms{\mathbf{61.91}}{0.13} & \ms{\mathbf{65.45}}{0.02} & \ms{\mathbf{69.32}}{0.06} \\
\midrule
\multicolumn{5}{l}{\emph{R} (ImageNet-R)} \\
\quad No Adapt & \ms{42.48}{0.01} & \ms{53.29}{0.00} & \ms{55.89}{0.00} & \ms{58.19}{0.00} \\
\quad T3A & \ms{38.27}{0.04} & \ms{50.00}{0.15} & \ms{52.74}{0.06} & \ms{56.40}{0.05} \\
\quad FOA & \ms{43.27}{0.19} & \ms{53.74}{0.14} & \ms{56.17}{0.10} & \ms{59.01}{0.20} \\
\quad ZOA & \ms{43.18}{0.25} & \ms{\mathbf{54.69}}{0.21} & \ms{\underline{56.92}}{0.06} & \ms{\underline{59.26}}{0.08} \\
\quad NEO & \ms{\mathbf{44.01}}{0.01} & \ms{\underline{54.40}}{0.01} & \ms{\mathbf{57.25}}{0.01} & \ms{59.21}{0.04} \\
\quad FOZO & \ms{43.02}{0.22} & \ms{53.23}{0.17} & \ms{54.72}{0.24} & \ms{\mathbf{59.37}}{0.11} \\
\rowcolor{oursgreen}\quad \textbf{\method{}} & \ms{\underline{43.90}}{0.04} & \ms{54.32}{0.06} & \ms{56.60}{0.12} & \ms{58.97}{0.03} \\
\midrule
\multicolumn{5}{l}{\emph{Sk.} (ImageNet-Sketch)} \\
\quad No Adapt & \ms{25.01}{0.00} & \ms{37.29}{0.00} & \ms{40.13}{0.00} & \ms{43.71}{0.00} \\
\quad T3A & \ms{27.63}{0.13} & \ms{\underline{40.67}}{0.11} & \ms{\underline{44.08}}{0.08} & \ms{\underline{47.29}}{0.02} \\
\quad FOA & \ms{27.83}{0.08} & \ms{39.44}{0.16} & \ms{42.27}{0.14} & \ms{45.76}{0.01} \\
\quad ZOA & \ms{27.99}{0.11} & \ms{40.56}{0.20} & \ms{42.81}{0.18} & \ms{45.92}{0.05} \\
\quad NEO & \ms{28.22}{0.01} & \ms{39.76}{0.01} & \ms{42.78}{0.01} & \ms{45.63}{0.01} \\
\quad FOZO & \ms{\underline{29.47}}{0.07} & \ms{40.45}{0.07} & \ms{42.51}{0.34} & \ms{46.67}{0.12} \\
\rowcolor{oursgreen}\quad \textbf{\method{}} & \ms{\mathbf{32.10}}{0.05} & \ms{\mathbf{42.99}}{0.03} & \ms{\mathbf{45.65}}{0.04} & \ms{\mathbf{48.88}}{0.03} \\
\midrule
\multicolumn{5}{l}{\emph{3DCC} (ImageNet-3DCC)} \\
\quad No Adapt & \ms{34.21}{0.00} & \ms{47.75}{0.00} & \ms{47.68}{0.00} & \ms{52.49}{0.00} \\
\quad T3A & \ms{34.18}{0.01} & \ms{47.69}{0.01} & \ms{47.63}{0.01} & \ms{52.48}{0.01} \\
\quad FOA & \ms{35.55}{0.16} & \ms{48.75}{0.05} & \ms{48.69}{0.03} & \ms{53.45}{0.04} \\
\quad ZOA & \ms{35.78}{0.03} & \ms{\underline{49.09}}{0.09} & \ms{49.31}{0.03} & \ms{53.66}{0.08} \\
\quad NEO & \ms{35.87}{0.02} & \ms{48.87}{0.03} & \ms{\underline{49.40}}{0.01} & \ms{53.65}{0.02} \\
\quad FOZO & \ms{\underline{35.88}}{0.06} & \ms{48.93}{0.10} & \ms{48.50}{0.08} & \ms{\underline{53.95}}{0.06} \\
\rowcolor{oursgreen}\quad \textbf{\method{}} & \ms{\mathbf{37.50}}{0.05} & \ms{\mathbf{50.61}}{0.04} & \ms{\mathbf{51.20}}{0.05} & \ms{\mathbf{55.45}}{0.05} \\
\bottomrule
\end{tabular}}
\end{table}

\paragraph{Held-out ImageNet-C corruptions.}
As a final within-family generalization test, we evaluate the four \emph{extra} ImageNet-C corruptions absent from the 15 standard corruptions used throughout the paper, hence unseen when \method{}'s scope and $\lambda$ were chosen: Gaussian blur, saturate, spatter, and speckle noise (severity 5; same 5k subset, batch size 64, episodic, $\lambda{=}0.5$). Table~\ref{tab:heldc_full} reports 3-seed mean$\pm$std. \method{} is again the best method at every bit-width, improving over No Adapt by $+3.3$ to $+6.7$, so the recalibration generalizes to corruption types it never saw during method design.

\begin{table}[t]
\centering
\caption{Held-out ImageNet-C: the four \emph{extra} ImageNet-C corruptions (Gaussian blur, saturate,
spatter, speckle noise) not in the standard 15, sev.\ 5; mean over the 4 corruptions. Quantized
ViT-B; 3-seed mean$\pm$std over seeds 0/1/2. Best
per column \textbf{bold}, second \underline{underlined}.}
\label{tab:heldc_full}
{\small
\begin{tabular}{l|cccc}
\toprule
Method & W3A3 & W4A4 & W6A6 & W8A8 \\
\midrule
No Adapt & \ms{35.76}{0.00} & \ms{56.53}{0.00} & \ms{56.73}{0.00} & \ms{62.74}{0.00} \\
T3A & \ms{35.84}{0.02} & \ms{56.53}{0.01} & \ms{56.74}{0.01} & \ms{62.79}{0.04} \\
FOA & \ms{38.28}{0.08} & \ms{57.95}{0.23} & \ms{57.88}{0.11} & \ms{63.85}{0.06} \\
ZOA & \ms{38.45}{0.19} & \ms{58.33}{0.11} & \ms{58.83}{0.07} & \ms{64.25}{0.12} \\
NEO & \ms{38.90}{0.04} & \ms{\underline{58.26}}{0.07} & \ms{\underline{58.93}}{0.01} & \ms{64.11}{0.03} \\
FOZO & \ms{\underline{39.31}}{0.12} & \ms{58.10}{0.24} & \ms{57.40}{0.19} & \ms{\underline{64.39}}{0.14} \\
\rowcolor{oursgreen}\textbf{\method{}} & \ms{\mathbf{43.08}}{0.22} & \ms{\mathbf{60.40}}{0.12} & \ms{\mathbf{60.83}}{0.07} & \ms{\mathbf{65.91}}{0.05} \\
\bottomrule
\end{tabular}}
\end{table}

\section{Generalization beyond ImageNet: CIFAR-10-C, CIFAR-100-C and DomainNet-126}
\label{app:beyond}

The benchmarks above all derive from ImageNet, so we also evaluate two datasets outside that family: CIFAR-10-C and CIFAR-100-C, which repeat the ImageNet-C corruptions on a different dataset, and DomainNet-126, where the shift is a change of domain rather than a corruption.

\paragraph{CIFAR-10-C and CIFAR-100-C.}
We repeat the ImageNet-C protocol on CIFAR-10-C and CIFAR-100-C \citep{hendrycks2019benchmarking}, with ViT-B/16 backbones fine-tuned on CIFAR-10 and CIFAR-100 (the public Hugging Face checkpoints \texttt{nateraw/vit-base-patch16-224-cifar10}, which NEO also evaluates on, and \texttt{pkr7098/cifar100-vit-base-patch16-224-in21k}) and images upscaled to $224\times224$; the full-precision models reach $98.52$ and $92.61$ clean top-1. Quantization follows the main protocol: PTQ4ViT at W6A6/W8A8 with Hessian-guided calibration on $32$ training images, AdaLog at W4A4/W3A3 with calibration and block reconstruction on training images, and source moments accumulated over a seeded $12{,}800$-image training subset ($200$ batches of $64$, as for ImageNet); the ZOA source statistics are likewise computed on the training split. Evaluation uses the $15$ standard corruptions at severity $5$, the full $10{,}000$ images per corruption, batch $64$, episodic reset, and three sample-order seeds.

\paragraph{DomainNet-126.}
We follow the 126-class protocol of AdaContrast \citep{chen2022contrastive} on DomainNet \citep{peng2019moment}: the source domain is \emph{real} and the three target domains are \emph{clipart}, \emph{painting} and \emph{sketch}, each evaluated in full ($18{,}523$, $30{,}042$ and $24{,}147$ images). The protocol needs a model trained on the source domain, so we fine-tune one on \emph{real} alone with a stock recipe: a \texttt{vit\_base\_patch16\_224.augreg\_in21k} backbone fine-tuned for five epochs on a seeded $90\%$ split (AdamW, learning rate $5{\times}10^{-5}$, weight decay $0.05$, $10\%$ warm-up then cosine decay, batch $64$, label smoothing $0.1$, mixed precision), reaching $94.28$ top-1 on the held-out $10\%$; that held-out split also supplies the PTQ calibration data and the source moments. Quantization follows the main protocol, PTQ4ViT at W6A6/W8A8 and AdaLog at W4A4/W3A3, and leaves the clean held-out accuracy at $93.58$, $93.65$ and $91.41$ at W6A6, W4A4 and W3A3. Adaptation is episodic per target domain, at batch $64$ and three sample-order seeds. On both benchmarks no test image enters calibration or statistics, \method{} keeps the fixed configuration of the main text, blocks 8--11 at all four interfaces, $\lambda{=}0.5$, cumulative moments, and we run the full baseline set of Table~\ref{tab:main}, each in its published configuration. Every method adapts the same frozen quantized checkpoint, so nothing in how that checkpoint was obtained can favour one of them.

\paragraph{Results.}
Table~\ref{tab:beyond} reports both benchmarks. \method{} is the most accurate method in all twelve dataset$\times$bit-width cells, ahead of the runner-up by $0.6$ to $3.7$ points and with a seed std of at most $0.04$. The runner-up itself changes with the dataset: FOZO on CIFAR-10 at every bit-width and on CIFAR-100 at W4A4/W3A3, ZOA on CIFAR-100 at W6A6/W8A8, and T3A on DomainNet-126 at every bit-width. On CIFAR the gain over No Adapt grows monotonically as precision falls, from $+2.9$ to $+5.7$ on CIFAR-10 and from $+3.7$ to $+6.5$ on CIFAR-100, the pattern of ImageNet-C, and on DomainNet-126 it grows the same way within each quantizer, from $+2.2$ to $+4.4$ under PTQ4ViT and from $+3.6$ to $+5.3$ under AdaLog. On DomainNet-126 the gain is concentrated where the shift is largest: on \emph{sketch} \method{} adds $5.5$ to $9.4$ points over No Adapt, whereas on \emph{painting} at W8A8, the mildest of the three, it matches No Adapt, the pattern we also report on ImageNet-R (Appendix~\ref{app:generalization}). Per corruption on CIFAR (seed 0) the largest gains are on pixelate, glass blur, impulse noise and Gaussian noise; on CIFAR-10 at W3A3 pixelate rises from $44.7$ to $69.3$ and impulse noise from $53.5$ to $70.9$.

\paragraph{The baselines reorder.}
T3A, which barely moves on ImageNet-C ($+0.1$ at W6A6 in Table~\ref{tab:main}), adds $0.2$ to $1.8$ points on CIFAR and is the runner-up on DomainNet-126 at every bit-width, so the ImageNet-C ordering does not carry over. FOA and FOZO instead fall below No Adapt at some DomainNet-126 bit-widths ($61.86$ and $56.95$ at W6A6, $52.25$ for FOZO at W3A3), while FOZO is the strongest baseline on CIFAR. Neither method's paper evaluates CIFAR or DomainNet and we ran both unchanged: FOA at the two-forward budget our protocol imposes on every optimization-based baseline, which costs it accuracy here as it does on ImageNet-C (Appendix~\ref{app:foa28} measures the recommended population), and FOZO with the three prompt tokens its authors fix, having argued that tuning the prompt count on unseen data is not practical at test time. Inserting three randomly initialized prompt tokens is what costs it accuracy on the DomainNet-126 backbone; we report that setting rather than re-tune per dataset.

\begin{table}[t]
\centering
\caption{Generalization beyond ImageNet: CIFAR-10-C and CIFAR-100-C (severity 5, 15 corruptions, full
$10$k images per corruption) and DomainNet-126 (mean over the three target domains, \emph{real} as
source), each on a ViT-B/16 fine-tuned on its own source data and quantized as in the main protocol;
batch 64, episodic, 3-seed mean $\pm$ std over sample order. Best \textbf{bold}, second
\underline{underlined}.}
\label{tab:beyond}
{\small
\begin{tabular}{l|cccc}
\toprule
Method & W3A3 & W4A4 & W6A6 & W8A8 \\
\midrule
\multicolumn{5}{l}{\emph{CIFAR-10-C}} \\
\quad No Adapt & \ms{76.34}{0.00} & \ms{81.05}{0.00} & \ms{82.86}{0.00} & \ms{84.39}{0.00} \\
\quad T3A & \ms{76.67}{0.01} & \ms{81.29}{0.01} & \ms{83.15}{0.01} & \ms{84.54}{0.01} \\
\quad FOA & \ms{78.04}{0.05} & \ms{82.22}{0.02} & \ms{83.84}{0.05} & \ms{85.19}{0.03} \\
\quad ZOA & \ms{78.35}{0.03} & \ms{82.57}{0.07} & \ms{84.54}{0.11} & \ms{85.82}{0.05} \\
\quad NEO & \ms{78.42}{0.01} & \ms{82.42}{0.00} & \ms{84.05}{0.01} & \ms{85.26}{0.01} \\
\quad FOZO & \ms{\underline{79.93}}{0.03} & \ms{\underline{83.69}}{0.10} & \ms{\underline{85.03}}{0.08} & \ms{\underline{86.09}}{0.02} \\
\rowcolor{oursgreen}\quad \textbf{\method{}} & \ms{\mathbf{82.05}}{0.03} & \ms{\mathbf{85.34}}{0.00} & \ms{\mathbf{85.84}}{0.02} & \ms{\mathbf{87.30}}{0.01} \\
\midrule
\multicolumn{5}{l}{\emph{CIFAR-100-C}} \\
\quad No Adapt & \ms{54.15}{0.00} & \ms{60.85}{0.00} & \ms{63.38}{0.00} & \ms{65.34}{0.00} \\
\quad T3A & \ms{55.98}{0.02} & \ms{62.00}{0.02} & \ms{64.56}{0.02} & \ms{66.26}{0.03} \\
\quad FOA & \ms{55.61}{0.05} & \ms{61.61}{0.06} & \ms{63.84}{0.03} & \ms{65.79}{0.02} \\
\quad ZOA & \ms{54.87}{0.08} & \ms{61.44}{0.11} & \ms{\underline{65.09}}{0.33} & \ms{\underline{66.67}}{0.19} \\
\quad NEO & \ms{56.05}{0.00} & \ms{61.95}{0.01} & \ms{64.30}{0.01} & \ms{66.05}{0.01} \\
\quad FOZO & \ms{\underline{56.93}}{0.17} & \ms{\underline{62.78}}{0.02} & \ms{64.84}{0.02} & \ms{66.44}{0.01} \\
\rowcolor{oursgreen}\quad \textbf{\method{}} & \ms{\mathbf{60.64}}{0.03} & \ms{\mathbf{66.15}}{0.02} & \ms{\mathbf{67.33}}{0.01} & \ms{\mathbf{69.07}}{0.01} \\
\midrule
\multicolumn{5}{l}{\emph{DomainNet-126} (mean over clipart, painting, sketch)} \\
\quad No Adapt & \ms{54.40}{0.00} & \ms{63.27}{0.00} & \ms{62.45}{0.00} & \ms{67.65}{0.00} \\
\quad T3A & \ms{\underline{57.29}}{0.04} & \ms{\underline{65.43}}{0.03} & \ms{\underline{65.00}}{0.04} & \ms{\underline{69.28}}{0.01} \\
\quad FOA & \ms{55.40}{0.11} & \ms{64.18}{0.13} & \ms{61.86}{0.16} & \ms{68.05}{0.11} \\
\quad ZOA & \ms{56.19}{0.13} & \ms{64.71}{0.03} & \ms{64.58}{0.04} & \ms{68.74}{0.03} \\
\quad NEO & \ms{56.48}{0.01} & \ms{64.71}{0.01} & \ms{64.56}{0.01} & \ms{68.75}{0.01} \\
\quad FOZO & \ms{52.25}{0.95} & \ms{63.73}{0.36} & \ms{56.95}{2.22} & \ms{69.00}{0.22} \\
\rowcolor{oursgreen}\quad \textbf{\method{}} & \ms{\mathbf{59.72}}{0.04} & \ms{\mathbf{66.84}}{0.03} & \ms{\mathbf{66.85}}{0.02} & \ms{\mathbf{69.89}}{0.04} \\
\bottomrule
\end{tabular}}
\end{table}

\section{FOA at its recommended configuration ($K{=}28$)}
\label{app:foa28}
The main comparison (Table~\ref{tab:main}) runs FOA at its minimum-forward CMA-ES population ($K{=}2$, two forwards per batch), matching ZOA's minimum-forward protocol \citep{deng2025test}. A natural fairness question is whether FOA at its recommended population ($K{=}28$, the \citet{niu2024test} default) would overturn the ranking. Table~\ref{tab:foa28} re-runs FOA at $K{=}28$ (${\sim}29$ forwards per batch, ${\sim}14\times$ its minimum-forward budget) across ImageNet-C and the four generalization suites. The larger population lifts FOA by $0.2$ to $2.3$ points over its minimum-forward setting, but it remains below \method{} (one forward) at every bit-width on ImageNet-C, \cbar{}, Sketch, and 3DCC. The single exception is ImageNet-R, the domain-shift suite where \method{}'s partial recalibration is already weakest and where NEO and ZOA also lead \method{} (Table~\ref{tab:gen_full}); there $K{=}28$ edges \method{} by at most $1.2$ points. Measured under the same per-image latency protocol, $K{=}28$ runs at ${\sim}162$ ms per image, roughly $25\times$ \method{}'s $6.6$ ms (median over five runs; a same-session $K{=}2$ re-measurement reproduces FOA's $13.0$ ms minimum-forward latency to within ${\sim}13\%$, and we normalize $K{=}28$ to that frame). The ImageNet-C ranking in Table~\ref{tab:main} is therefore not an artifact of FOA's minimum-forward budget.

\begin{table}[t]
\centering
\caption{FOA at its minimum-forward setting ($K{=}2$, two forwards/batch) vs.\ its recommended CMA-ES population ($K{=}28$, ${\sim}29$ forwards; \citet{niu2024test} default), and \method{} (one forward), across ImageNet-C and the four generalization suites (quantized ViT-B; W6A6/W8A8 PTQ4ViT, W4A4/W3A3 AdaLog; 3-seed mean$\pm$std). At ${\sim}14\times$ the forwards, $K{=}28$ stays below \method{} at every bit-width except ImageNet-R. Best of the three \textbf{bold}.}
\label{tab:foa28}
{\small
\begin{tabular}{l|cccc}
\toprule
Method & W3A3 & W4A4 & W6A6 & W8A8 \\
\midrule
\multicolumn{5}{l}{\emph{ImageNet-C}} \\
\quad FOA ($K{=}2$) & \ms{31.52}{0.09} & \ms{50.25}{0.07} & \ms{50.39}{0.05} & \ms{57.85}{0.16} \\
\quad FOA ($K{=}28$) & \ms{31.88}{0.13} & \ms{50.47}{0.07} & \ms{52.30}{0.24} & \ms{59.00}{0.11} \\
\rowcolor{oursgreen}\quad \textbf{\method{}} & \ms{\mathbf{35.98}}{0.00} & \ms{\mathbf{53.58}}{0.04} & \ms{\mathbf{54.35}}{0.02} & \ms{\mathbf{60.44}}{0.01} \\
\midrule
\multicolumn{5}{l}{\emph{\cbar} (ImageNet-\cbar{})} \\
\quad FOA ($K{=}2$) & \ms{40.86}{0.07} & \ms{59.92}{0.07} & \ms{63.40}{0.09} & \ms{67.85}{0.11} \\
\quad FOA ($K{=}28$) & \ms{41.10}{0.02} & \ms{60.40}{0.10} & \ms{64.22}{0.10} & \ms{68.52}{0.02} \\
\rowcolor{oursgreen}\quad \textbf{\method{}} & \ms{\mathbf{44.92}}{0.10} & \ms{\mathbf{61.91}}{0.13} & \ms{\mathbf{65.45}}{0.02} & \ms{\mathbf{69.32}}{0.06} \\
\midrule
\multicolumn{5}{l}{\emph{R} (ImageNet-R)} \\
\quad FOA ($K{=}2$) & \ms{43.27}{0.19} & \ms{53.74}{0.14} & \ms{56.17}{0.10} & \ms{59.01}{0.20} \\
\quad FOA ($K{=}28$) & \ms{\mathbf{44.33}}{0.06} & \ms{\mathbf{54.67}}{0.17} & \ms{\mathbf{56.83}}{0.26} & \ms{\mathbf{60.18}}{0.15} \\
\rowcolor{oursgreen}\quad \textbf{\method{}} & \ms{43.90}{0.04} & \ms{54.32}{0.06} & \ms{56.60}{0.12} & \ms{58.97}{0.03} \\
\midrule
\multicolumn{5}{l}{\emph{Sk.} (ImageNet-Sketch)} \\
\quad FOA ($K{=}2$) & \ms{27.83}{0.08} & \ms{39.44}{0.16} & \ms{42.27}{0.14} & \ms{45.76}{0.01} \\
\quad FOA ($K{=}28$) & \ms{28.49}{0.10} & \ms{40.17}{0.07} & \ms{43.63}{0.40} & \ms{48.10}{0.53} \\
\rowcolor{oursgreen}\quad \textbf{\method{}} & \ms{\mathbf{32.10}}{0.05} & \ms{\mathbf{42.99}}{0.03} & \ms{\mathbf{45.65}}{0.04} & \ms{\mathbf{48.88}}{0.03} \\
\midrule
\multicolumn{5}{l}{\emph{3DCC} (ImageNet-3DCC)} \\
\quad FOA ($K{=}2$) & \ms{35.55}{0.16} & \ms{48.75}{0.05} & \ms{48.69}{0.03} & \ms{53.45}{0.04} \\
\quad FOA ($K{=}28$) & \ms{35.89}{0.04} & \ms{49.30}{0.04} & \ms{50.35}{0.08} & \ms{54.49}{0.02} \\
\rowcolor{oursgreen}\quad \textbf{\method{}} & \ms{\mathbf{37.50}}{0.05} & \ms{\mathbf{50.61}}{0.04} & \ms{\mathbf{51.20}}{0.05} & \ms{\mathbf{55.45}}{0.05} \\
\bottomrule
\end{tabular}}
\end{table}

\section{Comparison with Adaptive Quantile Recalibration (AQR)}
\label{app:aqr}
AQR \citep{aqr2026} recalibrates pre-activations in full precision by matching the full channel-wise activation distribution with per-channel quantile transforms at normalization-layer outputs. As no public implementation exists, we implement it in the configuration AQR uses for ViT: after-LayerNorm placement in the top half of the blocks (6--11), $101$ percentiles, source percentiles estimated from $10{,}000$ training images, and AQR's paper-default \emph{standard} tail handling (raw min/max). We report this default configuration rather than a variant tuned on our setting. We first validate the implementation on the \emph{full-precision} ViT-B, where it improves over No~Adapt by $+2.8$ to $+3.3$ points at batch sizes $\ge 64$ (e.g.\ $55.58\to58.42$ at batch $64$), matching the direction reported by \citet{aqr2026}; AQR's behavior on the quantized model is therefore a genuine finding, not an implementation artifact.

Table~\ref{tab:aqr} compares AQR and \method{} on the quantized W6A6 ViT-B across batch sizes. Even at the large batches AQR is built for ($64$--$512$), where its per-batch quantile estimation is most reliable, \method{} is $+2.0$ to $+3.0$ points higher; and, crucial for streaming deployment, AQR's per-batch quantile estimation is unstable at small batches and collapses at batch size~$1$ ($18.73$, below No~Adapt, both here and on the full-precision model), whereas \method{}'s source-initialized cumulative statistics keep it robust down to a single image ($54.27$).

Two design differences explain \method{}'s higher accuracy even in AQR's favorable large-batch regime. First, \method{} acts \emph{at the frozen quantizer} and matches the \emph{source calibration} moments that define its operating point, directly restoring the quantization-specific failure it targets (the distorted code distribution); AQR matches the full activation distribution only at normalization-layer outputs, a quantizer-agnostic correction that is not aligned to the frozen quantizer's calibrated range.
Second, in this quantized setting, the more effective intervention is to restore the per-channel operating point, mean and scale, at the frozen quantizer input. QuAR does this with stable moment estimates, whereas AQR's nonlinear per-channel quantile remap is quantizer-agnostic and depends on per-batch percentile estimates, which become unreliable as the batch shrinks.

The conclusion is insensitive to AQR's remaining design choices: varying its placement and scope never lifts it above \method{} (its best configuration, a post-hoc block-$8$--$11$ scope tuned to our setting, still reaches only $52.41$ at batch~$64$). Absolute full-precision numbers are not directly comparable to \citet{aqr2026} (whose ViT is trained differently), so we use the full-precision result only as a correctness check.

\begin{table}[t]
\centering
\caption{\emph{Comparison with AQR} on quantized W6A6 ViT-B (ImageNet-C, $15$-corruption mean top-1
\%, $5$k subset, sev.\ 5; 3-seed mean$\pm$std over seeds $0/1/2$). AQR is our implementation of the
paper-default ViT configuration (after-LayerNorm, top half blocks 6--11, $101$ percentiles, standard
raw-tail handling); No~Adapt is measured at each batch size at seed $0$, varying by $0.07$
over this range. Best per column \textbf{bold}.}
\label{tab:aqr}
{\small
\begin{tabular}{l|cccc}
\toprule
Batch size & $1$ & $64$ & $128$ & $512$ \\
\midrule
No Adapt & 47.75 & 47.69 & 47.69 & 47.68 \\
AQR \citep{aqr2026} (our impl.) & \ms{18.73}{0.00} & \ms{51.37}{0.04} & \ms{51.63}{0.06} & \ms{51.75}{0.02} \\
\rowcolor{oursgreen}\textbf{\method{}} & \ms{\mathbf{54.27}}{0.05} & \ms{\mathbf{54.35}}{0.02} & \ms{\mathbf{54.29}}{0.08} & \ms{\mathbf{53.77}}{0.02} \\
\bottomrule
\end{tabular}}
\end{table}

\section{Online label shift}
\label{app:noniid}
Table~\ref{tab:noniid} gives the per-method values and Figure~\ref{fig:noniid} plots them. The sampler and the imbalance ratios are specified in Appendix~\ref{app:repro}.

\begin{table}[ht]
\centering
\caption{Non-i.i.d.\ online label shift by imbalance ratio, $1$ being i.i.d.\ and
$5{\times}10^{5}$ near class-sorted. W6A6, \emph{Full} ImageNet-C ($50$k/corruption),
15-corruption mean (\%), 3 seeds; best \textbf{bold}. These are the values behind
Figure~\ref{fig:noniid}.}
\label{tab:noniid}
{\small
\begin{tabular}{l|ccccc}
\toprule
Method & $1$ & $333$ & $10^{3}$ & $3{\times}10^{3}$ & $5{\times}10^{5}$ \\
\midrule
No Adapt & \multicolumn{5}{c}{$47.55$} \\
T3A                                    & \ms{48.67}{0.01} & \ms{48.61}{0.03} & \ms{48.19}{0.02} & \ms{47.60}{0.09} & \ms{47.24}{0.11} \\
FOA                                    & \ms{50.19}{0.11} & \ms{50.32}{0.24} & \ms{50.33}{0.14} & \ms{50.10}{0.35} & \ms{49.86}{0.33} \\
ZOA                                    & \ms{52.81}{0.06} & \ms{52.86}{0.05} & \ms{52.68}{0.11} & \ms{52.56}{0.12} & \ms{52.05}{0.04} \\
NEO                                    & \ms{51.39}{0.00} & \ms{51.45}{0.08} & \ms{51.39}{0.04} & \ms{51.33}{0.11} & \ms{51.24}{0.04} \\
FOZO                                   & \ms{49.49}{0.16} & \ms{49.77}{0.26} & \ms{49.66}{0.21} & \ms{49.45}{0.26} & \ms{48.92}{0.21} \\
\rowcolor{oursgreen}\textbf{\method{}} & \ms{\mathbf{54.55}}{0.03} & \ms{\mathbf{54.66}}{0.12} & \ms{\mathbf{54.64}}{0.06} & \ms{\mathbf{54.65}}{0.13} & \ms{\mathbf{54.73}}{0.07} \\
\bottomrule
\end{tabular}}
\end{table}

\begin{figure}[ht]
\centering
\includegraphics[width=0.5\textwidth]{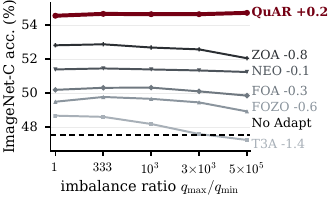}
\caption{Online label shift (W6A6, full ImageNet-C, 3-seed mean) by imbalance ratio; labels give each method's accuracy change from i.i.d.\ to $5{\times}10^{5}$. \method{} is the most accurate at every ratio and the only method whose accuracy does not fall.}
\label{fig:noniid}
\end{figure}

\section{SPA under online label shift}
\label{app:spanoniid}
We place SPA on the protocol of Table~\ref{tab:noniid}, which differs from Table~\ref{tab:backprop_acc} in using the full ImageNet-C test set ($50$k images per corruption) rather than the $5$k subset: W6A6 PTQ4ViT ViT-B, all $15$ corruptions at severity~$5$, batch size $64$, episodic reset per corruption, the SAR sampler with $T{=}50{,}000$, and seeds $0/1/2$ with sampler seed $2023{+}s$. At the highest imbalance, $\mathrm{ir}{=}5{\times}10^{5}$, SPA scores $53.85$, $47.92$ and $56.13$ over the three seeds, giving $52.63{\pm}4.24$ against \method{}'s $54.73{\pm}0.07$.

The spread is the more informative half. At this ratio every method in Table~\ref{tab:noniid} stays within $0.35$ of its own mean and \method{} within $0.07$, whereas SPA varies by more than eight points across seeds on the same stream. The loss concentrates in the blur corruptions, where individual cells fall below $2$ top-1. Bare SPA optimizes a self-consistency loss between a clean view and two perturbed ones, with no entropy threshold and no redundancy filter, so nothing in it rejects the near-single-class batches this stream produces. Adaptation is reset per corruption, so these collapses occur inside individual corruptions rather than accumulating along the stream.

\section{Clean source: behavior under no shift}
\label{app:noshift}
A test-time method that tracks stream statistics could in principle harm accuracy when there is \emph{no} shift to correct. We test this directly: we evaluate \method{} in its deployed configuration (local recalibration at blocks 8--11 with $\lambda{=}0.5$, cumulative running target moments, plus the global CLS branch) on \emph{clean} ImageNet validation (no corruption; the same RobustBench 5k subset, ViT-B, batch size 64, seed 0), the in-distribution match to the clean data on which the source PTQ-calibration moments were collected. Under no shift the running target moments coincide with the frozen source moments, so the recalibration of Equation~\ref{eq:local} reduces to the identity up to the finite-sample estimation noise in the running moments (the $\eta$ of Proposition~\ref{prop:shrink}). Table~\ref{tab:noshift} confirms this: across all four bit-widths \method{} changes clean top-1 by at most $0.42$ points (W8A8 $-0.42$, W6A6 $-0.08$, W4A4 $+0.08$, W3A3 $-0.14$). Enabling the recalibration is therefore safe even when adaptation is unnecessary: it neither requires a shift detector nor degrades the in-distribution operating point.

\begin{table}[t]
\centering
\caption{Clean-source (no-shift) sanity: top-1 accuracy (\%) on \emph{clean} ImageNet validation
(no corruption; RobustBench 5k subset, ViT-B, batch size 64, seed 0; W6A6/W8A8 PTQ4ViT, W4A4/W3A3
AdaLog). \method{} uses its deployed configuration. $\Delta$ is \method{} minus No Adapt; a value
near zero means the recalibration does not harm the in-distribution operating point.}
\label{tab:noshift}
{\small
\begin{tabular}{l|cccc}
\toprule
 & W3A3 & W4A4 & W6A6 & W8A8 \\
\midrule
No Adapt & $76.76$ & $82.88$ & $82.74$ & $84.12$ \\
\rowcolor{oursgreen}\textbf{\method{}} & $76.62$ & $82.96$ & $82.66$ & $83.70$ \\
\midrule
$\Delta$ (\method{} $-$ No Adapt) & $-0.14$ & $+0.08$ & $-0.08$ & $-0.42$ \\
\bottomrule
\end{tabular}}
\end{table}

\section{Robustness to the quantizer choice}
\label{app:crossquant}
The main results follow ZOA's protocol \citep{deng2025test}, using AdaLog at W3A3/W4A4 and PTQ4ViT at W6A6/W8A8. To check that \method{}'s gains are not tied to a particular PTQ method, we additionally run \emph{both} quantizers at \emph{every} bit-width (Table~\ref{tab:crossquant}; same 15-corruption ImageNet-C protocol, seed 0, episodic; \method{} uses per-quantizer source statistics). Excluding the degenerate PTQ4ViT-at-W3A3 column, where PTQ4ViT collapses to near-chance for \emph{all} methods (it is not intended for 3-bit, which is exactly why the main protocol uses AdaLog there), \method{} is the best method in all seven remaining quantizer$\times$bit-width settings, so the recalibration is quantizer-agnostic. PTQ4ViT at W6A6/W8A8 reproduces the main tables (\method{} $54.35$ at W6A6, $60.44$ at W8A8), confirming the pipeline. Within either quantizer the gain over No Adapt grows monotonically as precision falls, from $5.08$ at W8A8 to $7.63$ at W3A3 under AdaLog and from $6.08$ at W8A8 to $7.47$ at W4A4 under PTQ4ViT; the dip between W6A6 and W4A4 in Figure~\ref{fig:teaser}a is the change of quantizer, not of the mechanism.

\begin{table*}[t]
\centering
\caption{Cross-quantizer comparison: mean top-1 (\%) on ImageNet-C with \emph{both} PTQ methods at every bit-width. \method{}: per-quantizer source stats ($\lambda{=}0.5$, cumulative moments, blocks 8--11 $+$ CLS). PTQ4ViT degrades sharply at low bit-widths (near-chance at W3A3), hence AdaLog in the main protocol; ZOA's source stats are collected on the deployed quantizer, so it is slightly mismatched here. Best per column \textbf{bold}, second \underline{underlined}.}
\label{tab:crossquant}
\resizebox{\textwidth}{!}{%
\begin{tabular}{l|cc|cc|cc|cc}
\toprule
Method & \multicolumn{2}{c|}{W3A3} & \multicolumn{2}{c|}{W4A4} & \multicolumn{2}{c|}{W6A6} & \multicolumn{2}{c}{W8A8} \\
 & AdaLog & PTQ4ViT & AdaLog & PTQ4ViT & AdaLog & PTQ4ViT & AdaLog & PTQ4ViT \\
\midrule
No Adapt & 28.35 & 0.35 & 47.30 & 18.93 & 54.79 & 47.69 & 53.91 & 54.36 \\
T3A & 28.50 & 0.42 & 47.42 & 18.95 & 55.05 & 47.80 & 53.54 & 54.17 \\
FOA & 31.49 & 0.71 & 50.33 & 22.75 & 57.83 & 50.40 & 56.96 & 57.88 \\
ZOA & 31.40 & 0.79 & \underline{50.78} & 23.44 & 57.93 & 51.35 & 57.30 & \underline{58.21} \\
NEO & 31.90 & \textbf{0.84} & 50.29 & \underline{23.59} & \underline{58.24} & \underline{51.36} & \underline{57.40} & 58.10 \\
FOZO & \underline{32.15} & 0.37 & 50.10 & 19.25 & 58.07 & 50.06 & 57.37 & 57.90 \\
\rowcolor{oursgreen}\textbf{\method{}} & \textbf{35.98} & \underline{0.80} & \textbf{53.58} & \textbf{26.40} & \textbf{60.55} & \textbf{54.35} & \textbf{58.99} & \textbf{60.44} \\
\bottomrule
\end{tabular}%
}
\end{table*}

\section{Asymmetric deployment bit-widths}
\label{app:w4a8}

The bit-widths in the main text are symmetric, but integer inference deploys asymmetric ones. Weights are static and can be quantized per channel, whereas activations are dynamic and the integer kernels of mainstream accelerators are built around 8-bit multiply-accumulate, so the practical combination is aggressive weights with conservative activations. Two formats dominate, W8A8, which the main table already covers, and W4A8, which it does not. We therefore add W4A8 on the same protocol as Table~\ref{tab:main} (Table~\ref{tab:w4a8}).

We quantize it with AdaLog, following the paper's existing practice of using AdaLog wherever the weights are low-bit. That choice is forced rather than convenient. Measured under one protocol on $10{,}240$ held-out clean images, full precision reaches $85.16$, PTQ4ViT gives $85.14$ at W8A8 and $83.06$ at W6A6, and AdaLog gives $83.18$ at W4A4 and $85.22$ at W4A8, while PTQ4ViT at W4A8 collapses to $76.62$. The asymmetric format costs essentially nothing against full precision, which is the reason it is used.

\begin{table}[ht]
\centering
\caption{Asymmetric bit-widths. Mean top-1 (\%) on ImageNet-C sev.\ 5, 15 corruptions, 5k
subset, batch 64, episodic, 3-seed mean $\pm$ std; No Adapt within $0.01$ across seeds. W4A4 and W8A8 are
reproduced from Table~\ref{tab:main}.}
\label{tab:w4a8}
{\small
\begin{tabular}{l|c|c|c}
\toprule
Method & W4A4 & W4A8 & W8A8 \\
\midrule
No Adapt & 47.30 & 54.12 & 54.36 \\
T3A & \ms{47.42}{0.04} & \ms{53.74}{0.01} & \ms{54.16}{0.02} \\
FOA & \ms{50.25}{0.07} & \ms{57.55}{0.14} & \ms{57.85}{0.16} \\
ZOA & \ms{50.87}{0.09} & \ms{58.10}{0.01} & \ms{58.16}{0.05} \\
FOZO & \ms{50.09}{0.02} & \ms{57.88}{0.14} & \ms{57.74}{0.20} \\
NEO & \ms{50.29}{0.02} & \ms{58.01}{0.02} & \ms{58.09}{0.03} \\
\midrule
\rowcolor{oursgreen}\textbf{\method{}} & \ms{\mathbf{53.58}}{0.04} & \ms{\mathbf{60.11}}{0.07} & \ms{\mathbf{60.44}}{0.01} \\
\bottomrule
\end{tabular}}
\end{table}

\method{} leads at W4A8 as it does at every symmetric setting, reaching \ms{60.11}{0.07} against $58.01$ for the strongest baseline. The more informative reading is across the columns. Halving the weight bit-width from 8 to 4 moves every one of the seven rows by at most $0.42$, while dropping the activations from 8 to 4 moves the same rows by six to eight points. \method{}'s margin over the strongest baseline sorts the same way, $+2.01$ at W4A8 and $+2.28$ at W8A8 against $+2.71$ at W4A4. The quantity that governs both the difficulty of the setting and the size of the gain is the activation bit-width, which is the quantizer \method{} is defined on, and this follows from deployment-realistic settings rather than from a diagnostic one.

\section{Mechanism analysis}
\label{app:analysis}

\paragraph{Quantization amplifies OOD degradation.}
The clean-to-corruption accuracy gap of the \emph{unadapted} model widens monotonically as the model is quantized more aggressively (Table~\ref{tab:oodgap}): full precision loses $28.7$ points from clean ImageNet to ImageNet-C (severity 5), but W3A3 loses $48.4$. Quantized models are thus markedly more fragile under distribution shift than full-precision ones, consistent with prior reports \citep{deng2025test}; this is the regime our method targets. Accuracies are on the same $5000$-image ImageNet / ImageNet-C subset (No Adapt, seed $0$).

\begin{table}[ht]
\centering
\caption{Unadapted clean vs.\ ImageNet-C (sev.\ 5) accuracy by precision (5k subset, No Adapt,
seed 0). The clean$\to$corrupt gap widens as precision drops.}
\label{tab:oodgap}
{\small
\begin{tabular}{lccccc}
\toprule
 & FP (W32) & W8A8 & W6A6 & W4A4 & W3A3 \\
\midrule
Clean & $84.3$ & $84.1$ & $82.7$ & $82.9$ & $76.8$ \\
ImageNet-C & $55.6$ & $54.4$ & $47.7$ & $47.3$ & $28.4$ \\
Gap (clean $-$ corrupt) & $28.7$ & $29.8$ & $35.1$ & $35.6$ & $48.4$ \\
\bottomrule
\end{tabular}}
\end{table}

\paragraph{Intermediate activation mismatch.}
Table~\ref{tab:mismatch} measures the per-channel source--target mismatch at the late-block input-side recalibration interfaces (blocks 8--11 $\times$ \{qkv, proj, fc1, fc2\}, W6A6). The mismatch is the per-channel mean absolute deviation of the target moments from the source calibration moments, averaged over channels; the measured tensor is the activation entering the quantizer (post-recalibration for Local-only/\ours{}). Local-only and \ours{} reduce the combined mismatch by $56.0\%$, while Global-only and NEO do not reduce it at all, because they act on the final CLS representation downstream of these interfaces. The reduction is partial (not full) because $\lambda=0.5$ deliberately applies only a partial recalibration, which we find more robust than the full match (Section~\ref{sec:method}). The \emph{code-distribution distance} reported in Figure~\ref{fig:teaser}b (the $-52\%$ restoration) compares the test-time and source quantizer code histograms at all $17$ recalibration interfaces, averaged over the 15 corruptions and three seeds. The codes are compared as the \emph{values they stand for}, not as bin indices: each histogram bin is placed at the activation value its code de-quantizes to, and the distance between two histograms is the Wasserstein-1 distance along that value axis, that is, the mass-weighted distance the values move. Writing $F^{t}_{\ell}$ and $F^{s}_{\ell}$ for the cumulative test-time and source code histograms at interface $\ell$ and $v_{\ell}(c)$ for the value code $c$ de-quantizes to,
\begin{equation}
D_{\ell} \;=\; \sum_{c}\bigl|F^{t}_{\ell}(c)-F^{s}_{\ell}(c)\bigr|\,
\bigl(v_{\ell}(c{+}1)-v_{\ell}(c)\bigr),
\qquad
v_{\ell}(c)=\begin{cases}c\,\Delta^{+}_{\ell}, & c\ge 0,\\[2pt] c\,\Delta^{-}_{\ell}, & c<0,\end{cases}
\label{eq:codedist}
\end{equation}
where $\Delta^{+}_{\ell}=\Delta^{-}_{\ell}$ on the single-scale interfaces. The histogram at an interface pools its channels and tokens, which is the distribution the \emph{per-tensor} quantizer actually has: one step serves every channel, so the code distribution is a property of the tensor and not of a channel. That also makes this distance smaller than the per-channel displacement reported in Table~\ref{tab:mismatch}, since pooling lets channels that move in opposite directions cancel and $D_\ell$ is bounded by the channel-average of per-channel distances; the per-channel view is what Table~\ref{tab:mismatch} and Table~\ref{tab:levels} give. The quantity plotted is the fraction of the No-Adapt distance that a method removes,
\begin{equation}
\text{restored} \;=\; 100\,\frac{\bar{D}^{\,\textup{No Adapt}}-\bar{D}^{\,\textup{method}}}
{\bar{D}^{\,\textup{No Adapt}}},
\qquad
\bar{D}=\frac{1}{|\mathcal{L}|\,|\mathcal{K}|}\sum_{\ell\in\mathcal{L}}\sum_{k\in\mathcal{K}}D_{\ell,k},
\label{eq:coderestored}
\end{equation}
with $\mathcal{L}$ the 17 interfaces and $\mathcal{K}$ the 15 corruptions. The source histograms $F^{s}_{\ell}$ are collected in one pass over the same $12{,}800$ clean ImageNet validation images as the source moments (Appendix~\ref{app:repro}), and the test histograms over the full $5$k stream of each corruption. The diagnostic is run at W6A6, where every interface is uniformly or twin-uniformly quantized; at the AdaLog bit-widths \texttt{mlp.fc2} is log-quantized and is covered analytically instead (Proposition~\ref{prop:logq}). Comparing values rather than indices is what lets a single metric span the interfaces the method recalibrates, because their code grids are not commensurable. The intermediate interfaces use a symmetric single-scale uniform grid, on which every bin is equally wide, whereas the post-GELU \texttt{mlp.fc2} interfaces are quantized by a twin uniform quantizer, with a negative step of $0.0053$ against a positive step of up to $0.9$. A distance defined on bin indices rather than values would score a change $169\times$ smaller identically and would be dominated by the finer sub-grid; on the de-quantized axis every interface is measured in activation units, so none has to be treated separately.

\begin{table}[t]
\centering
\caption{Intermediate activation mismatch at the $16$ pre-quant / input-side
interfaces the local branch acts on (blocks 8--11 $\times$ \{attn.qkv,
attn.proj, mlp.fc1, mlp.fc2\}; $\lambda{=}0.5$) on W6A6 ViT-B.
Mismatch is the per-channel mean absolute deviation of
the target moments from the source calibration moments, averaged over channels.
Accuracy is mean top-1 over 15 corruptions (seed 0).}
\label{tab:mismatch}
{\small
\begin{tabular}{l|r|rrr|r}
\toprule
Method & Acc. & Mean Mis.\,$\downarrow$ & Std Mis.\,$\downarrow$ & Comb.\,$\downarrow$ & Reduction (\%)\,$\uparrow$ \\
\midrule
No Adapt & 47.69 & 0.235 & 0.112 & 0.347 & 0.0 \\
Local-only & 53.35 & 0.095 & 0.057 & 0.153 & 56.0 \\
Global-only & 51.16 & 0.235 & 0.112 & 0.347 & 0.0 \\
\rowcolor{oursgreen}\textbf{\method{}} & 54.36 & 0.095 & 0.057 & 0.153 & 56.0 \\
NEO & 51.36 & 0.235 & 0.112 & 0.347 & 0.0 \\
\bottomrule
\end{tabular}}
\end{table}

\paragraph{Margin and quantization error under shift.}
Proposition~\ref{prop:acc} routes the amplification through the decision: a prediction survives quantization while the propagated perturbation stays inside the unquantized logit margin, the shift shrinks that margin while the activation quantization error does not, and the error itself is set by the step. We measure those quantities directly. The full-precision and the quantized network are run over the same inputs on the $5000$-image subset, clean and all $15$ corruptions at severity~5, with $\|e_\ell\|$ and the clipped fraction read at the sixteen recalibration interfaces using the deployed quantizer's own step and clip range. Nothing adapts in this measurement, so every quantity is a per-image or per-element aggregate and does not depend on the stream order. The margin collapses under shift, from $3.78$ to $2.05$, a drop of $46\%$; it is a property of the unquantized network, so it is one number for all four precisions. The activation quantization error does not follow it: $\|e_\ell\|$ moves by between $-3.2\%$ and $+2.2\%$, and the clipped fraction falls at every bit-width, so the invariance is the granular term dominating, which is how Proposition~\ref{prop:acc} splits $\mathbb{E}\,e_{\ell,c}^2$ and is consistent with the under-filling reported above. Across precisions $\|e_\ell\|$ instead tracks the step, rising monotonically from $2404$ at W8A8 to $17436$ at W3A3. The consequence is the predicted one: the fraction of predictions that are correct in full precision and wrong once quantized grows under shift at every precision, and it grows more the coarser the step, by $2.7$ points at W8A8 against $19.5$ at W3A3 (Table~\ref{tab:marginflip}).

\begin{table}[ht]
\centering
\caption{\emph{The shift collapses the margin while leaving the quantization error where it was}
(ViT-B, $5000$-image subset, clean and all 15 ImageNet-C corruptions at severity~5, No Adapt).
$\|e_\ell\|$ and the clipped fraction are measured at the sixteen recalibration interfaces;
harmful flips are predictions correct in full precision and wrong once quantized. The
full-precision margin is $3.78$ clean against $2.05$ corrupted, independent of the bit-width.}
\label{tab:marginflip}
{\small
\begin{tabular}{l|cc|cc|cc}
\toprule
& \multicolumn{2}{c|}{$\|e_\ell\|$} & \multicolumn{2}{c|}{clipped (\%)} & \multicolumn{2}{c}{harmful flips (\%)} \\
& clean & corrupt & clean & corrupt & clean & corrupt \\
\midrule
W8A8 & $2404$   & $2457$   & $0.011$ & $0.011$ & $1.08$  & $3.82$ \\
W6A6 & $6773$   & $6600$   & $0.045$ & $0.035$ & $3.50$  & $11.79$ \\
W4A4 & $12384$  & $11992$  & $0.900$ & $0.817$ & $3.18$  & $11.86$ \\
W3A3 & $17436$  & $17233$  & $2.717$ & $2.369$ & $10.18$ & $29.70$ \\
\bottomrule
\end{tabular}}
\end{table}

\paragraph{Precision sweep.}
Table~\ref{tab:precision} sweeps the local-branch gain across the tested quantization precisions (W3A3--W8A8). The Local Gain (Local-only $-$ No Adapt) is substantial at every precision ($+4.5$ to $+5.9$), confirming that the local branch is a general activation-recalibration principle across the quantization range; its \emph{marginal} contribution on top of the global branch grows as precision drops, peaking at the most aggressive W3A3 setting ($+3.88$).

\begin{table}[t]
\centering
\caption{Precision sweep of the local recalibration branch on ImageNet-C (3-seed mean
top-1 \% over seeds 0/1/2; accuracy columns from Table~\ref{tab:main}). All variants use source-initialized cumulative running target moments,
lagged order, $\lambda{=}0.5$. Local Gain $=$ Local-only $-$ No Adapt;
Marg.\ Local Gain $=$ \method{} $-$ Global-only.}
\label{tab:precision}
{\small
\begin{tabular}{l|rrrr|rr}
\toprule
Precision & No Adapt & Global-only & Local-only & \method{} & Local Gain & Marg.\ Local Gain \\
\midrule
W8A8 & 54.36 & 58.10 & 59.11 & 60.44 & +4.75 & +2.34 \\
W6A6 & 47.68 & 51.20 & 53.42 & 54.35 & +5.74 & +3.15 \\
W4A4 & 47.30 & 50.21 & 52.87 & 53.58 & +5.57 & +3.37 \\
W3A3 & 28.35 & 31.43 & 35.30 & 35.98 & +6.95 & +4.55 \\
\bottomrule
\end{tabular}}
\end{table}

\paragraph{Placement ablation.}
Table~\ref{tab:placement} isolates \emph{where} the same correction is applied, with every variant at the deployed local configuration so that only the site differs. Before the quantizer, on the tensor it is about to round, is best at every bit-width. After the quantizer, on the dequantized tensor of the same interface, the codes the quantizer selects are those of the uncorrected activation: accuracy falls below No Adapt at W3A3 and W4A4 ($27.98$ and $47.01$ against $28.35$ and $47.30$), because the drifted activation is clipped and coarsely rounded before any correction and the affine then rescales an already truncated signal, and it recovers $4.2$ and $3.6$ points at W6A6 and W8A8 ($51.90$ and $57.96$), close to the Global-only branch. What the correction must change is which codes are used, not the values downstream of them. At the block output, the residual stream that no quantizer reads and the site at which normalization-statistics adaptation \citep{schneider2020improving,mirza2023actmad} acts, the same affine reaches $32.18$, $50.32$, $51.14$ and $57.29$: about half the gain of the quantizer-input placement ($2.9$ to $3.8$ points over No Adapt against $4.8$ to $7.0$), within a point of our Global-only branch, which recalibrates only the head input, and $1.8$ to $3.1$ points below the quantizer inputs, a gap that widens as precision falls. Both placements away from the quantizer input therefore stop at the gain of aligning feature statistics, and the further $1.8$ to $3.1$ points appear only when the correction sits at the quantizer input. That is the site the code-distribution diagnostic (Figure~\ref{fig:teaser}b) and Proposition~\ref{prop:levels} single out, and it is what separates \method{} from normalization-statistics adaptation applied to a quantized network. The block-output source moments are collected on the same calibration batches as the quantizer-input moments.

\begin{table}[t]
\centering
\caption{Placement ablation (mean top-1 \%, 3-seed mean $\pm$ std). Every variant uses the deployed
local configuration (blocks 8--11, $\lambda{=}0.5$, cumulative estimator, no global branch); only where
the same per-channel affine is applied differs: before the activation quantizer, on the tensor it is
about to round (our default, the Local-only row of Table~\ref{tab:main}); after it, on the dequantized
tensor of the same interface; or at the block output, the residual stream that no quantizer reads and
the site at which normalization-statistics adaptation acts. The Global-only branch of Table~\ref{tab:main},
which recalibrates only the head input, is repeated for reference. Best in \textbf{bold}.}
\label{tab:placement}
{\small
\begin{tabular}{l|cccc}
\toprule
Placement & W3A3 & W4A4 & W6A6 & W8A8 \\
\midrule
No Adapt & \ms{28.35}{0.00} & \ms{47.30}{0.01} & \ms{47.68}{0.00} & \ms{54.36}{0.00} \\
\textbf{Before the quantizer, input side (\method{})} & \ms{\mathbf{35.30}}{0.07} & \ms{\mathbf{52.87}}{0.03} & \ms{\mathbf{53.42}}{0.05} & \ms{\mathbf{59.11}}{0.02} \\
After the quantizer, output side & \ms{27.98}{0.03} & \ms{47.01}{0.07} & \ms{51.90}{0.01} & \ms{57.96}{0.04} \\
Block output, residual stream & \ms{32.18}{0.05} & \ms{50.32}{0.12} & \ms{51.14}{0.05} & \ms{57.29}{0.02} \\
\midrule
Global-only (Table~\ref{tab:main}) & \ms{31.43}{0.02} & \ms{50.21}{0.02} & \ms{51.20}{0.03} & \ms{58.10}{0.01} \\
\bottomrule
\end{tabular}}
\end{table}

\section{Measured test-time cost}
\label{app:cost}

All quantized models in this paper run in the PTQ4ViT and AdaLog PyTorch harness, which quantizes and dequantizes weights and activations in floating point, and every cost below is measured in that harness. What transfers to a deployment is therefore the ratio between methods rather than the absolute time, which Appendix~\ref{app:ondevice} reproduces on a Jetson Orin Nano; the scope of an integer-only deployment is set out in Appendix~\ref{app:deploy}. Table~\ref{tab:cost} reports the measured per-method test-time cost at W6A6: forward-pass compute (\texttt{torch.profiler} GFLOPs over the entire per-batch adaptation), peak GPU memory (\texttt{max\_memory\_allocated}, per-process and hence independent of other GPU tenants), and wall-clock latency (per-image forward, timed on an idle NVIDIA TITAN RTX with one measurement at a time, as the median over 5 repetitions of the per-batch forward median across ${\sim}50$ batches). \method{} stays in the single-forward tier on all three axes: it runs at $1.10\times$ the No Adapt latency, versus ${\sim}2\times$ for the two-forward FOA/FOZO/ZOA, while peak memory is dominated by the frozen quantized backbone and batch activations, which \method{} does not add to. Applying the recalibration as a fresh tensor would: the blend needs the input and the corrected copy live at once, and at the \texttt{mlp.fc2} interface that temporary is the $3072$-wide MLP hidden state, four times the width of the other fifteen, which alone costs $148$\,MB at batch $64$. The deployed configuration instead applies the affine in place over batch slices (\texttt{-{}-quar\_local\_chunk 2}), which leaves $1601.08$\,MB against No Adapt's $1600.91$\,MB, a $0.17$\,MB residual on top of the ${\sim}170$\,KB of stored statistics. This is bit-identical rather than an approximation, so no accuracy number depends on it: every operation is elementwise or a per-channel broadcast and the two reductions that span the tensor are taken before any slice is touched, and we confirmed it end to end, with top-1 and top-5 equal to the buffered path at batch $64$ and batch $16$ ($46.80/69.50$ and $46.54/69.30$ on \texttt{gaussian\_noise}). T3A's per-sample nearest-neighbour search adds latency not visible in matmul GFLOPs and grows with its support set, so its reported value is early-stream.

\begin{table}[t]
\centering
\caption{Measured test-time cost on W6A6 ViT-B (batch size 64, ImageNet-C). \#FP: forward
passes per batch. GFLOPs/image profiled over the full per-batch adaptation. Peak memory:
\texttt{max\_memory\_allocated} (per-process), median over 5 reps. Latency: per-image forward
(protocol above). $^{*}$T3A latency is early-stream (grows with its support set).}
\label{tab:cost}
{\small
\begin{tabular}{l|cccc}
\toprule
Method & \#FP & GFLOPs/img & Peak mem (MB) & Latency (ms/img) \\
\midrule
No Adapt & 1 & 35.2 & 1601 & 6.0 \\
T3A & 1 & 35.2 & 1627 & 8.2$^{*}$ \\
FOA & 2 & 71.4 & 1619 & 13.0 \\
ZOA & 2 & 70.3 & 1606 & 12.3 \\
NEO & 1 & 35.2 & 1602 & 6.0 \\
FOZO & 2 & 71.4 & 1619 & 12.3 \\
\rowcolor{oursgreen}\textbf{\method{}} & 1 & 35.2 & 1601 & 6.6 \\
\bottomrule
\end{tabular}}
\end{table} Both the memory and latency figures are reported as ratios between quantities that grow together
with the batch, so they characterise the methods rather than the batch size they were measured at.

\section{On-device measurement on a Jetson Orin Nano}
\label{app:ondevice}

The cost argument is about edge hardware, so we also ran it there: a Jetson Orin Nano Super devkit, Orin \texttt{sm\_87}, with the GPU sharing $7.31$\,GiB with the CPU. What transfers across a change of device is the ratio between methods rather than the absolute time, so Table~\ref{tab:ondevice} reports every side by ratio. The ordering is reproduced and the two-forward group again lands at ${\approx}2\times$. \method{} is \emph{cheaper} in relative terms on the device than on the server, $1.071\times$ against $1.10\times$. T3A moves the other way, $1.144\times$ against $1.36\times$, because its per-sample nearest-neighbour search is bound by dispatch rather than arithmetic and the forward it is measured against is $8.6\times$ slower here.

Batch $16$ is the informative column: it is the largest batch at which the backprop-based methods fit this board, so it is the only one where all four families sit on a single axis, and per-image latency there is within $3.9\%$ of batch $64$, so it is a cheap batch to deploy at. Memory is what separates the families. At the protocol batch every forward-only method fits, peaking between $1603$ and $1751$\,MB, and none of the four backprop-based methods does: TENT exhausts memory and a batch-$32$ retry hard-reset the board, because on Tegra the GPU allocates from system RAM and exhausting it starves the kernel rather than raising a clean CUDA OOM. At batch $16$ TENT, EATA and SAR do fit, at $3300$--$3306$\,MB against \method{}'s $910$\,MB, a $3.6\times$ gap, while SPA still does not: the kernel's OOM killer terminated it at $4256$\,MB resident, with the triggering request arriving through Tegra's allocator, so of the eleven methods SPA is the only one that runs at batch $1$ alone. Batches $2$, $4$ and $8$ were not measured. Probing this safely needs one precaution, since a bare attempt can take the board down: capping the caching allocator first, here to $4200$ of $7486$\,MB, leaves the kernel enough to survive and lets the OOM killer terminate only the process, which also leaves a record where a reboot would leave none.

The in-place correction that Appendix~\ref{app:cost} describes behaves the same way here (Table~\ref{tab:chunkmem}). Buffering the corrected tensor instead costs $148$\,MB at batch $64$ and $37$\,MB at batch $16$, whereas applied in place the residual over an unadapted forward is $0.17$\,MB at both batches and on both devices, and the latency change is $-0.25\%$ and $+0.35\%$, inside run-to-run noise.

\begin{table}[t]
\centering
\caption{Per-image latency as a ratio to No Adapt on the same device and batch (W6A6 ViT-B).
Device values are medians over $15$ batches at batch $64$ and $25$ at batch $16$, over $3$
repetitions. Batch $16$ is the largest batch at which the backprop-based methods fit this board.}
\label{tab:ondevice}
{\small
\begin{tabular}{l|c|cc}
\toprule
 & TITAN RTX & \multicolumn{2}{c}{Orin Nano} \\
\cmidrule(lr){2-2}\cmidrule(lr){3-4}
Method & batch $64$ & batch $64$ & batch $16$ \\
\midrule
No Adapt & $1.00\times$ & $1.000\times$ & $1.000\times$ \\
NEO      & $1.00\times$ & $1.002\times$ & $1.000\times$ \\
\rowcolor{oursgreen}\textbf{\method{}} & $\mathbf{1.10\times}$ & $\mathbf{1.071\times}$ & $\mathbf{1.084\times}$ \\
T3A      & $1.36\times$ & $1.144\times$ & $1.465\times$ \\
ZOA      & $2.04\times$ & $2.001\times$ & $1.998\times$ \\
FOZO     & $2.04\times$ & $2.022\times$ & $2.013\times$ \\
FOA      & $2.16\times$ & $2.034\times$ & $2.061\times$ \\
\midrule
TENT     & $1.87\times$ & \emph{no fit} & $1.890\times$ \\
EATA     & $1.90\times$ & \emph{no fit} & $1.933\times$ \\
SAR      & $3.71\times$ & \emph{no fit} & $3.777\times$ \\
SPA      & $5.19\times$ & \emph{no fit} & \emph{no fit} \\
\bottomrule
\end{tabular}}
\end{table}

\begin{table}[t]
\centering
\caption{Peak memory of the local correction, buffered against applied in place
(\texttt{-{}-quar\_local\_chunk 2}), at W6A6. Residual is over No Adapt on the same device and
batch. The two forms are bit-identical, so no accuracy figure depends on the choice.}
\label{tab:chunkmem}
{\small
\begin{tabular}{ll|rrrr}
\toprule
Batch & Device & No Adapt & Buffered & In place & Residual \\
\midrule
$64$ & TITAN RTX  & $1600.91$ & $1748.83$ & $\mathbf{1601.08}$ & $+0.17$ \\
$64$ & Orin Nano  & $1603.16$ & $1751.33$ & $\mathbf{1603.33}$ & $+0.17$ \\
$16$ & Orin Nano  & $\phantom{0}909.47$ & $\phantom{0}946.58$ & $\phantom{0}\mathbf{909.64}$ & $+0.17$ \\
\bottomrule
\end{tabular}}
\end{table}

\section{Relation to backprop-based test-time adaptation}
\label{app:backprop}

Our comparison set (Table~\ref{tab:main}) targets the TTA methods that run on the same \emph{frozen, quantized} graph as inference, which are backprop-free and use at most a few forward passes, making them realistic candidates for quantized edge deployment. A second, larger family is \emph{backprop-based} -- it backpropagates an unsupervised loss and updates model parameters: entropy-minimization adaptation (TENT \citep{wang2021tent}, EATA \citep{niu2022efficient}, SAR \citep{niu2023towards}), multi-layer activation-moment matching (ActMAD \citep{mirza2023actmad}), and auxiliary self-supervised feature alignment (TTT++ \citep{liu2021ttt}). Table~\ref{tab:regime} contrasts the two families. Backprop-based methods are structurally ill-suited to quantized inference: (i) they require backpropagation through the (fake-)quantized graph, which the integer inference kernels of quantized accelerators do not implement; (ii) they carry an optimizer plus gradient and activation buffers; and (iii) they overwrite model parameters, which on a quantized model means re-quantization or full-precision shadow weights and risks drift over a long stream. We therefore do not use these as primary baselines; we nonetheless run four of them (TENT, EATA, SAR, and SPA \citep{spa2025}) on the same W6A6 model with their official code and hyperparameters (Table~\ref{tab:backprop_acc}). Each requires an added straight-through estimator to make the frozen quantized graph differentiable. Even granted that, the three lightweight ones do not reach \ours{}'s backprop-free, single-forward accuracy. SPA does exceed it, at the cost of three forward passes and a backward pass per batch and a peak memory footprint that no longer fits the edge devices that quantization targets. We do not tabulate ActMAD. It is full-model test-time training that updates \emph{all} parameters by gradient (and likewise needs the added STE), a far heavier regime than the lightweight TTA compared here.

\method{} shares with ActMAD the premise that per-channel activation moments are the quantity to align at test time, and departs from it in what is aligned, where, and how much. ActMAD aligns by descending a matching loss, so its correction is realized in the model parameters and the layers it measures are a free design choice. The local branch (\emph{Local-only} in Table~\ref{tab:main}) instead applies the correction as an operator on the tensor each frozen quantizer reads, which is what fixes where it has to act. Its target is therefore the quantizer's source-calibrated operating point rather than the feature distribution itself, it matches in closed form against the same statistics that fixed those quantizers, and it matches only \emph{partially} ($\lambda{<}1$), where its accuracy optimum lies (Proposition~\ref{prop:shrink}). The placement and the partial strength are both load-bearing. The same matching applied away from the late pre-quant interfaces falls below No Adapt (Table~\ref{tab:scope}), and a full match ($\lambda{=}1$) is worse than a partial one (Table~\ref{tab:sens}).

\begin{table}[t]
\centering
\caption{Two families of test-time adaptation. The \emph{backprop-based} family updates
parameters by gradient and cannot run on a frozen quantized inference graph; our comparison set
(and \method{}) is \emph{backprop-free}. F: forward, B: backward pass per batch.}
\label{tab:regime}
\resizebox{\textwidth}{!}{%
\begin{tabular}{l|cccc}
\toprule
Method & Backprop & Updates params & Passes/batch & Frozen-quant deployable \\
\midrule
TENT / EATA / SAR & yes & LN affine & 1F\,+\,1B & no \\
ActMAD & yes & all params & 1F\,+\,1B & no \\
TTT++ & yes & feature extractor & 1F\,+\,1B & no \\
\midrule
No Adapt & no & none & 1F & yes \\
T3A / NEO & no & none & 1F & yes \\
FOA / FOZO & no & prompt (grad-free) & 2F & yes \\
ZOA & no & LN affine (grad-free) & 2F & yes \\
\rowcolor{oursgreen}\textbf{\method{}} & no & none & 1F & yes \\
\bottomrule
\end{tabular}%
}
\end{table}

\begin{table}[t]
\centering
\caption{Seed spread behind Table~\ref{tab:backprop_acc}, which reports the means alone
(W6A6, ImageNet-C 15-corruption mean, 3 seeds, $\pm$ sample std). No Adapt performs no
adaptation and its spread across seeds is below $0.01$.}
\label{tab:backprop_std}
{\small
\begin{tabular}{l|cccccc}
\toprule
 & No Adapt & SAR & TENT & EATA & SPA & \method{} \\
\midrule
Mean top-1 & $47.68$ & \ms{49.77}{0.11} & \ms{50.05}{0.13} & \ms{52.79}{0.53} & \ms{59.14}{0.15} & \ms{\mathbf{54.35}}{0.02} \\
\bottomrule
\end{tabular}}
\end{table}

\section{ONNX representability check}
\label{app:deploy}

We verify that \ours{} is representable in a standard quantized inference graph rather than only in our PyTorch harness. Exporting the W8A8 model to ONNX (opset~17) succeeds, and each local recalibration site is exactly representable as a standard per-channel \texttt{Mul}/\texttt{Add} pair: ONNX Runtime reproduces the PyTorch logits, and the recalibration op itself matches bit-for-bit at the site level. The scale factor folds exactly into the activation quantizer, whereas the continuous per-channel shift over a per-tensor quantizer does not, so the explicit \texttt{Mul}/\texttt{Add} wrapper is the intended deployable representation; it adds one multiply and one add per recalibrated activation and updates no weights or quantizer parameters. This is a representability/executability check, not a latency, memory, or hardware claim. A full integer-only deployment with end-to-end accuracy parity, including the per-channel quantization that would let the shift fold and the aggressive W3A3/W4A4 bit-widths, is left to future work.

\section{Proofs and assumptions}
\label{app:theory}

This appendix states the full assumptions, gives the formal statements of Lemma~\ref{prop:qerr} and Proposition~\ref{prop:acc} that Section~\ref{sec:why} summarizes in prose, and proves them together with Proposition~\ref{prop:levels} and the recalibration-strength optimum Proposition~\ref{prop:shrink}, and adds a drift-free continual-adaptation result (Proposition~\ref{prop:drift}). The arguments are at the level of per-channel first and second moments and rest on the assumptions below; they are explanatory, not worst-case guarantees.

\paragraph{Setup and assumptions.}
We analyze a \emph{simplified uniform-quantizer model}. Consider one recalibrated interface and one channel $c$. The frozen activation quantizer is a $b$-bit uniform affine quantizer with step $\Delta_c$ and clipping range $[\mu_{s,c}-\alpha\sigma_{s,c},\ \mu_{s,c}+\alpha\sigma_{s,c}]$ \emph{calibrated on source data}, where $\alpha$ is the calibration clip factor and $(\mu_{s,c},\sigma_{s,c})$ are the source per-channel moments. This describes the activation quantizers of PTQ4ViT \citep{yuan2022ptq4vit} (twin uniform quantizers; our W6A6/W8A8 setting) at \emph{every} interface, and, crucially, also the $\{$qkv, proj, fc1$\}$ interfaces under AdaLog \citep{wu2024adalog} (W4A4/W3A3): AdaLog applies its logarithmic quantizer only to the post-Softmax and post-GELU activations, so among our recalibration interfaces only the post-GELU input to \texttt{mlp.fc2} is log-quantized, and only at W4A4/W3A3. The uniform model below therefore applies \emph{directly} to three of the four module types at all bit-widths; the single log-quantized interface is treated separately in paragraph~(iv) (Proposition~\ref{prop:logq}). We assume: \textbf{(A1)} activation shift is captured by the per-channel mean and variance, i.e.\ the standardized activation $(x_c-\mu_{\cdot,c})/\sigma_{\cdot,c}$ has an approximately shift-invariant, sub-Gaussian shape; \textbf{(A2)} the quantizer is calibrated to the source moments as above; \textbf{(A3)} the running target moments $(\mu_{t,c},\sigma_{t,c})$ track the test stream (slowly varying); \textbf{(A4)} the source moments are representative of the calibration distribution; and, used only for the optimality statement in paragraph~(ii), \textbf{(A5)} the frozen path from the interface to the logits is calibrated to be accurate at the source operating point, with expected task loss increasing locally quadratically in the per-channel operating-point deviation. (A1) is the standard covariate-shift premise behind test-time batch-norm statistic adaptation \citep{schneider2020improving}.

\paragraph{(i) Recalibration restores the source-calibrated quantization error.}
\begin{lemma}[Restored quantizer operating point]\label{prop:qerr}
Under \textup{(A1)--(A2)}, an activation whose per-channel moments have been matched to their source
values $\mu_{s,c}$ and $\sigma_{s,c}$ reoccupies the range the quantizer was calibrated on, and its
code occupancy matches that of the source-calibrated quantizer. A moment mismatch instead over- or
under-fills that range, displacing the code occupancy.
\end{lemma}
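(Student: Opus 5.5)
The argument reduces the quantizer's behaviour on channel $c$ to a function of the standardized variable $z_c=(x_c-\mu_{\cdot,c})/\sigma_{\cdot,c}$. Under (A1), $z_c$ has a fixed, shift-invariant law $P_Z$ (sub-Gaussian). Under (A2), the frozen grid has clipping range $[\mu_{s,c}-\alpha\sigma_{s,c},\,\mu_{s,c}+\alpha\sigma_{s,c}]$ and step $\Delta_c$, so code $k$ corresponds to a fixed interval $I_k$ of the activation axis. The code occupancy is then the vector $p_k=\Pr(x_c\in I_k)$, plus the two saturation masses at the clip boundaries. We write the test activation as $x_c=\mu_{t,c}+\sigma_{t,c}z_c$. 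Expressed on the source-standardized axis this is
\[u_c=(x_c-\mu_{s,c})/\sigma_{s,c}=\delta_c+r_c z_c,\qquad \delta_c=(\mu_{t,c}-\mu_{s,c})/\sigma_{s,c},\quad r_c=\sigma_{t,c}/\sigma_{s,c}.\]
In this coordinate the grid is the fixed lattice of step $\Delta_c/\sigma_{s,c}$ on $[-\alpha,\alpha]$. Every occupancy statement therefore becomes a statement about the law of $\delta_c+r_cz_c$ against a fixed lattice.

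\textbf{Matched case.} After the recalibration of Equation~\ref{eq:local} at $\lambda=1$, the activation has moments $(\mu_{s,c},\sigma_{s,c})$. Equivalently, the affine map sends $(\delta_c,r_c)\mapsto(0,1)$. Hence $u_c\overset{d}{=}z_c$, and this is exactly the law of the source activation on the same axis, by (A1) and (A4). Every $p_k$ then equals its source value, since each is a fixed functional of the law of $u_c$ and the fixed lattice. The fraction of mass inside $[-\alpha,\alpha]$ equals the calibrated value $\Pr(|Z|\le\alpha)$, which is near one by sub-Gaussianity. This proves the first sentence of the lemma. The main computation is to observe that the recalibration is an affine map preserving the standardized shape, so the law after the map is pinned down by the two moments alone.

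\textbf{Mismatched case.} Here $(\delta_c,r_c)\neq(0,1)$, and the plan is to show the occupancy differs from the source occupancy in a controlled, directional way. The tool is the effective range occupied, say the central $1-\eta$ quantile band of $u_c$, which is $[\delta_c+r_cq_-,\,\delta_c+r_cq_+]$ where $q_\pm$ are the quantiles of $P_Z$. Three cases follow.
\begin{itemize}
\item \emph{Contraction} ($r_c<1$, $\delta_c$ small). The band shrinks by the factor $r_c$ and so covers about $r_c$ times as many codes. The range is under-filled: outer codes lose mass and inner codes gain it. This ties to the level count of Proposition~\ref{prop:levels}.
\item \emph{Expansion} ($r_c>1$) or large $|\delta_c|$. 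The band exceeds $[-\alpha,\alpha]$ once $\max(|\delta_c+r_cq_\pm|)>\alpha$. Saturation mass then grows, with Chernoff-type tails bounding how fast, and the range is over-filled.
\item \emph{Pure mean offset} ($r_c=1$, $\delta_c\neq0$). Occupancy translates by about $\delta_c\sigma_{s,c}/\Delta_c$ codes, which moves mass toward one clip boundary.
\end{itemize}
To show the occupancy is genuinely displaced, and not merely perturbed in some uncontrolled way, I would bound the Wasserstein-1 distance between the code histograms. By the coupling $u=\delta+rz$ against $z$, this distance is at most $|\delta_c|+|r_c-1|\,\mathbb{E}|Z|$ up to $O(\Delta_c/\sigma_{s,c})$ rounding. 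It is also at least $\bigl||\delta_c|-O(\Delta_c/\sigma_{s,c})\bigr|$ from the means. This gives a Lipschitz-continuous notion of displacement that vanishes exactly at the matched point.

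\textbf{Expected obstacle.} The hardest step is the lower bound in the scale-only case, because the mean bound then gives nothing. For that case I would compare second moments of the dequantized codes, $|r_c^2-1|\,\mathrm{Var}Z-O(\Delta_c/\sigma_{s,c})$, and combine this with the granularity correction. This requires the step to be small relative to $\sigma_{s,c}|r_c-1|$, which I would state as the regime in which the lemma holds.
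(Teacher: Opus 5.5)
Your matched-case argument is the paper's proof. Once both moments are matched, the affine map from the standardized variable onto the frozen grid is the calibrated one, and (A1) fixes the standardized law, so the code law and the clip mass $P(|Z|>\alpha)$ return to their source values. Your three bullets are also the paper's treatment of a mismatch: a mean offset pushes mass past one clip edge, an inflated scale pushes it into both tails, and a contracted scale under-fills the range (the paper states this last case as a lowered SQNR). The paper stops at that qualitative level, so your Wasserstein-1 layer is an addition. As written it contains one false step and one unnecessary obstacle.

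\textbf{The false step} is the lower bound $|\delta_c|-O(\Delta_c/\sigma_{s,c})$ obtained from the means, because it ignores saturation. In source-standardized units both code histograms are supported on the clipped grid $[-\alpha,\alpha]$, so their $W_1$ distance is at most $2\alpha$ whatever $\delta_c$ is. Up to rounding, the mean of $Q(\delta_c+r_cZ)$ is that of $\mathrm{clip}(\delta_c+r_cZ)$, not $\delta_c$. The bound therefore needs a tail correction, and it fails outright in the large-offset regime of your second bullet. It should also be written as a positive part, since it must vanish at the matched point.

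\textbf{The obstacle} disappears once you note that $z\mapsto Q(\delta_c+r_cz)$ and $z\mapsto Q(z)$ are both nondecreasing. Coupling through the same $Z$ is then comonotone, hence $W_1$-optimal on the line, so the cost of the coupling you already use is exactly
\[
W_1=\mathbb{E}\,|Q(\delta_c+r_cZ)-Q(Z)|,
\]
and $|\delta_c|+|r_c-1|\,\mathbb{E}|Z|$ is only a further triangle-inequality bound on it. Strict positivity for every $(\delta_c,r_c)\neq(0,1)$ then needs neither second moments nor a step-size regime. Equal occupancy vectors would force $F((b_j-\delta_c)/r_c)=F(b_j)$ at every grid boundary $b_j$. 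If the CDF $F$ of $Z$ is strictly increasing (e.g.\ $Z$ has a positive density), this gives $b_j(1-r_c)=\delta_c$ for all $j$, and two distinct boundaries force $r_c=1$ and $\delta_c=0$. Your condition that $\Delta_c$ be small relative to $\sigma_{s,c}|r_c-1|$ is needed only if you want a displacement of order $|r_c-1|$, not for displacement itself.
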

\vspace{-4pt}
\begin{proof}[Proof of Lemma~\ref{prop:qerr}]
Matching the per-channel moments fixes the affine map from the standardized activation onto the grid,
and by (A1) the standardized shape is preserved, so the induced distribution over code levels is the
one the quantizer was calibrated on; that is the occupancy claim. Its two familiar consequences
follow. The quantization error splits into a granular term and a clipping term
\citep{banner2019post}.
The granular term is $\Delta_c^2/12$, independent of the input, so relative to the signal power
the $\mathrm{SQNR}\approx 12\,\mathrm{Var}(\hat{x}_c)/\Delta_c^2$. After full recalibration
$\mathrm{Var}(\hat{x}_c)=\sigma_{s,c}^2$, exactly the variance $\Delta_c$ was calibrated for, so
the SQNR returns to its source value. The clipping term is the tail energy beyond
$\pm\alpha\sigma_{s,c}$ around $\mu_{s,c}$; by (A1) the standardized shape is preserved, so after
recalibration the standardized clip threshold is again $\pm\alpha$ and the clipping probability
$P(|Z|>\alpha)$ returns to its source value. Without recalibration a mean offset pushes mass
past one clip edge, an inflated $\sigma_{t,c}>\sigma_{s,c}$ moves mass into both tails (both raise
the clipping energy), and an under-scaled $\sigma_{t,c}<\sigma_{s,c}$ lowers the SQNR.
\end{proof}
This matches the diagnostic: the local branch reduces the per-channel source--target moment mismatch (Table~\ref{tab:mismatch}), whereas a final-layer recalibration (Global-only, NEO) acts downstream of these interfaces and leaves it untouched. The mismatch is a property of the shifted activation rather than something quantization newly induces: the post-dequantization tensor is nearly identical to the pre-quant one at these interfaces (Table~\ref{tab:mismatch}), so what the quantizer sees is displaced for the same reason the activation is.

\paragraph{(i-b) What a displaced operating point costs, before asking how much.}
Lemma~\ref{prop:qerr} says what restoration returns. The next statement isolates what a \emph{displaced} operating point costs, and it is deliberately free of any claim about magnitude.

\begin{proof}[Proof of Proposition~\ref{prop:levels}]
(i) The calibrated range is $\pm\rho\sigma_{s,c}$ about $\mu_{s,c}$ in steps of $\Delta$, so the
occupied level count is proportional to the scale and $\log_2(L_{s,c}/L_{t,c})=b_c$. (ii) A random
variable supported on at most $L$ values has entropy at most $\log_2 L$, and
$I(Q(x_c);y)\le H(Q(x_c))$. (iii) A floating-point representation allocates a fixed mantissa to
every magnitude, so scaling the activation leaves the number of representable values unchanged;
a quantizer calibrated on the test stream would set $\Delta\propto\sigma_{t,c}$, leaving $L$ fixed.
(iv) Immediate from (i), and $b_c$ is channel-dependent, so one scalar gain cannot zero every
$b_c$. Only the frozen step $\Delta$ is common to the channels; the occupied fraction of it is not.
\end{proof}
Two things this does not say. It bounds a channel's information budget, not the accuracy that budget buys, so translating $b_c$ into points would need a model of how the decision weights channels, which we do not assume. And it is silent on channels that expand, for which $b_c<0$ and the binding constraint is clipping instead (Lemma~\ref{prop:qerr}).

\emph{Which term operates in practice.} Empirically the clipping term is negligible at the tested bit-widths (activation saturation is $\leq 0.1\%$ at W6A6 and $\approx 3\%$ at W4A4, and recalibration does not reduce it), so the operative restored quantity is the code distribution rather than a lowered clipping rate. Neither figure is surprising. At W6A6 the calibrated clip edges sit between $5.5$ and $18.6$ source standard deviations from the mean at the recalibration interfaces, while the shift the corruptions produce is $0.082\sigma$ in the mean and a contraction of the scale to $0.92$ of its calibrated value, both measured per channel against that channel's own source scale, so the distribution never approaches the edges and the tail mass beyond them is unchanged either way. At W4A4 the edges are much closer, sixteen levels forcing calibration to trade clipping for a finer step, but the $3\%$ is then the rate the quantizer was \emph{calibrated} to have: recalibration returns the activation to the source operating point, and that point clips at $3\%$. Restoring the clipping probability and lowering it are different things, and Lemma~\ref{prop:qerr} claims only the first. Two facts fix which part of the mismatch is doing the work. The granular error power $\Delta_c^2/12$ is set by the frozen step and no test-time correction can change it, so what recalibration can move is where the activation sits on the code grid and how many of its levels each channel occupies, not how finely that grid is spaced. And the mismatch is dominated by position: at the recalibration interfaces the per-channel mean deviation is $0.235$ against $0.112$ for the scale (Table~\ref{tab:mismatch}), a factor of $2.1$. These are activation units rather than the per-channel $\sigma$ used just above, which is the comparison that matters for the code distribution: one step $\Delta$ serves every channel of an interface, so the two components are already on a common scale and dividing each by its own $\sigma$ would compare quantities the quantizer never sees separately. A mean offset leaves the granular error power untouched while relocating the activation on the frozen grid, which is precisely what the code distribution registers and what the frozen downstream weights see.

Aggregate SQNR is the wrong statistic for this, and measurably so. It is a ratio of sums, $\sum_c\sigma_c^2$ against $d\,\Delta^2/12$, so it is invariant to how a fixed total activation energy is allocated across channels, whereas the resolution each channel actually receives is a per-channel quantity (Proposition~\ref{prop:levels}). The two disagree on this stream. Corruption costs $0.53$\,dB of aggregate SQNR at these interfaces and \method{} leaves it essentially where it finds it ($-0.06$\,dB), yet per channel $20.6\%$ lose at least half a bit of code resolution and $4.8\%$ lose a full bit, which \method{} cuts to $6.9\%$ and $0.3\%$ (Table~\ref{tab:levels}). The aggregate figure understates the per-channel degradation by $4.5\times$, and the understatement is not a small-denominator artifact: restricting to channels above the median source scale raises it rather than removing it. What the corruption does to these interfaces is under-fill them, the summed per-channel second moment falling to $0.92$ of its calibration value at $15$ of the $16$ interfaces, which is a displacement of the operating point and not a coarsening of the grid. The failure is positional, and so is the correction.

\begin{table}[ht]
\centering
\caption{\emph{Per-channel code resolution lost under shift} (W6A6 ViT-B, ImageNet-C sev.\ 5,
all 15 corruptions, 5k subset; all $17$ recalibration interfaces). For
each channel we report $b_c=\log_2(\sigma_{s,c}/\sigma_{t,c})$, the code levels it forfeits against
its own calibration (Proposition~\ref{prop:levels}); the mean counts a channel that expands as zero,
since the proposition is silent on those. T3A, NEO and Global-only act at or after the classifier,
downstream of these interfaces, and are identical to No Adapt to every decimal. FOA and FOZO
perturb input-side prompts and make the starvation worse. ZOA writes directly upstream, adapting
the LayerNorm affine parameters of blocks 1--8, and still removes $1\%$. Local-only coincides with
\method{} because both apply the local branch here. The head column is included for scope rather than
as a second restoration result: the head input is post-LayerNorm and starves under no method, with
no configuration putting a single one of its $768$ channels past half a bit, so differences within
that column are between negligible quantities and are not restoration.}
\label{tab:levels}
{\small
\begin{tabular}{l|ccccc|c}
\toprule
 & \multicolumn{5}{c|}{$16$ local interfaces ($21{,}504$ channels)} & head ($768$) \\
 & mean $b_c$ & $b_c\!\geq\!0.5$ & $b_c\!\geq\!1$ & 99th pct. & worst & mean $b_c$ \\
\midrule
No Adapt   & $0.277$ & $20.6\%$ & $4.8\%$ & $1.43$ & $2.51$ & $0.003$ \\
T3A        & $0.277$ & $20.6\%$ & $4.8\%$ & $1.43$ & $2.51$ & $0.003$ \\
FOA        & $0.319$ & $24.3\%$ & $6.3\%$ & $1.54$ & $3.27$ & $0.115$ \\
FOZO       & $0.386$ & $28.7\%$ & $10.6\%$ & $1.83$ & $3.71$ & $0.221$ \\
ZOA        & $0.274$ & $20.4\%$ & $4.6\%$ & $1.43$ & $2.49$ & $0.007$ \\
NEO        & $0.277$ & $20.6\%$ & $4.8\%$ & $1.43$ & $2.51$ & $0.030$ \\
Global-only & $0.277$ & $20.6\%$ & $4.8\%$ & $1.43$ & $2.51$ & $0.001$ \\
Local-only & $\underline{0.178}$ & $\underline{6.9\%}$ & $\underline{0.3\%}$ & $\underline{0.78}$ & $\underline{2.05}$ & $0.004$ \\
\rowcolor{oursgreen}\textbf{\method{}} & $\mathbf{0.178}$ & $\mathbf{6.9\%}$ & $\mathbf{0.3\%}$ & $\mathbf{0.78}$ & $\mathbf{2.05}$ & $\mathbf{0.000}$ \\
\bottomrule
\end{tabular}}
\end{table}

On the $4{,}432$ channels the shift actually starves (those losing at least half a bit) \method{} removes $48\%$ of the loss, and the fraction of channels running below five effective activation bits at W6A6 falls from $4.8\%$ to $0.3\%$. No compared method removes more than $1\%$, and two of them add to it. The split is not a matter of degree but of where a method writes: the three that act at or after the classifier cannot reach these interfaces at all, and of the three that do write upstream, the two prompt-based ones move the activations further off the grid they were calibrated on while \method{} moves them back.

\emph{Where the starvation lives.} It is not spread over the interfaces. Splitting them by whether a LayerNorm sits immediately before the quantizer separates them completely (Table~\ref{tab:levelsgroup}): at the interfaces that follow one, \texttt{attn.qkv} after \texttt{norm1} and \texttt{mlp.fc1} after \texttt{norm2}, not a single channel loses half a bit, and the worst loses $0.31$. The starvation is confined to the two interfaces with no normalization in front of them, the attention output and the post-GELU input, where $28.9\%$ of channels pass that threshold and the worst loses $2.51$ bits. This is consistent with the normalization returning the activation to a scale close to the one the quantizer was calibrated for before it is quantized, leaving the frozen grid its calibrated fill; we report it as an observation about where the mechanism operates rather than as a further claim. The head input is post-LayerNorm and behaves accordingly, losing $0.003$ bits on average under No Adapt with no channel past $0.07$. No compared method takes any head channel past half a bit either, the largest excursion any method produces at a post-LayerNorm interface being $0.46$ bits, against $3.71$ at an unnormalized one. The two post-LayerNorm groups were checked for all nine configurations, so this is measured rather than inferred from where each method writes; T3A, which touches only the classifier, reproduces the No-Adapt head statistics exactly.

\emph{Is restoring the resolution worth anything?} Proposition~\ref{prop:levels} predicts that matching the \emph{scale}, as opposed to only recentering, should be worth more where resolution has actually been forfeited. Because the two interface groups are disjoint, this can be tested within each group, which holds module type, width and depth fixed and toggles only whether the scale is matched. At W6A6 with the local branch alone, against No Adapt's $47.68$: recentering only (\emph{mean}) gives $50.21\pm0.07$ at the eight post-LayerNorm interfaces and $49.77\pm0.04$ at the eight unnormalized ones, and adding the rescale (\emph{mean+var}) gives $51.25\pm0.05$ and $52.49\pm0.08$, all 3-seed. The rescale is therefore worth $+1.04$ where no resolution was lost and $+2.72$ where $28.9\%$ of channels forfeited half a bit or more, a factor of $2.6$, and the unnormalized group alone accounts for $97\%$ of the rescale's effect over all sixteen ($+2.79$). Recentering runs the other way, worth more at the post-LayerNorm interfaces ($+2.52$ against $+2.08$), where the scale is already close and the displacement is nearly a pure mean offset. The control reproduces Table~\ref{tab:factorial}: recentering only over all sixteen gives $50.54\pm0.01$ against that table's $50.55$. Two limits on the reading. The $+1.04$ at the interfaces with nothing to restore shows the rescale also does something other than restoring resolution, so the $+2.72$ cannot be assigned to resolution alone; the defensible statement is the ratio, not a split of the points. And this compares a fitted quantity across groups that differ in more than resolution, which is why the comparison is made within each group rather than between them. The global branch therefore does not act by restoring code resolution, and the head's code-distribution restoration is a displacement of the mean rather than of the scale. Proposition~\ref{prop:levels} covers the local branch, and the value-space distance of Equation~\ref{eq:codedist}, which moves with both, is what covers both.

\begin{table}[ht]
\centering
\caption{\emph{Code-level starvation is confined to the quantizers with no LayerNorm in front of
them} (W6A6, ImageNet-C sev.\ 5, all 15 corruptions). Groups partition the $16$ local interfaces
by whether a LayerNorm immediately precedes the quantizer; the head input is listed separately and
is also post-LayerNorm. $b_c\!\geq\!0.5$ is the fraction of channels forfeiting at least half a bit
of code resolution.}
\label{tab:levelsgroup}
{\small
\begin{tabular}{l|c|cc|cc}
\toprule
& & \multicolumn{2}{c|}{No Adapt} & \multicolumn{2}{c}{\method{}} \\
Interface group & channels & mean $b_c$ & $b_c\!\geq\!0.5$ & mean $b_c$ & $b_c\!\geq\!0.5$ \\
\midrule
\texttt{attn.proj}, \texttt{mlp.fc2} (no preceding norm) & $15{,}360$ & $0.365$ & $28.9\%$ & $0.238$ & $9.7\%$ \\
\texttt{attn.qkv}, \texttt{mlp.fc1} (post-LayerNorm) & $6{,}144$ & $0.056$ & $0.0\%$ & $0.030$ & $0.0\%$ \\
head input (post-LayerNorm) & $768$ & $0.003$ & $0.0\%$ & $0.000$ & $0.0\%$ \\
\bottomrule
\end{tabular}}
\end{table}
We deliberately stop there and do not convert bits into accuracy points: the bit-width exchange rate prices a change of precision across the whole network, including the weights and every interface we leave alone, so it cannot price a local statistic at sixteen activation interfaces, and accuracy is not linear in log-power either. The claim Proposition~\ref{prop:levels} supports is directional, that a frozen grid can only lose resolution when the activation contracts and that the lost resolution caps what those channels can carry, and that is the claim the table tests.

Measuring the quantizer code-distribution distance to source on the deployed adapter over the full ImageNet-C stream (5000 images per corruption), at the $17$ recalibration interfaces (blocks 8--11 $\times$ \{\texttt{attn.qkv}, \texttt{attn.proj}, \texttt{mlp.fc1}, \texttt{mlp.fc2}\} and the head input) recalibration cuts this distance by $52\%$ (W6A6, all 15 corruptions, 3 seeds), i.e.\ it restores the quantizer's code usage toward its source calibration. The two interface groups behave differently and both are covered: over the 16 intermediate interfaces the local branch removes $47.2\%$ of the drift where no baseline exceeds $1.5\%$, and at the head input, the one interface NEO and FOA also act at, the global branch removes $90.0\%$ against their $20.6\%$ and $48.0\%$. Only \method{} is high in both. That head ordering is the scale correction made visible: NEO subtracts a running test mean and never references the source, FOA shifts by the source mean but not the scale, and \method{} matches both. The code-distribution restoration \emph{is} the quantization-specific manifestation of the moment alignment, and Proposition~\ref{prop:acc} below makes its link to accuracy precise, bounding the quantization error's contribution to the logits. This restoration is specific to matching the source statistics. ZOA, the one baseline designed for the quantized setting, adapts LayerNorm affine parameters against a feature-discrepancy objective rather than acting at the quantizer, and barely moves the code distribution (episodic, same interfaces, $-4.0\%$ versus \ours{}'s $-51.7\%$); T3A leaves it exactly unchanged, FOA and NEO move it only through the head interface ($-5.6\%$ and $-2.2\%$ overall), and FOZO increases it ($+11.8\%$) (Figure~\ref{fig:teaser}b).

\begin{table}[t]
\centering
\caption{\emph{Per-corruption quantizer code-distribution restored (\%), W6A6.} For each method,
the percentage of the corruption-induced code-distribution drift removed at the late
pre-quant interfaces (blocks 8--11 $\times$ \{qkv, proj, fc1\}; higher is better, $0=$ No Adapt).
Restored \% $=(d_{\text{NoAdapt}}-d_{\text{method}})/d_{\text{NoAdapt}}\times100$ on the $L_1$
code-histogram distance to the source calibration. 3-seed mean${}\pm{}$std. \method{} uses the
cumulative estimator. \emph{Reference:} No Adapt $=0$ by definition; T3A and NEO coincide with No
Adapt at these interfaces (they modify only the classifier prototypes / head input, never the
blocks 8--11 pre-quant activations), hence $0$. The \emph{Mean} row is the unweighted mean over the
15 corruptions; Figure~\ref{fig:teaser}b reports instead the drift-weighted summary (ratio of the
15-corruption mean distances; $+41.3\pm0.1\%$ for \method{} across the three seeds), which
up-weights high-drift corruptions such as contrast.}
\label{tab:codedist_full}
\footnotesize
\begin{tabularx}{\textwidth}{X|ccccc|c}
\toprule
Corruption & T3A & FOA & FOZO & ZOA & NEO & \method{} (cum.) \\
\midrule
Gaussian noise    & $0.0$ & $-2.5{\pm}1.9$ & $-14.4{\pm}1.2$ & $-0.4{\pm}1.6$ & $-4.3$ & $\mathbf{+58.9{\pm}0.2}$ \\
Shot noise        & $0.0$ & $-1.0{\pm}0.1$ & $-12.7{\pm}1.8$ & $+0.9{\pm}0.2$ & $-3.6$ & $\mathbf{+57.2{\pm}0.6}$ \\
Impulse noise     & $0.0$ & $+1.2{\pm}2.4$ & $-13.6{\pm}1.9$ & $-2.0{\pm}1.2$ & $-4.8$ & $\mathbf{+58.8{\pm}0.4}$ \\
Defocus blur      & $0.0$ & $+9.4{\pm}2.4$ & $-19.3{\pm}6.4$ & $+6.6{\pm}2.9$ & $+9.0$ & $\mathbf{+40.5{\pm}0.2}$ \\
Glass blur        & $0.0$ & $+10.2{\pm}2.2$ & $-9.3{\pm}5.4$ & $+4.4{\pm}3.4$ & $+10.0$ & $\mathbf{+46.4}$ \\
Motion blur       & $0.0$ & $+15.6{\pm}3.3$ & $-10.0{\pm}4.5$ & $+7.0{\pm}1.5$ & $+15.9$ & $\mathbf{+42.2{\pm}0.7}$ \\
Zoom blur         & $0.0$ & $+20.3{\pm}2.4$ & $-3.2{\pm}3.5$ & $+8.5{\pm}2.2$ & $+16.9$ & $\mathbf{+45.1{\pm}0.4}$ \\
Snow              & $0.0$ & $+13.7{\pm}2.9$ & $-3.8{\pm}4.0$ & $+5.5{\pm}3.8$ & $+13.3$ & $\mathbf{+41.6{\pm}0.4}$ \\
Frost             & $0.0$ & $+15.7{\pm}2.7$ & $-1.1{\pm}5.7$ & $+6.0{\pm}1.8$ & $+11.5$ & $\mathbf{+39.7{\pm}0.1}$ \\
Fog               & $0.0$ & $+14.9{\pm}6.4$ & $-11.3{\pm}6.1$ & $+15.6{\pm}0.8$ & $-3.7$ & $\mathbf{+43.9{\pm}0.3}$ \\
Brightness        & $0.0$ & $-15.1{\pm}10.1$ & $-52.6{\pm}22.6$ & $+6.7{\pm}2.2$ & $-6.1$ & $\mathbf{+37.9{\pm}1.2}$ \\
Contrast          & $0.0$ & $-0.8{\pm}2.5$ & $-7.5{\pm}2.1$ & $+2.3{\pm}0.3$ & $-5.7$ & $\mathbf{+61.6{\pm}0.1}$ \\
Elastic transform & $0.0$ & $+26.7{\pm}1.2$ & $+2.3{\pm}4.4$ & $+14.3{\pm}0.6$ & $+25.2$ & $\mathbf{+32.1{\pm}0.2}$ \\
Pixelate          & $0.0$ & $+8.4{\pm}6.0$ & $-31.1{\pm}17.3$ & $+5.7{\pm}1.1$ & $+14.3$ & $\mathbf{+48.3{\pm}1.2}$ \\
JPEG compression  & $0.0$ & $+4.3{\pm}2.5$ & $-34.4{\pm}7.8$ & $+4.1{\pm}0.8$ & $+5.4$ & $\mathbf{+50.5{\pm}0.9}$ \\
\midrule
\textbf{Mean}     & $0.0$ & $+8.1$ & $-14.8$ & $+5.7$ & $+6.2$ & $\mathbf{+47.0}$ \\
\bottomrule
\end{tabularx}
\end{table}

\paragraph{Per-corruption code-distribution restoration.}
Table~\ref{tab:codedist_full} breaks the Figure~\ref{fig:teaser}b summary down by corruption. \method{} restores the quantizer code distribution toward its source calibration on \emph{every} one of the 15 corruptions (from $+32.1\%$ on elastic transform to $+61.6\%$ on contrast), and the restoration is highly stable across seeds (std $\le 1.2\%$ per cell). The baselines do not. T3A is exactly $0\%$ on every corruption, as it rewrites the classifier's decision rule and leaves every activation in the network untouched. NEO, FOA and ZOA move the distribution only slightly in the mean ($+6.2\%$, $+8.1\%$, $+5.7\%$), and almost all of that comes from the head interface where they act; over the 16 intermediate interfaces they sit at $1.5\%$ or below. FOZO makes it \emph{worse} ($-14.8\%$ mean), as re-optimizing the model or prompts is not aligned with the frozen quantizers' calibration. This is the direct, per-corruption evidence that the quantizer-level mismatch is a real and otherwise-unaddressed failure mode, and that \method{}'s pre-quant recalibration is what targets it.

\paragraph{From the restored code distribution to a bounded accuracy loss.}
\begin{proposition}[Bounded accuracy loss]\label{prop:acc}
Let the sub-network from a recalibrated interface to the logits be $K$-Lipschitz, with $K_{>\ell}$
the product of the per-block operator norms after interface $\ell$, and
$e_{\ell,c}=Q(h_{\ell,c})-h_{\ell,c}$ the activation quantization error. The logit perturbation
relative to the unquantized network obeys $\|\hat z-z\|\le\sum_\ell K_{>\ell}\,\|e_\ell\|$. Each
$\mathbb{E}\,e_{\ell,c}^2$ splits into a granular term $\Delta_{\ell,c}^2/12$, fixed by the frozen
step and hence \emph{invariant} to recalibration, and a clipping term that grows once the activation
leaves the calibrated range.
\end{proposition}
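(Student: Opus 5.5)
Two independent claims are involved, and I would prove them separately: a deterministic perturbation bound on the logits, and a per-channel decomposition of the mean-squared quantization error.

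\emph{Logit bound.} The plan is a telescoping hybrid argument over the ordered set of quantized interfaces $\ell_1<\dots<\ell_L$. Let $z^{(j)}$ be the logits of a hybrid network in which quantization is applied at interfaces $\ell_1,\dots,\ell_j$ and omitted after them. Then $z^{(0)}=z$ and $z^{(L)}=\hat z$, so
\[
\hat z-z=\sum_{j=1}^{L}\big(z^{(j)}-z^{(j-1)}\big).
\]
The networks $z^{(j)}$ and $z^{(j-1)}$ coincide up to interface $\ell_j$. At $\ell_j$ they differ only by the additive error $e_{\ell_j}$ on the activation that enters the layer. After $\ell_j$ they run the same unquantized map, and that map is $K_{>\ell_j}$-Lipschitz because Lipschitz constants compose multiplicatively through per-block operator norms.

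This gives $\|z^{(j)}-z^{(j-1)}\|\le K_{>\ell_j}\|e_{\ell_j}\|$, and the triangle inequality yields the bound. The subtle point is which activation $h_\ell$ the error is measured on. In the hybrid, $h_{\ell_j}$ is the activation after upstream quantization, so $e_\ell$ must be read as the error the quantizer actually commits in the deployed forward pass. I would state this interpretation explicitly, since it is what the measured $\|e_\ell\|$ in Table~\ref{tab:marginflip} uses. For a nonlinear block (LayerNorm, softmax, GELU), ``operator norm'' should mean its Lipschitz constant on the relevant domain. I would take that as part of the hypothesis rather than derive it.

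\emph{Error split.} Fix a channel and write the clip range as $[a,b]$ with $a=-s z + s q_{\min}$ and $b=s(q_{\max}-z)$. Decompose
\[
\mathbb{E}\,e^2=\mathbb{E}[e^2\mathbf 1_{h\in[a,b]}]+\mathbb{E}[e^2\mathbf 1_{h\notin[a,b]}].
\]
On the in-range event $e$ is pure rounding error, bounded by $\Delta/2$. Under the standard high-resolution (Bennett) approximation, justified by (A1) with a smooth sub-Gaussian density whose scale $\sigma$ is large compared with $\Delta$, $e$ is approximately uniform on $[-\Delta/2,\Delta/2]$ conditional on the bin. The granular term is therefore $\approx P(h\in[a,b])\,\Delta^2/12\le\Delta^2/12$, which depends only on the frozen step. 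Recalibration changes $h$ but not $\Delta$, so this term is invariant up to the in-range probability factor, which is $1-o(1)$ at the tested widths.

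Outside the range, $e=a-h$ or $b-h$, so the clipping term is $\mathbb{E}[(h-b)_+^2]+\mathbb{E}[(a-h)_+^2]$. This is zero while the activation stays in range and grows with the tail mass past the edges. Lemma~\ref{prop:qerr} then says that matching moments returns this term to its source value.

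\emph{Main obstacle.} The hardest step is making the granular term exactly $\Delta^2/12$ rather than approximately so. Uniformity of rounding error holds only asymptotically, or under dithering. I would handle this by invoking the high-resolution regime as an explicit approximation, in line with the paper's ``first-order'' framing, and by noting that the uniform upper bound $\Delta^2/4$ holds unconditionally. That bound suffices for the invariance claim, because any bound depending on $\Delta$ alone is unaffected by recalibration.
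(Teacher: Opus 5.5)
Your proposal is correct and follows the paper's own argument. The paper likewise treats the quantized pass as the unquantized composition with additive per-interface errors $e_\ell$, propagated through the downstream $K_{>\ell}$-Lipschitz maps and summed by the triangle inequality, and it takes the granular/clipping split as the standard high-resolution decomposition, so your telescoping hybrid (reading $e_\ell$ on the deployed-pass activation, with the unquantized map downstream) is the rigorous version of its one-line sketch. Drop the remark that the unconditional $\Delta^2/4$ bound ``suffices for the invariance claim'': an upper bound that recalibration leaves unchanged does not make the granular term itself unchanged, so invariance genuinely rests on the high-resolution approximation ($\sigma\gg\Delta$, an assumption beyond (A1)), exactly as in the paper.
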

\vspace{-4pt}
\begin{proof}[Proof of Proposition~\ref{prop:acc}]
The quantized forward pass is the FP composition with per-interface additive perturbations
$e_\ell$; propagating each through the remaining $K_{>\ell}$-Lipschitz maps and applying the
triangle inequality gives the sum bound. The granular error power $\Delta_c^2/12$ is fixed by the
frozen step, so recalibration leaves the absolute $\|e_\ell\|$ unchanged in the granular regime and
reduces it only through the clipping term (empirically small at the tested bit-widths); what it
returns to the source value is the operating point itself, the activation reoccupying its calibrated
range and source code levels (Lemma~\ref{prop:qerr}); the relative fidelity
$\mathrm{SQNR}_{\ell,c}=\mathrm{Var}(\hat h_{\ell,c})/\mathbb{E}\,e_{\ell,c}^2$ follows from that.
A prediction is preserved while the propagated perturbation stays below the unquantized logit
margin. The shift shrinks that margin while $\|e_\ell\|$ is unchanged (both measured in
Appendix~\ref{app:analysis}), which is why quantization
costs more under shift and costs more the coarser the step, $\|e_\ell\|$ being set by it; returning
the activation to the calibrated operating point restores the margin toward its source value and so
caps the quantization-induced label flips.
\end{proof}
This isolates why the gain concentrates on quantized models (Figure~\ref{fig:teaser}a): the term it acts on, $\{e_\ell\}$, exists \emph{only} under quantization, and on a \emph{frozen} quantizer its power cannot be reduced at all, only returned to the source operating point by moving the activation back into the calibrated range, which is precisely what the code-distribution restoration of Figure~\ref{fig:teaser}b measures. Two caveats keep the claim honest. First, the bound is on the quantization error alone; the recalibration's own estimation error (the $\eta$ of Proposition~\ref{prop:shrink}) adds a competing term, so it is minimized at the shrinkage optimum $\lambda^\star$, not at maximal restoration. This is measurable: the full local match ($\lambda{=}1$) restores $84.2\%$ of the code-distribution drift against \method{}'s $51.7\%$, and scores $53.52$ against $54.35$ (Table~\ref{tab:sens}). The metric is therefore not a proxy for accuracy, and the configuration that maximises it is not the one to deploy. Second, restoring the code distribution is necessary to remove $\{e_\ell\}$ but not by itself sufficient for accuracy, which also requires the corrected activation to remain a faithful estimate of the test signal; the two hold together only for a stable, partial correction (made precise in Corollary~\ref{cor:partial}).

\paragraph{Depth-asymmetry: why the late blocks.} The same bound also speaks to \emph{where} to
recalibrate. A recalibration at interface $\ell$ is imperfect: the scale-estimation noise of Proposition~\ref{prop:shrink} (a finite-sample $\widehat{\sigma}_{t,c}$) leaves an activation-space residual of norm $r_\ell$ on top of the corrected activation.
\begin{corollary}[Depth-asymmetry of recalibration]\label{prop:depth}
Adopt the hypotheses of Proposition~\ref{prop:acc} and model the path from interface $\ell$ to the
logits as a fixed $K_{>\ell}$-Lipschitz map (i.e.\ hold the downstream quantization errors fixed, or
absorb their first-order change into $K_{>\ell}$). Then an additional activation-space residual of
norm $r_\ell$ at interface $\ell$ enlarges the logit-perturbation bound
$\|\hat z-z\|\le\sum_\ell K_{>\ell}\|e_\ell\|$ by at most one further term, $K_{>\ell}\,r_\ell$.
Hence, for a fixed per-interface residual, its bounded contribution to the logit perturbation scales
with the downstream gain $K_{>\ell}$.
\end{corollary}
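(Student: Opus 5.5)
The plan is to reuse the perturbation-propagation argument behind Proposition~\ref{prop:acc}, treating the recalibration residual at interface $\ell$ as one more additive perturbation injected at that interface. First, write the quantized, recalibrated forward pass as the unquantized composition $z=g_{>\ell}\circ g_{\le\ell}(x)$ with additive perturbations at each interface. Under the corollary's modelling assumption, the map $g_{>\ell}$ from interface $\ell$ to the logits is a fixed $K_{>\ell}$-Lipschitz function, with the downstream quantization errors held fixed or their first-order change absorbed into $K_{>\ell}$. The activation entering it is then $h_\ell+e_\ell+\delta_\ell$, where $\|\delta_\ell\|=r_\ell$ is the estimation residual left by the imperfect correction.

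Second, compare the logits with and without $\delta_\ell$. By the Lipschitz property,
\[
\|g_{>\ell}(h_\ell+e_\ell+\delta_\ell)-g_{>\ell}(h_\ell+e_\ell)\|\le K_{>\ell}\,r_\ell .
\]
By the triangle inequality,
\[
\|\hat z'-z\|\le\|\hat z'-\hat z\|+\|\hat z-z\|\le K_{>\ell}r_\ell+\sum_{\ell'}K_{>\ell'}\|e_{\ell'}\|,
\]
where the second term is the bound already established in Proposition~\ref{prop:acc}. This gives exactly the one extra term $K_{>\ell}r_\ell$. Equivalently, one can merge $\delta_\ell$ into $e_\ell$ and rerun the proof of Proposition~\ref{prop:acc} with $\|e_\ell+\delta_\ell\|\le\|e_\ell\|+r_\ell$. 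The scaling claim then follows immediately: for fixed $r_\ell$ the added term is linear in $K_{>\ell}$, and since $K_{>\ell}$ is a product of per-block norms over a shrinking set of blocks as $\ell$ moves later, it is (weakly) smaller for late interfaces whenever those norms are at least one.

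The main obstacle is justifying the "fixed map" idealisation. In reality, moving the activation at $\ell$ changes which codes the downstream quantizers select, so $g_{>\ell}$ is not a fixed Lipschitz map: rounding is discontinuous. I would handle this exactly as the statement permits. Either freeze the downstream errors $e_{\ell'}$ ($\ell'>\ell$) as additive constants, so that $g_{>\ell}$ is the composition of the full-precision blocks and remains Lipschitz; or note that the induced change in each $e_{\ell'}$ is bounded by one step plus a term proportional to the propagated residual, and absorb its first-order part into an enlarged constant $K_{>\ell}$. With that in place, the argument is only the triangle inequality together with the composition of Lipschitz constants.
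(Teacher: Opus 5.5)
Your proof is correct and follows essentially the same route as the paper's sketch. The paper likewise propagates the residual through the same $K_{>\ell}$-Lipschitz downstream map as $e_\ell$ in Proposition~\ref{prop:acc} and uses the triangle inequality to obtain the single extra term $K_{>\ell}\,r_\ell$; your explicit treatment of the fixed-map idealisation (freezing the downstream errors as translations, which do not change the Lipschitz constant) spells out what the paper leaves implicit.
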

\begin{proof}[Proof sketch]
Under the fixed-downstream-map model, the residual $r_\ell$ propagates to the logits through the
same $K_{>\ell}$-Lipschitz composition as the quantization error $e_\ell$ in
Proposition~\ref{prop:acc}; by the triangle inequality its contribution to $\|\hat z-z\|$ is at most
$K_{>\ell}\,r_\ell$, an additional term of the same form.
\end{proof}
So the downstream gain $K_{>\ell}$ controls how much an imperfect correction at $\ell$ can perturb the logits: it bounds the \emph{risk} of intervening at $\ell$. We probe $K_{>\ell}$ with an empirical local proxy (Figure~\ref{fig:dgain}): injecting a controlled perturbation at each block input and reading the induced logit change. This proxy generally decreases with depth, from $\approx\!0.26$ averaged over the early blocks (0--3) to $\approx\!0.059$ over the late blocks (8--11), a $\sim\!4.4\times$ attenuation (near-monotone, with one small reversal); propagated through the \emph{quantized} downstream the same proxy falls from $>\!1$ at early blocks, i.e.\ genuine amplification, to $<\!1$ at late blocks. A correction at a late interface therefore has its residual attenuated most before reaching the head, while the same correction placed early is weighted by a much larger $K_{>\ell}$, so its residual, and any error it injects into the activation that the downstream quantizers then re-quantize, is amplified toward the logits. This is the risk-side counterpart of the empirical collapse of early- and middle-block recalibration below No Adapt (Table~\ref{tab:scope}). The bound identifies the late blocks as the \emph{lower-risk} site for an imperfect correction; the accuracy benefit and the number of blocks are determined empirically (the four-block plateau; Appendix~\ref{app:scope}).

\begin{figure}[t]
\centering
\includegraphics[width=0.55\textwidth]{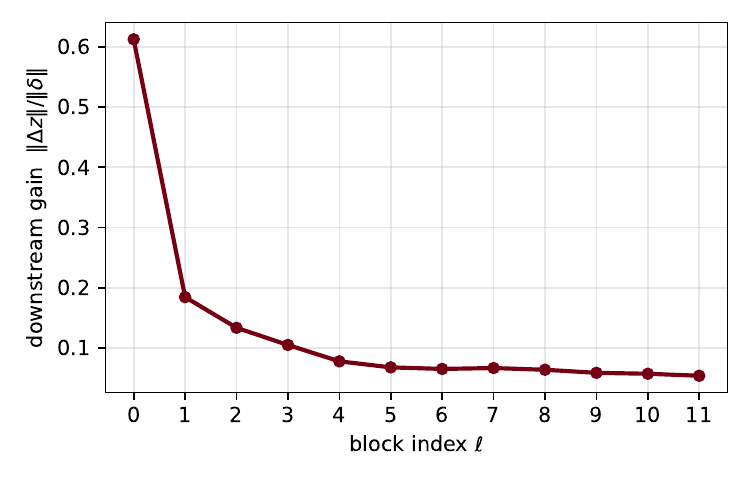}
\caption{\emph{Downstream perturbation gain by depth} (W6A6 ViT-B, clean ImageNet; empirical local proxy for $K_{>\ell}$, full-precision composition). A perturbation injected at each block input induces a logit change $\|\Delta z\|/\|\delta\|$ that generally decreases with depth (early blocks 0--3 mean $0.26$ vs.\ late 8--11 mean $0.059$, ${\sim}4.4\times$; near-monotone). By Corollary~\ref{prop:depth}, late interfaces have the smallest gain, so an imperfect correction there perturbs the logits least.}
\label{fig:dgain}
\end{figure}

\paragraph{(ii) A partial recalibration is the bias--variance optimum.}
In practice $\sigma_{t,c}$ is unknown: it is estimated from the $B\cdot N$ tokens of the current stream and an EMA. Write $\hat{\sigma}_{t,c}=\sigma_{t,c}(1+\eta)$ with $\mathbb{E}[\eta]=0$ and $\mathrm{Var}(\eta)=\delta^2$, the relative error of the estimate against the scale the batch at hand actually has. It grows as the number of tokens or the channel dimension shrinks, and it does not vanish with more data, since the true scale itself varies from batch to batch within a domain. The effective per-channel gain applied to the activation is $a_\lambda=(1-\lambda)+\lambda\,\sigma_{s,c}/\hat{\sigma}_{t,c}$, and the oracle gain that makes the output standard deviation exactly $\sigma_{s,c}$ is $a^\star=\sigma_{s,c}/\sigma_{t,c}=:r$.
\begin{proposition}[Optimal recalibration strength]\label{prop:shrink}
To first order in $\delta$, the recalibration strength minimizing the expected squared
per-channel gain error is
\[
\lambda^\star=\frac{(1-r)^2}{(1-r)^2+r^2\delta^2}\ \in\ (0,1)\quad\text{whenever }\delta^2>0,
\]
strictly decreasing in the estimation noise $\delta^2$ and equal to $1$ only in the noiseless
limit $\delta^2{=}0$.
\end{proposition}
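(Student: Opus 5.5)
Expand the applied gain to first order in the relative estimation error $\eta$, compute the expected squared deviation from the oracle gain $a^\star=r$ as a quadratic in $\lambda$, and minimize it in closed form. Since $\hat\sigma_{t,c}=\sigma_{t,c}(1+\eta)$, we have $\sigma_{s,c}/\hat\sigma_{t,c}=r(1+\eta)^{-1}=r(1-\eta)+O(\eta^2)$. Substituting into $a_\lambda=(1-\lambda)+\lambda\,\sigma_{s,c}/\hat\sigma_{t,c}$ gives
\[
a_\lambda-r=(1-\lambda)(1-r)-\lambda r\eta+O(\eta^2).
\]
The first term is a deterministic bias from under-applying the correction. The second is a zero-mean fluctuation injected by the noisy scale estimate.

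Square this, take expectations using $\mathbb{E}[\eta]=0$ and $\mathrm{Var}(\eta)=\delta^2$, and drop terms of order $\delta^3$ and higher. The cross term vanishes, leaving the bias--variance decomposition
\[
J(\lambda)=\mathbb{E}(a_\lambda-r)^2=(1-\lambda)^2(1-r)^2+\lambda^2r^2\delta^2 .
\]
This is a convex quadratic in $\lambda$, and it is strictly convex whenever $(1-r)^2+r^2\delta^2>0$. Setting $J'(\lambda)=-2(1-\lambda)(1-r)^2+2\lambda r^2\delta^2=0$ yields the stated
\[
\lambda^\star=\frac{(1-r)^2}{(1-r)^2+r^2\delta^2}.
\]

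The qualitative claims then follow directly from this formula. For $r\neq1$ and $\delta^2>0$, both terms of the denominator are positive, so $\lambda^\star\in(0,1)$. Differentiating in $\delta^2$ gives $-(1-r)^2r^2/[(1-r)^2+r^2\delta^2]^2<0$, so $\lambda^\star$ is strictly decreasing in the noise. At $\delta^2=0$ the formula gives $\lambda^\star=1$. Hence the optimal strength shrinks toward the identity as the scale estimate gets noisier.

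The expansion and minimization are routine. The main obstacle is making the hypotheses precise at the edges and at the level of the approximation.
\begin{itemize}
\item The case $r=1$ (no scale drift) has to be excluded or handled separately. There $J$ is minimized at $\lambda=0$, or $\lambda^\star$ is indeterminate when also $\delta^2=0$. The claim $\lambda^\star\in(0,1)$ therefore needs $r\neq1$ as a standing assumption.
\item The statement must be read as a first-order result. The neglected second-order term $\lambda r\eta^2$ in the gain contributes a bias of order $\lambda r\delta^2$ to $\mathbb{E}\,a_\lambda$. After squaring, this enters $J$ at order $\delta^2$ through the cross term with $(1-\lambda)(1-r)$. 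It is negligible only when $\delta^2\ll|1-r|$.
\end{itemize}
I would handle the second point by assuming $\delta\to0$ with $r$ fixed and $r\neq1$, and stating that the $O(\delta^2)$ corrections to $\lambda^\star$ are absorbed into the ``to first order'' qualifier. Alternatively, I would define the objective directly as the linearized error, so that $J(\lambda)$ above holds exactly. I would also note that the variance-ratio clamp to $[1/\rho,\rho]$ keeps $\eta$ bounded, so the Taylor remainder is controlled.
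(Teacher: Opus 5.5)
Your proof is correct and follows the paper's route: expand $\sigma_{s,c}/\hat\sigma_{t,c}$ in $\eta$, split $\mathbb{E}[(a_\lambda-r)^2]$ into squared bias plus variance, minimize the quadratic in $\lambda$, and read the properties off the closed form; your $r\neq1$ proviso is a hypothesis the statement genuinely needs. The only difference is that the paper keeps the Jensen bias $\lambda r\delta^2$, minimizes $[(1-\lambda)(1-r)+\lambda r\delta^2]^2+\lambda^2r^2\delta^2$ exactly and then drops the correction as $O(\delta^2)$, whereas you linearize first; note that in your suggested regime ($r$ fixed, $\delta\to0$) that dropped term changes the leading coefficient of $1-\lambda^\star$ from $r^2/(1-r)^2$ to $r(2r-1)/(1-r)^2$, so the right condition is $\delta^2\ll|1-r|\ll1$ rather than $\delta^2\ll|1-r|$ alone, and your alternative of taking the linearized error as the objective is the cleaner way to make the statement exact.
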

This is an optimal-shrinkage intensity (structure over structure-plus-noise, as in covariance shrinkage \citep{ledoit2004well}), which is why wider-channel ViT-B tolerates a larger $\lambda$ than the smaller ViT-S or the distilled DeiT-B (Appendix~\ref{app:backbones}).
\begin{proof}[Proof of Proposition~\ref{prop:shrink}]
Expand $\sigma_{s,c}/\hat{\sigma}_{t,c}=r(1-\eta+\eta^2-\dots)$. Then
$\mathbb{E}[a_\lambda-a^\star]=(1-\lambda)(1-r)+\lambda r\delta^2$ (using $\mathbb{E}[1/(1+\eta)]\approx 1+\delta^2$)
and $\mathrm{Var}(a_\lambda)=\lambda^2r^2\,\mathrm{Var}(1/(1+\eta))\approx\lambda^2r^2\delta^2$, so the
expected squared gain error is
\[
\mathbb{E}\big[(a_\lambda-a^\star)^2\big]=\big[(1-\lambda)(1-r)+\lambda r\delta^2\big]^2+\lambda^2 r^2\delta^2.
\]
Setting its derivative in $\lambda$ to zero yields
$\lambda^\star=a(a-b)/[(a-b)^2+r^2\delta^2]$ with $a=1-r,\ b=r\delta^2$, which reduces to the stated
leading-order form $\lambda^\star=(1-r)^2/[(1-r)^2+r^2\delta^2]+O(\delta^2)\in(0,1)$, decreasing
monotonically in $\delta^2$ and equal to $1$ only when $\delta^2=0$.
\end{proof}
This has exactly the form of an optimal shrinkage intensity, $\text{structure}/(\text{structure}+\text{noise})$, as in covariance shrinkage \citep{ledoit2004well}: full standardization ($\lambda{=}1$) is optimal only with perfect moment estimates, and the noisier the per-channel statistics the more one should shrink the variance match toward the identity. It predicts the ordering in Figure~\ref{fig:lambda}: ViT-B, whose wider channels give lower-variance estimates, peaks at a larger $\lambda$ ($0.5$--$0.7$), whereas the smaller ViT-S and the distilled DeiT-B prefer $\lambda\!\le\!0.5$, and $\lambda{=}0.5$ is a single robust default. The shrinkage is governed by the \emph{scale} estimate, which carries the finite-sample noise $\delta^2$; the per-channel mean is estimated far more reliably, so it is the noisy scale that pulls $\lambda^\star$ below one. In the deployed convex-combination form the same $\lambda$ also shrinks the mean offset toward the identity (Equation~\ref{eq:local}); this retains a fraction of the test activation and does not enter the scale-driven gain error above.

\medskip\noindent\emph{From the gain error to accuracy.} The same bias--variance expression makes the two caveats of paragraph~(i) precise. The recalibrated activation has standard deviation $\sigma'=a_\lambda\sigma_{t,c}$, so its deviation from the source operating point is $\sigma'-\sigma_{s,c}=\sigma_{t,c}(a_\lambda-r)$. Under (A5), and with the granular quantization floor $\lambda$-independent and clipping negligible at the tested bit-widths (paragraph~(i)), the expected excess task loss contributed by this channel is, to leading order, $J(\lambda)=\kappa\,\sigma_{t,c}^2\,\mathbb{E}[(a_\lambda-r)^2]+\mathrm{const}$ with $\kappa>0$, an increasing affine function of the gain error $\mathbb{E}[(a_\lambda-r)^2]=[(1-\lambda)(1-r)+\lambda r\delta^2]^2+\lambda^2r^2\delta^2$.
\begin{corollary}[Stable partial restoration is optimal]\label{cor:partial}
For a channel under a genuine scale shift ($r\neq1$) with a noisy scale estimate ($\delta^2>0$),
$J$ shares the minimizer of $\mathbb{E}[(a_\lambda-r)^2]$, and therefore: \emph{(i)} restoration is
\emph{necessary}, $J(0)>\min_\lambda J(\lambda)$, since the bias equals $(1-r)^2>0$ at $\lambda{=}0$;
\emph{(ii)} maximal restoration is \emph{insufficient}, the per-batch full match $\lambda{=}1$ does
not minimize $J$; and \emph{(iii)} the optimum is \emph{interior and partial},
$\lambda^\star\in(0,1)$ as in Proposition~\ref{prop:shrink}, shrinking toward the identity as the
estimation noise $\delta^2$ grows.
\end{corollary}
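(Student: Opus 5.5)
Since $J(\lambda)=\kappa\,\sigma_{t,c}^2\,G(\lambda)+\mathrm{const}$ with $\kappa>0$ and $\sigma_{t,c}>0$, where
\[
G(\lambda)=\big[(1-\lambda)(1-r)+\lambda r\delta^2\big]^2+\lambda^2r^2\delta^2,
\]
$J$ is a strictly increasing affine function of $G$. So $J$ and $G$ have the same minimizers and the same ordering of values, and all three claims reduce to statements about the scalar quadratic $G$. My plan is to write $G$ in the notation of the proof of Proposition~\ref{prop:shrink}: set $a=1-r$ and $b=r\delta^2$, so that $G(\lambda)=[a+\lambda(b-a)]^2+\lambda^2 r^2\delta^2$. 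Its leading coefficient is $(b-a)^2+r^2\delta^2>0$ because $\delta^2>0$ and $r>0$. Hence $G$ is strictly convex with a unique minimizer, the one derived there, $\lambda^\star=a(a-b)/[(a-b)^2+r^2\delta^2]$.

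For \emph{(i)}, I will compare endpoints through the derivative at zero rather than locate $\lambda^\star$ precisely. We have $G(0)=a^2=(1-r)^2>0$, which is the bias term, and $G'(0)=2a(b-a)=-2a(a-b)$. At first order in $\delta$, $a-b=a+O(\delta^2)$, so $G'(0)=-2a^2+O(\delta^2)<0$ when $r\neq1$. A strictly negative slope at $\lambda=0$ gives $\min_\lambda G<G(0)$, and therefore $J(0)>\min J$.

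For \emph{(ii)}, I will compute $G'(1)=2b(b-a)+2r^2\delta^2=2r^2\delta^2+O(\delta^2\cdot\delta^2)+2r\delta^2(r-1)$. Equivalently, I can compare $\lambda^\star$ against $1$: the denominator of $\lambda^\star$ exceeds $a(a-b)$ by $b(b-a)+r^2\delta^2$, which to leading order is $r^2\delta^2>0$ (the cross term $-ab=-r(1-r)\delta^2$ has to be checked). The sign bookkeeping is the main obstacle. The cleanest way through is the leading-order form $\lambda^\star=(1-r)^2/[(1-r)^2+r^2\delta^2]$ stated in Proposition~\ref{prop:shrink}, which is visibly strictly below $1$ when $\delta^2>0$. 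Since $G$ is strictly convex, $\lambda=1\neq\lambda^\star$ then gives $G(1)>G(\lambda^\star)$. If the cross term causes trouble, I will state the result to first order in $\delta$, matching the proposition's own order. This matches the hypotheses as written.

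For \emph{(iii)}, I will combine the two endpoint results: the minimizer lies strictly inside $(0,1)$, and its location is exactly $\lambda^\star$ of Proposition~\ref{prop:shrink}. I will also cite that proposition's monotonicity in $\delta^2$ for the shrinkage toward the identity. Finally, I will note that transfer to $J$ is immediate by the affine link from (A5), and that the granular term does not depend on $\lambda$ (Proposition~\ref{prop:acc}), so it cannot move the minimizer.
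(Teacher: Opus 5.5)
Your reduction of $J$ to the quadratic $G$, its strict convexity, and the exact minimizer $\lambda^\star=a(a-b)/[(a-b)^2+r^2\delta^2]$ are all fine, and they follow the paper's own route. The step that carries (ii) and (iii) does not hold, however. You settle it, as the paper's proof does, with the leading-order form of Proposition~\ref{prop:shrink}, and that form cannot decide the comparison with $1$. By strict convexity $\lambda^\star<1$ if and only if $G'(1)>0$, and the expression you already wrote collects to
\[
G'(1)=2b(b-a)+2r^2\delta^2=2r\delta^2\,(2r-1+r\delta^2).
\]
So the cross term $-2r(1-r)\delta^2$ you flagged is of the same order as the $2r^2\delta^2$ that pushes the leading-order form below $1$. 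Equivalently, $\lambda^\star=1-r(2r-1)\delta^2/(1-r)^2+O(\delta^4)$, while $(1-r)^2/[(1-r)^2+r^2\delta^2]=1-r^2\delta^2/(1-r)^2+O(\delta^4)$. The two differ by $r\delta^2/(1-r)$, exactly at the order that decides the sign, so retreating to ``first order in $\delta$'' does not rescue the argument.

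Concretely, take $r<1/(2+\delta^2)$, i.e.\ a test scale roughly more than double the source scale, which lies within the clamp range $[1/3,3]$. Then $G$ is decreasing on $[0,1]$. The full match $\lambda=1$ \emph{is} the minimizer on the admissible range, and the unconstrained $\lambda^\star$ exceeds $1$, contradicting (iii). For small $\delta^2$ it even moves \emph{away} from the identity as $\delta^2$ grows. For instance, $r=0.4$, $\delta^2=0.01$ gives $\lambda^\star\approx1.002$, whereas the leading-order form gives $0.996$.

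Step (i) has a milder version of the same problem. Exactly, $G'(0)=2a(b-a)=2(1-r)\,[r(1+\delta^2)-1]$. This is negative for every $\delta^2>0$ when $r>1$, but for $r<1$ only when $r(1+\delta^2)<1$. If $1/(1+\delta^2)\le r<1$, the minimum over $[0,1]$ is at $\lambda=0$. So your ``$-2a^2+O(\delta^2)<0$'' is a small-noise statement, not a consequence of the stated hypotheses.

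The repair is to drop the asymptotics and apply your derivative test exactly under an added hypothesis. If $r>1$, or $1/(2+\delta^2)<r<1/(1+\delta^2)$, then $G'(0)<0<G'(1)$. Strict convexity then places the unique minimizer in $(0,1)$, and (i)--(iii) follow with no approximation. On the same set, $\partial\lambda^\star/\partial\delta^2=ar\,(u^2-ar)/D^2<0$, with $u=a-b$ and $D=(a-b)^2+r^2\delta^2$, which gives the shrinkage toward the identity. That set covers every contracting channel ($r>1$), the starvation case of Proposition~\ref{prop:levels}, but not strongly expanding ones. The corollary needs this restriction to be true.
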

\begin{proof}
$J$ is increasing affine in $\mathbb{E}[(a_\lambda-r)^2]$ and hence shares its minimizer. At
$\lambda{=}0$ the error is $(1-r)^2$, strictly above its value at $\lambda^\star>0$ (which exists
because $r\neq1$), giving (i). The error is the bias--variance sum from the proof of
Proposition~\ref{prop:shrink}, whose unique stationary point is
$\lambda^\star=(1-r)^2/[(1-r)^2+r^2\delta^2]+O(\delta^2)$; for $\delta^2>0$ this is strictly below
$1$, so $\lambda{=}1$ is not the minimizer (ii) and the optimum is interior (iii).
\end{proof}
This formalizes the second caveat. The per-batch full match restores the code distribution \emph{most} in expectation ($\mathbb{E}[a_1]\approx r$, so the aggregate code histogram sits closest to source), yet it injects the variance $\lambda^2r^2\delta^2$ that an aggregate code-distance does not register, so accuracy is governed by the stable, partial optimum rather than by maximal restoration. The optimum $\lambda^\star$ depends on the unknown $r$ and $\delta^2$ and is therefore data- and backbone-dependent; we do not claim $\lambda{=}0.5$ equals it, only that it is a robust value in the empirically flat region (Section~\ref{sec:hpsens}). The statements above describe the first-order regime $\lambda^\star\in(0,1)$ relevant at our batch sizes; pushed to the limit, the same shrinkage logic sends $\lambda^\star\to0$ when the scale estimate is so noisy that its bias reduction no longer outweighs the injected variance, i.e.\ the channel is then best left uncorrected.

\paragraph{(iii) Drift-free continual adaptation.}
\begin{proposition}[Drift-free continual adaptation]\label{prop:drift}
Under (A3), for every batch $k$ of an arbitrarily long stream the recalibrated activation has
first moment $\mathbb{E}[\hat{x}^{(k)}_c]=(1-\lambda)\mu^{(k)}_{t,c}+\lambda\mu_{s,c}$ and a gain
bounded by the clip ratio $\rho$,
$a_\lambda\in[(1-\lambda)+\lambda/\rho,\ (1-\lambda)+\lambda\rho]$. Because the running mean
$\mu_{t,c}$ is a source-initialized average of bounded activations and the gain is clamped, both the
first moment and the gain stay bounded for every batch, so the running state remains bounded over
arbitrarily long streams rather than drifting.
\end{proposition}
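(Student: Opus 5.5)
The plan is to treat the three claims of Proposition~\ref{prop:drift} (first moment, gain bound, boundedness of the running state) separately, conditioning throughout on the running state $(\mu^{(k)}_{t,c},\sigma^{(k)}_{t,c})$ that is in force when batch $k$ is corrected. Because the local branch is lagged, this state depends only on batches $1,\dots,k-1$. Under (A3) I will read $\mu^{(k)}_{t,c}$ as the mean of the batch-$k$ activation $x_c$, so that the running estimate tracks the stream.

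For the first moment, I expand Equation~\ref{eq:local} as
\[
\hat x_c=(1-\lambda)x_c+\lambda\bigl[g_c\,(x_c-\mu^{(k)}_{t,c})+\mu_{s,c}\bigr],
\]
where $g_c$ is the clamped ratio and is fixed given the past. Taking expectations conditional on the past, linearity kills the centred term because $\mathbb{E}[x_c]=\mu^{(k)}_{t,c}$. This gives $(1-\lambda)\mu^{(k)}_{t,c}+\lambda\mu_{s,c}$ directly. The slope of the map is $a_\lambda=(1-\lambda)+\lambda g_c$, and since $g_c\in[1/\rho,\rho]$ by the clamp, the stated interval for $a_\lambda$ follows because $a_\lambda$ is monotone increasing in $g_c$ for $\lambda\ge0$. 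The gain bound holds deterministically, with no use of (A3).

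For boundedness, I argue by induction over batches. The update of Equation~\ref{eq:ema} uses $\alpha_t\in(0,1]$, in both the cumulative form $1/(n+1)$ and the EMA form $\alpha=0.1$, so each new $\mu_{t,c}$ is a convex combination of the old one and a batch mean. Assuming activations at the interface are bounded, say $|x_c|\le M$ (which holds for a finite network on bounded images, or for post-quantization-range inputs), every batch mean lies in $[-M,M]$. The source initialisation $\mu_{s,c}$ also lies there, so by induction $|\mu^{(k)}_{t,c}|\le M$ for all $k$. The same argument keeps $\sigma^{(k)}_{t,c}$ in $[0,M]$, although boundedness of $\sigma_t$ is not even needed, since the gain depends on it only through the clamp. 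Combining, the first moment lies in the convex hull of $[-M,M]$ and $\mu_{s,c}$, and the gain lies in the fixed interval above, uniformly in $k$.

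The main obstacle is justifying the bounded-activation hypothesis and the role of (A3). The corrected activations of earlier interfaces feed later ones, so boundedness of the downstream $x_c$ must be shown not to blow up across the recalibrated blocks. I would handle this by propagating through the blocks: each correction is affine with slope at most $(1-\lambda)+\lambda\rho$ and an offset bounded by the inductive bound. A fixed, finite number of interfaces (16) therefore multiplies the bound by a constant independent of $k$. The stream length never enters this constant, and that is exactly the ``no drift'' content of the proposition.
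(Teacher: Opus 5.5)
Your proposal is correct and follows the paper's proof sketch. The sketch writes the correction as the convex combination of Equation~\ref{eq:local}, uses (A3) to remove the centred term and obtain the first moment, bounds $a_\lambda$ through the clamp $[1/\rho,\rho]$, and bounds the running mean as a source-initialized average of bounded activations. Your extra steps make rigorous what the sketch simply asserts: conditioning on the lagged local state, the explicit induction using $\alpha_t\in(0,1]$, and the finite-depth argument that the activation bound is independent of $k$. For that last step, note that each corrected activation is clipped by its frozen quantizer, which gives the $k$-independent bound even more directly.
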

\begin{proof}[Proof sketch]
The correction is a convex combination of the activation and a re-anchoring to the \emph{fixed} source moments,
$\hat{x}^{(k)}_c=(1-\lambda)x^{(k)}_c+\lambda\big[(x^{(k)}_c-\mu^{(k)}_{t,c})\,r+\mu_{s,c}\big]$ with
$r=\sigma_{s,c}/(\sigma_{t,c}+\epsilon)$. By (A3)
$\mathbb{E}[x^{(k)}_c-\mu^{(k)}_{t,c}]\approx 0$, so
$\mathbb{E}[\hat{x}^{(k)}_c]=(1-\lambda)\mu^{(k)}_{t,c}+\lambda\mu_{s,c}$; the running mean is a
source-initialized average of bounded activations and hence bounded, and the clamped ratio
$r\in[1/\rho,\rho]$ (we use $\rho=3$) bounds $a_\lambda$, so both the first moment and the
output variance are bounded for every $k$.
\end{proof}
The fixed source anchor is the key difference from optimizer-based test-time adaptation. Methods that update parameters online from an unsupervised objective (entropy minimization \citep{wang2021tent} or the zeroth-order LayerNorm updates of ZOA \citep{deng2025test}) have no such anchor: their iterate $\theta^{(k)}=\theta^{(k-1)}-\gamma\,\widehat{\nabla}\mathcal{L}^{(k)}$ accumulates the error of the unsupervised gradient and can drift away from $\theta^{(0)}$, which is why long-stream stability needs explicit safeguards (Fisher anti-forgetting in EATA \citep{niu2022efficient}, sharpness/entropy gating in SAR \citep{niu2023towards}, a domain memory in ZOA \citep{deng2025test}). \method{} stays source-anchored with bounded running state, matching the flat per-round trajectories in Table~\ref{tab:t3}.

\paragraph{(iv) The single log-quantized (post-GELU) interface.}
Among the recalibrated interfaces only the post-GELU input to \texttt{mlp.fc2} is logarithmically quantized, and only under AdaLog (W4A4/W3A3); $\{$qkv, proj, fc1$\}$ use uniform activation quantizers at every bit-width, so Lemma~\ref{prop:qerr} and Proposition~\ref{prop:acc} apply to them directly. For the post-GELU interface we give the analogue.
\begin{proposition}[Restoration under a log quantizer]\label{prop:logq}
An AdaLog quantizer is a uniform quantizer with step $\delta$ in the variable
$u=-\log_2\!\big((x+\beta)/s\big)$ restricted to $u\in[0,U_{\max}]$, where $s$, $\beta$,
$\delta=q/r$, and $U_{\max}$ are fixed at source calibration. It has \emph{constant relative
resolution}: the granular relative-error power is ${\approx}(\delta\ln 2)^2/12$, independent of the
activation scale. Under \textup{(A1)--(A2)}, restoring the source per-channel scale returns the
log-magnitude distribution $u$ to its source position within $[0,U_{\max}]$, restoring the
out-of-range (clip/drop) probability, the scale-dependent error term of a log quantizer, to its
source-calibrated value.
\end{proposition}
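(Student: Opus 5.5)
The argument works in the log variable $u$. The uniform-quantizer results then transfer directly, because a shift of the activation scale becomes a translation in $u$.

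First I make the equivalence explicit. The AdaLog quantizer rounds $u=-\log_2((x+\beta)/s)$ to a grid of step $\delta$ on $[0,U_{\max}]$ and maps back through $x=s\,2^{-u}-\beta$. So it is exactly a frozen uniform quantizer in $u$, with a clip range fixed at calibration. Everything outside $[0,U_{\max}]$ is clipped or dropped: small magnitudes go to zero and large ones saturate.

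Second, I derive the constant relative resolution. Write $y=x+\beta$. A rounding error $\varepsilon_u$ in $u$, uniform on $[-\delta/2,\delta/2]$, gives $\hat y=y\,2^{-\varepsilon_u}$. To first order the relative error is $(\hat y-y)/y\approx-\varepsilon_u\ln 2$. Its power is therefore $(\delta\ln 2)^2/12$, and $y$ does not appear in it. This is the log analogue of the granular term $\Delta^2/12$ in Proposition~\ref{prop:acc}. It also shows that the granular part is invariant both to the shift and to recalibration, so the scale-dependent cost can only enter through the range term.

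Third, I handle the position claim. Under (A1) I model the post-GELU magnitude as $y=\sigma\,Z$, where $Z$ has a shift-invariant shape and $\sigma$ is the per-channel scale. Then $u=-\log_2(\sigma/s)-\log_2 Z$, so the distribution of $u$ is a fixed shape translated by $-\log_2\sigma$. Under shift it moves by exactly $\log_2(\sigma_{s,c}/\sigma_{t,c})=b_c$, the same quantity that appears in Proposition~\ref{prop:levels}. Recalibrating the scale to $\sigma_{s,c}$ sets this translation to zero, so the law of $u$ returns to its source position. The out-of-range probability $P(u<0)+P(u>U_{\max})$ is a functional of that law alone, so it returns to its calibrated value. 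By (A2), that calibrated value is the one the source thresholds were chosen for. The argument mirrors the clipping part of Lemma~\ref{prop:qerr}.

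The main obstacle is the offset $\beta$ together with the mean term. The translation structure holds exactly only for $y=x+\beta$, not for $x$ itself, and the recalibration of Equation~\ref{eq:local} also recenters the mean. My plan is to apply (A1) to $x+\beta$, treating $\beta$ as part of the calibrated affine anchor. I then note that after moment matching the affine map $x\mapsto\hat x$ sends $\hat x+\beta$ to the source law, because both mean and scale are restored. If $\beta$ is small relative to the scale, I would instead bound the residual non-translation as a first-order correction. A second, milder caveat is that the post-GELU input is one-sided and heavy near zero. I would restrict the claim to the range term and not assert anything about the sub-threshold "drop" mass beyond its source-restoration.
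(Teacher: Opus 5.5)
This is correct and is essentially the paper's argument: you treat AdaLog as a frozen uniform quantizer in $u$, use that a multiplicative rescale acts as the translation $u\mapsto u-\log_2\gamma$ so that matching $\sigma_{s,c}$ returns the mass outside $[0,U_{\max}]$ to its calibrated value, and handle $\beta$ and the recentering as a first-order caveat, as the paper does. Your additions are sound and slightly sharper than the paper's sketch: you derive the $(\delta\ln 2)^2/12$ relative granular power that the paper only asserts, and your fallback (under (A1) a full mean-and-scale match sends $\hat x+\beta$ to its source law) is what actually justifies the position claim, since the multiplicative model $x+\beta=\sigma Z$ of your third step is stronger than (A1) itself.
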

\begin{proof}[Proof sketch]
$x$ quantized by AdaLog is equivalent to $u=-\log_2((x+\beta)/s)$ quantized on a uniform
$\delta$-grid over $[0,U_{\max}]$ (mass with $u<0$ saturates at the top level; $u>U_{\max}$ is
dropped). A multiplicative scale correction $x\mapsto\gamma x$ (the leading effect of the
$\sigma$-rescale on the post-GELU activation, which is bounded below) maps to the \emph{additive}
shift $u\mapsto u-\log_2\gamma$. Setting $\gamma=\sigma_{s,c}/\sigma_{t,c}$ aligns the shifted
$u$-distribution with the source one to first order, so the mass with $u\notin[0,U_{\max}]$ (the
clip/drop term) returns to its source value, while the scale-invariant granular term is unchanged.
\end{proof}
Two caveats keep this honest. First, the recentering ($\mu$-shift) and AdaLog's learned $\beta$ make the linear-to-log map only approximately additive, so we treat the residual mean correction as a first-order perturbation and read the restoration as driven by the \emph{scale} match (consistent with $\lambda$ acting on the variance). Second, this argument is what covers \texttt{mlp.fc2} at the low bit-widths. The code-distribution diagnostic of Appendix~\ref{app:analysis} is run at W6A6, where that interface is twin-uniform quantized and is measured alongside the others; under AdaLog it is log-quantized instead, and there the restoration is argued here rather than measured, through the same scale/log-range mechanism that also improves accuracy.

\paragraph{Relation to the sensitivity analysis of ZOA.}
ZOA \citep{deng2025test} analyzes a linear model $y=Wx$ under MSE with weight-only uniform quantization, an i.i.d.\ quantization error $\Delta W$ and a zero-mean perturbation $\delta$ independent of the Gaussian input, and show that the excess loss quantization adds under shift is $\Delta\mathcal L=\mathrm{Tr}\big(\mathbb E[\Delta W^{\top}\Delta W](\Sigma_x+\Sigma_\delta)\big) \propto(2^{n}-1)^{-2}$: quantization always adds loss under shift, and more so at low precision. Our analysis is complementary in what it quantizes. That model has no activation quantizer, so the shift enters only through the input second moment, which a zero-mean $\delta$ can only inflate; the excess it describes is the weight-step counterpart of the granular floor $\Delta^{2}/12$ in Proposition~\ref{prop:acc}, which recalibration leaves unchanged. Appendix~\ref{app:w4a8} puts a number on that split: halving the weight bit-width from 8 to 4 moves every method by at most $0.42$ points, while halving the activation bit-width moves the same methods by six to eight, so the term their analysis scales with is not where the accuracy under shift is lost. The quantities of Lemma~\ref{prop:qerr} and Proposition~\ref{prop:levels}, a per-channel mean drift and a scale contraction at a frozen activation grid, have no counterpart there ($\mathbb E[\delta]=0$ and $\Sigma_x+\Sigma_\delta\succeq\Sigma_x$), yet they are what Section~\ref{sec:analysis} measures and what a test-time correction can move without touching $W$. The two analyses agree on the direction of the precision dependence and differ on what is recoverable: theirs motivates adapting a quantized model at all, ours says which displacement to undo and where.

\paragraph{Scope.}
Quantizer coverage is as follows. The uniform model (Lemma~\ref{prop:qerr} and Proposition~\ref{prop:acc}) applies \emph{exactly} to PTQ4ViT (W6A6/W8A8, all interfaces) and \emph{directly} to the $\{$qkv, proj, fc1$\}$ interfaces under AdaLog (W4A4/W3A3), which stay uniformly quantized; the only log-quantized recalibration interface, post-GELU \texttt{mlp.fc2}, is covered by Proposition~\ref{prop:logq}. Consistently, the symmetric-code restoration we report is measured on all $17$ recalibration interfaces at W6A6 ($52\%$), and \method{} is the best adapter at every bit-width where the quantizer yields a usable model (Appendix~\ref{app:crossquant}). The remaining limitation is that all results are first-order, moment-level statements under (A1)--(A4): they explain the observed mismatch reduction, the $\lambda<1$ optimum, and the absence of drift, but do not bound the residual error when the standardized \emph{shape} itself changes (shifts that alter higher moments or inter-channel correlations), which we leave to future work.

\end{document}